\newtheorem{theorem}{\bf Theorem}[section]
\newtheorem{lemma}[theorem]{\bf Lemma}
\newtheorem{prop}[theorem]{Proposition}
\newcommand{\actset}{\mathcal{A}_{t} }
\newcommand{\acttset}{\mathcal{A} }
\newcommand{\dtl}[1]{{\color{magenta}Detail: #1}}
\newcommand{\dtl}[1]{}
\newcommand{\com}[1]{{\color{red}Comment: #1}}
\newcommand{\com}[1]{}
\newcommand{\ind}[1]{{\mathds{1}\left\{ #1\right\}}}
\title{Cost-aware LLM-based Online Dataset Annotation
%%%% Cite as
%%%% Update your official citation here when published 
% \thanks{\textit{\underline{Citation}}: 
% \textbf{Eray Can Elumar, Cem Tekin, and Osman Yagan. Cost-aware LLM-based online dataset annotation. In The Thirty-ninth
% Annual Conference on Neural Information Processing Systems, 2025. URL https://openreview.net/forum?
% id=3AdTRYA2uJ.
% }} 

}
\author{
  Eray Can Elumar \\
  Dept. of Electrical and Computer Engineering\\
  Carnegie Mellon University \\
  Pittsburgh, PA\\
 \texttt{eelumar@andrew.cmu.edu} \\
  %% examples of more authors
   \And
  Cem Tekin \\
  Dept. of Electrical Engineering\\
  Bilkent University \\
  Ankara, Turkey\\
  \texttt{cemtekin@ee.bilkent.edu.tr} \\
  \And 
  Osman Ya\u gan \\
  Dept. of Electrical and Computer Engineering\\
  Carnegie Mellon University \\
  Pittsburgh, PA\\
  \texttt{oyagan@andrew.cmu.edu} \\
  %% examples of more authors  
  % Eray Can Elumar, Author2 \\
  % Electrical and Computer Engineering \\
  % Carnegie Mellon University \\
  % Pittsburgh, PA\\
  % \texttt{\{eelumar, Author2\}email@email} \\
}
\begin{document}
\maketitle

\begingroup
\renewcommand\thefootnote{}
\footnotetext{39th Conference on Neural Information Processing Systems (NeurIPS 2025).}
\endgroup

\begin{abstract}

Recent advances in large language models (LLMs) have enabled automated dataset labeling with minimal human supervision. While majority voting across multiple LLMs can improve label reliability by mitigating individual model biases, it incurs high computational costs due to repeated querying. In this work, we propose a novel online framework, Cost-aware Majority Voting (CaMVo), for efficient and accurate LLM-based dataset annotation. CaMVo adaptively selects a subset of LLMs for each data instance based on contextual embeddings, balancing confidence and cost without requiring pre-training or ground-truth labels. Leveraging a LinUCB-based selection mechanism and a Bayesian estimator over confidence scores, CaMVo estimates a lower bound on labeling accuracy for each LLM and aggregates responses through weighted majority voting. Our empirical evaluation on the MMLU and IMDB Movie Review datasets demonstrates that CaMVo achieves comparable or superior accuracy to full majority voting while significantly reducing labeling costs. This establishes CaMVo as a practical and robust solution for cost-efficient annotation in dynamic labeling environments.
\end{abstract}

% keywords can be removed
\keywords{large language models (LLMs) \and dataset annotation \and optimization \and multi-armed bandits}

\section{Introduction}

The rapid proliferation of data across domains has created an urgent need for accurate, large‐scale annotation pipelines. While human experts and crowd workers have been the gold standard for dataset labeling, manual annotation is notoriously slow, expensive, and prone to inter‐annotator inconsistency \cite{petrovic2020crowdsourcing}. As machine learning models become increasingly sophisticated, their demand for high‐quality, richly labeled datasets only intensifies, exacerbating this bottleneck.

Recent advances in large language models (LLMs) offer a promising remedy: by leveraging transformer‐based architectures such as GPT, it is now possible to automate much of the labeling workload. LLMs excel at natural‐language understanding, reasoning, and contextual inference, enabling rapid generation of annotations with minimal human effort \cite{naveed2023comprehensive}. 

However, relying on a single LLM introduces issues of biases inherited from its training data and stochastic variability across repeated queries, undermining reliability and reproducibility \cite{errica2024did, li2024showing}. A common strategy to bolster label quality is ensembling: querying multiple LLMs, or multiple samples from the same model, and aggregating their outputs via majority voting. This reduces hallucinations and offsets the bias of individual models but substantially increases cost, as each additional model increases latency and compute expenditure \cite{yang2023one}. In practice, querying every available LLM for each instance is often wasteful and
unnecessary.

In this paper, we address this trade‐off by \emph{adaptively selecting a subset} of LLMs for majority voting on each input, achieving comparable accuracy to full‐ensemble voting while dramatically cutting cost. Unlike prior work on LLM weight optimization or query routing—which presumes access to ground‐truth labels or a pre‐trained routing model \cite{chen2024more, nguyen2024metallm, ding2024hybrid}—our method operates \emph{online}, without any held‐out training set or ground truth.  

Our contributions are as follows:
\begin{enumerate}
  \item \textbf{Online formulation.}  To the best of our knowledge, this is the first work on LLM-based dataset labeling  in which both vote weights and the queried subset of LLMs are
adapted in real time, i.e. without relying on a pre-trained model or a dedicated training set.
  \item \textbf{Cost‐aware Majority Voting (CaMVo).} We propose CaMVo, an algorithm that combines a LinUCB‐style contextual bandit with a Bayesian Beta‐mixture confidence estimator. For each candidate LLM, CaMVo computes a lower confidence bound on its correctness probability given the input’s embedding, then selects the smallest‐cost subset whose aggregated confidence exceeds a user‐specified threshold $\delta$.
  \item \textbf{Empirical validation.} Through experiments on the MMLU benchmark and the IMDB Movie Review dataset, we show that CaMVo matches or exceeds the accuracy of full majority voting (majority voting with all available LLMs) while significantly reducing the cost: on MMLU, CaMVo achieves higher accuracy with around $40\%$ lower cost; on IMDB, it attains only $0.17\%$ drop in accuracy while halving query expenditure.
\end{enumerate}

\subsection{Related Work} \label{sec:relatedwork} 

\paragraph{Ensembling and Majority Voting with LLMs.}  
Aggregating outputs from multiple LLMs (or repeated queries to a single LLM) via majority voting has become a popular strategy to boost annotation reliability.  \citet{chen2024more} analyze the effect of repeated queries to a single model and observe a non‐monotonic accuracy curve: performance improves initially but degrades beyond an optimal number of calls due to task heterogeneity. While additional LM calls enhance accuracy on easier queries, they may introduce noise or inconsistency that degrades performance on more challenging ones. To address this, the authors propose a scaling model that predicts the optimal number of LM calls required to maximize aggregate performance.  \citet{trad2024ensemble} compare re‐querying and multi‐model ensembles, showing that ensemble strategies are most effective when individual models or prompts exhibit comparable performance levels, and ensemble gains may diminish when individual model accuracies diverge.

 \citet{yang2023one} propose a weighted majority‐voting ensemble for medical QA, combining dynamic weight adjustment with clustering‐based model selection. However, their approach relies on offline training data and focuses solely on improving accuracy, without accounting for the cost of querying. In contrast, our approach selects a cost‐effective subset of heterogeneous LLMs online, without any pre‐training.

\paragraph{LLM Query Routing.}  
Query routing addresses the problem of selecting a single LLM per query to optimize cost or latency under performance constraints. 
\citet{nguyen2024metallm} cast LLM selection as a contextual bandit problem, training offline on labeled data to learn a routing policy that maps query embeddings to a single optimal LLM under a total budget constraint of $b$. 
\citet{ding2024hybrid} train a router to distinguish “easy” versus “hard” queries, sending easy tasks to local LLMs and hard ones to cloud APIs. 
Unlike these methods, our method operates in a fully online setting without access to labeled training data, updating the model dynamically during the labeling process. Moreover, rather than routing to a single LLM, we select a cost-efficient subset for majority voting.

\paragraph{Confidence Estimation in LLM Outputs.}  
Estimating model confidence can guide automatic label selection. \citet{kadavath2022language} explore how a language model’s own uncertainty estimates can serve as predictors of answer correctness, and find that the cumulative log‐probability the model assigns to its generated token sequence correlates strongly with factual accuracy across diverse benchmarks. \citet{li2024showing} generate code multiple times and use output similarity as a proxy for confidence, choosing the most consistent result. 
While these works focus on self‐consistency of a single model, we estimate a probabilistic lower bound on each LLM’s correctness via a Bayesian Beta‐mixture model, incorporating both past performance and context.

\paragraph{Weighted Majority Voting.}  
Beyond simple voting, weighted schemes assign each annotator or model a reliability score such as the accuracy of the annotator, or the label confidence reported by the annotator. One notable approach is the GLAD model by \citet{whitehill2009whose}, which formulates weighted majority voting as a probabilistic inference problem over annotator expertise and task difficulty. GLAD jointly estimates per‐annotator reliability and per‐item ambiguity via a generative model, using an EM algorithm to infer the latent variables. \citet{li2014error} introduce Iterative Weighted Majority Voting (IWMV) to aggregate noisy crowd labels by iteratively estimating worker reliability, and show it approaches the oracle  Maximum A Posteriori (MAP) solution. 

\paragraph{Crowdsourcing.}   \citet{rangi2018multi} utilize the bandits‐with‐knapsacks framework to dynamically estimate worker accuracy and allocate tasks in real time to maximize overall labeling quality within a budget. 
Another influential model is by \citet{raykar2010learning}, which jointly infers true labels and annotator reliabilities by modeling each worker’s confusion matrix. Through an expectation–maximization procedure, the method down‐weights inconsistent annotators and yields more accurate aggregated labels without prior knowledge of worker quality.
Our method parallels this online estimation but differs in that it leverages contextual embeddings and targets LLM ensembles rather than human annotators.

The comparison of our work with prior work is summarized in Table~\ref{tabl:comp}

\begin{table}[ht]
  \centering
  \begin{tabular}{lcccc}
    \toprule
    \textbf{Method} & \textbf{Ensemble Type} & \textbf{Pretrained} & \textbf{Online} & \textbf{Contextual} \\
    \midrule
    Ours (CaMVo)                         & Subset voting      & No   & Yes  & Yes \\
    \cite{yang2023one}            & Weighted voting    & Yes  & No   & Yes \\
    \cite{nguyen2024metallm}      & Single‐model routing & Yes & No   & Yes \\
    \cite{ding2024hybrid}         & Single‐model routing & Yes & No   & Yes \\
    \cite{chen2024more}           & Re‐querying        & No   & No   & No  \\
    \cite{li2024showing}          & Re‐querying        & No   & No   & Yes \\
    \cite{li2014error}            & Crowd aggregation  & No   & Yes  & No  \\
    \cite{rangi2018multi}         & Crowd assignment   & No   & Yes  & No  \\
\cite{raykar2010learning}         & Crowd aggregation   & No   & Yes  & No  \\
    \bottomrule
  \end{tabular}
  \caption{Comparison of our work with prior ensemble and routing approaches.}
  \label{tabl:comp}
\end{table}

\vspace{-3mm}
\section{Problem Statement} \label{prob_state}

In this section, we formally define the problem setting, introduce the baseline algorithm, and provide some results that will be used by our proposed algorithm, which will be introduced in \S  \ref{sec:alg}.

Consider an unlabeled dataset \( \mathcal{D} = \{x_1, x_2, \ldots, x_T\} \), where each \( x_t \) denotes a data instance (e.g., a text sample). Let there be a set $[K]$ of \( K \) distinct large language models (LLMs), where each LLM \( l_i \) is associated with a known cost per token \( \rho_i \) and can be represented as a function \( l_i: \mathcal{Q} \rightarrow \mathcal{R} \), mapping a query \( q \in \mathcal{Q} \) to a response \( r \in \mathcal{R} \). We denote the total number of possible labels for the dataset $\mathcal{D} $ as \( M \). The objective in this setting is to assign a predicted label \( \hat{y}_t \in [M] \) to each data instance \( x_t \) by querying a subset of the available LLMs and aggregating their outputs. Labeling is performed sequentially, where each data instance \( x_t \) is processed in round \( t \). In each round, LLMs are queried independently of other LLMs and without memory of prior interactions (i.e., no context is preserved between rounds). Furthermore, all queries are made in a zero-shot setting, meaning that no task-specific fine-tuning or additional training data is used.

To serve as a baseline, we introduce the following weighted majority voting scheme. In this scheme, the predicted label \( \hat{y}_t \) for instance \( x_t \) is determined by aggregating the votes of all \( K \) LLMs using:
\begin{align}
   \hat{y}_t = \arg\max_{m \in [M]} \sum_{i=1}^{K} \omega_{\text{def},i}(t) \cdot \ind{y_{i,t} = m},
\end{align}
where \( \ind{\cdot} \) is the indicator function that returns 1 if the condition is true and 0 otherwise, and \( y_{i,t} \) denotes the label outputted by LLM \( l_i \) for instance \( x_t \). Since the true label is not available in our setting, we use the empirical accuracy of model \( l_i \) relative to the predicted labels as the voting weight. Hence, $\omega_{\text{def},i}(t) = (\sum_{s=1}^{t-1} \mathbb{I}(y_{i,s} = \hat{y}_s))/N_{i,t},$
% \begin{align}
    % \omega_{\text{def},i}(t) = \frac{\sum_{s=1}^{t-1} \mathbb{I}(y_{i,s} = \hat{y}_s)}{N_{i,t}},
% \end{align}
where \( N_{i,t} \) denotes the number of times LLM \( l_i \) has been queried up to round \( t\), and \( \hat{y}_s \) is the predicted label for round $s$. The pseudocode for this scheme, which we refer to as the \textit{baseline method}, is provided in Algorithm~\ref{alg:baseline} in Appendix \ref{append:baseline}.

% \com{$\hat{y}_s$ can be confused with $\hat{y}_t$. The difference between the predicted label and the assigned label is not clear. You can say that in the baseline algorithm the assigned label is the predicted label.}

The goal is to label the dataset \( \mathcal{D} \) in a cost-efficient manner by dynamically selecting a subset of LLMs for each data instance \( x_t \). To facilitate this selection, we assume access to a model \( \text{Emb}(\cdot) \) that generates a \( d \)-dimensional embedding \( \mathbf{e}_t = \text{Emb}(x_t) \) for \( x_t \) in round \( t \). This embedding serves as a representation of the instance and plays an analogous role to the {\em context} in a contextual bandit framework. Using this embedding, our approach, which will be introduced in \S ~\ref{sec:alg}, estimates a lower bound on the probability that a given LLM \( l_i \) will produce the correct label for \( x_t \), and utilizes this bound to determine the subset of LLMs. The details for estimating this lower bound is given in \S ~\ref{sec:alg}.

Naturally, selecting only a subset of LLMs rather than querying all available models may result in reduced labeling accuracy. To manage this trade-off, we introduce a user-defined parameter \( \delta \in [0,1] \) that specifies the desired minimum relative confidence of the selected subset compared to the full majority vote using all \( K \) LLMs. Let \( L_{i,t} \) denote the lower confidence bound on the estimated probability that LLM \( l_i \) correctly labels instance \( x_t \) at round \( t \). Our algorithm identifies the cost-minimizing subset of LLMs whose aggregated confidence, relative to that of the baseline method, satisfies the accuracy constraint imposed by \( \delta \). 
% This subset selection process can be interpreted similarly to the {\em super arm} selection in combinatorial bandit settings, where the action space consists of subsets of base arms. 

We will leverage the following result to estimate the confidence of the majority vote label based on the \( L_{i,t} \) and \( \omega_{i,t} \) values when majority voting is performed over a subset \( \acttset \) of LLMs.

\begin{lemma} \label{lemma:subset_conf}
Let \( \omega_i \) denote the weight and \( L_i \) the lower confidence bound on the correctness of the output from LLM \( l_i \). Suppose the outputs of LLMs are conditionally independent given the data instance. Then, for a subset of LLMs \( \mathcal{A} \subseteq [K] \), the lower bound on the probability that majority voting over the subset yields the correct label is given by
\begin{align}
    \delta_{\mathcal{A}}(\boldsymbol L, \boldsymbol \omega) = \sum_{\substack{S \subseteq \mathcal{A} \\ \sum_{r \in S} \omega_r > \frac{W_{\mathcal{A}}}{2}}} \prod_{i \in S} L_i \prod_{j \in \mathcal{A} \setminus S} (1 - L_j),
\end{align}
where \( W_{\mathcal{A}} = \sum_{i \in \mathcal{A}} \omega_i \). Proof of this result is provided in Appendix~\ref{proof:subset_conf}. 
\end{lemma}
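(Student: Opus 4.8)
The plan is to reduce the weighted majority vote to a sum of independent Bernoulli indicators and then invoke a coordinatewise monotonicity argument to replace the true correctness probabilities by their lower bounds $L_i$. First I would introduce, for each LLM $l_i$ with $i \in \mathcal{A}$, the indicator $X_i = \ind{l_i \text{ labels } x_t \text{ correctly}}$. By the conditional-independence assumption the $X_i$ are mutually independent given the instance, and I set $p_i := \prob{X_i = 1}$, which satisfies $p_i \ge L_i$ by the definition of the lower confidence bound. The total weight assigned to the \emph{correct} label by the subset is $\sum_{i \in \mathcal{A}} \omega_i X_i$, while every incorrect label can receive at most the complementary weight $W_{\mathcal{A}} - \sum_{i \in \mathcal{A}} \omega_i X_i$. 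Hence a \emph{sufficient} condition for the correct label to be the strict argmax is the event $\event := \{\sum_{i \in \mathcal{A}} \omega_i X_i > W_{\mathcal{A}}/2\}$: when the correct label holds more than half the total weight, no competing label can match it. This yields $\prob{\text{correct}} \ge \prob{\event}$.

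Next I would evaluate $\prob{\event}$ explicitly by conditioning on exactly which LLMs are correct. The set of correct models equals a given $S \subseteq \mathcal{A}$ with probability $\prod_{i \in S} p_i \prod_{j \in \mathcal{A} \setminus S} (1 - p_j)$ by independence, and $\event$ occurs precisely when $\sum_{r \in S} \omega_r > W_{\mathcal{A}}/2$. Summing over the qualifying subsets gives $\prob{\event} = \sum_{S : \sum_{r \in S} \omega_r > W_{\mathcal{A}}/2} \prod_{i \in S} p_i \prod_{j \in \mathcal{A}\setminus S}(1 - p_j)$, which is exactly $\delta_{\mathcal{A}}(\boldsymbol p, \boldsymbol \omega)$, the target expression with $p_i$ in place of $L_i$.

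The main obstacle, and the step that closes the proof, is justifying the replacement of each $p_i$ by the smaller $L_i$; this requires showing that $\delta_{\mathcal{A}}(\cdot, \boldsymbol \omega)$ is nondecreasing in every coordinate $p_i$ (using $\omega_i \ge 0$, which holds since the weights are empirical accuracies). I would argue this coordinatewise: with the other variables fixed, $\prob{\event}$ is affine in $p_i$ via the decomposition $\prob{\event} = p_i\,\prob{\event \mid X_i = 1} + (1 - p_i)\,\prob{\event \mid X_i = 0}$, so its slope in $p_i$ is $\prob{\event \mid X_i = 1} - \prob{\event \mid X_i = 0}$. Since flipping $X_i$ from $0$ to $1$ only increases the weighted sum $\sum_k \omega_k X_k$ by $\omega_i \ge 0$, the indicator $\ind{\event}$ is nondecreasing in $X_i$ for every fixed realization of the remaining indicators, whence $\prob{\event \mid X_i = 1} \ge \prob{\event \mid X_i = 0}$ and the slope is nonnegative. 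Applying this monotonicity in each coordinate together with $L_i \le p_i$ gives $\prob{\event} = \delta_{\mathcal{A}}(\boldsymbol p, \boldsymbol \omega) \ge \delta_{\mathcal{A}}(\boldsymbol L, \boldsymbol \omega)$, and chaining with $\prob{\text{correct}} \ge \prob{\event}$ completes the argument.
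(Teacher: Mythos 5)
Your proposal is correct, and its core decomposition --- conditioning on the set $S$ of correct LLMs and using conditional independence to write the probability of each configuration as $\prod_{i \in S} p_i \prod_{j \in \mathcal{A}\setminus S}(1-p_j)$ --- is the same one the paper uses. The substantive difference is that the paper's proof simply \emph{declares} the worst case to be the one where each LLM's correctness probability equals its lower bound $L_i$ and then evaluates the formula there, whereas you actually prove that this is the worst case: you observe that $\prob{\event}$ is affine in each $p_i$ with slope $\prob{\event \mid X_i=1} - \prob{\event \mid X_i=0} \ge 0$, because the event indicator is nondecreasing in $X_i$ when $\omega_i \ge 0$, and then push each $p_i$ down to $L_i$ coordinate by coordinate. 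This monotonicity step is exactly what the lemma's claim of a \emph{lower bound} rests on, and the paper leaves it implicit (its proof even contains the slightly muddled statement that $Z_i \sim \text{Bernoulli}(L_i)$ while $\E[Z_i] \ge L_i$). You also make explicit the observation that holding strictly more than half the total weight is a \emph{sufficient} condition for winning the multi-label argmax, which the paper states without comment. In short, your argument reaches the same formula by the same combinatorial route but closes the two small logical gaps in the paper's version; nothing in your proposal needs repair.
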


% Finally, to provide more control and to enhance the accuracy of the labeling process, we introduce another user-defined parameter \( k_{\min} \), which specifies the minimum number of LLMs that must be queried in each round. Imposing a higher value of \( k_{\min} \) is expected to improve label quality by ensuring that at least \( k_{\min} \) LLM predictions are considered for each data instance. 
For practical purposes and computational efficiency, this expression may be approximated as 
$
\delta_{\mathcal{A}}(\boldsymbol L, \boldsymbol \omega) \approx 1 - F_{\text{Beta}}\left(0.5;\, W_{L,\mathcal{A}},\, W_{\mathcal{A}} - W_{L,\mathcal{A}} \right),
$
where \(F_{\text{Beta}}(x; \alpha, \beta)\) is the CDF of a \(\text{Beta}(\alpha, \beta)\) distribution evaluated at point $x$, \(W_{L,\mathcal{A}} = \sum_{i \in \acttset} \omega_i \cdot L_i\), and \(W_{\mathcal{A}} = \sum_{i \in \acttset} \omega_i\).

Note that Lemma~\ref{lemma:subset_conf} can be extended to remove the conditional-independence assumption using Bahadur’s model, which characterizes the joint distribution of 
 binary variables using their marginal probabilities, second-order dependence terms known as Bahadur parameters, and higher-order interaction terms \citep{bahadur1961classification}. This framework allows the computation of majority vote confidence by summing the probabilities of all outcomes where the majority label is correct. However, the absence of a closed-form expression and the exponential growth in computational complexity with the number of variables limit its practical use. In practice, Monte Carlo–based simulation methods can approximate the majority-vote confidence by generating correlated LLM outputs; we describe and analyze CaMVo with one such approach in \S\ref{append:exp_correlation}.

Finally, we introduce a user‐specified parameter \(k_{\min}\), which enforces a floor on the number of LLMs queried per instance. By requiring at least \(k_{\min}\) votes in every round, this constraint further safeguards label quality—ensuring that no annotation is based on fewer than \(k_{\min}\) model predictions.

The proposed Cost-aware Majority Voting (CaMVo) algorithm, which incorporates these results and user-defined parameters, is presented in \S  \ref{sec:alg}.

% Lastly, to increase the accuracy of labeling, we assume we are given as input a parameter $k_{\min}$ that determines the minimum number of LLMs that need to be queried in a round. Having a higher $k_{\min}$ is expected to increase the label accuracy by potentially increasing the number of LLMs that need to be queried for each data instance. The  CaMVo algorithm that uses these results and input parameters is presented in \S  \ref{ch4:sec: CaMVo_algorithm}. 

\section{The  CaMVo Algorithm}
\label{sec:alg}

We propose a novel algorithm, {\em Cost-aware Majority Voting} (CaMVo), for efficient dataset labeling with large language models (LLMs).  CaMVo aims to select a cost-effective subset of LLMs for each input instance by leveraging contextual embeddings to estimate a lower confidence bound on the probability that each model will produce a correct label. These bounds are computed using a LinUCB-based framework \citep{li2010contextual}. Based on this information,  CaMVo identifies a subset of LLMs such that the confidence of their weighted majority vote exceeds a user-specified threshold $\delta$. If no such subset exists, the algorithm defaults to querying all available models. The pseudo-code of  CaMVo is provided in Algorithm~\ref{alg: CaMVo}, and consists of six main steps described below.

\begin{algorithm}
\caption{Cost-aware Majority Voting (CaMVo) Algorithm}
\label{alg: CaMVo}
\begin{multicols}{2}
\begin{algorithmic}[1]
\State \textbf{Input:} Set of LLMs $[K]$, cost per token $\rho_i$ for each LLM $i$, embedding model $\text{Emb}(\cdot)$, confidence threshold $\delta$, LinUCB regularization parameter $\lambda_L$, exploration parameter $\alpha$, regularization parameter $\lambda_R$, $k_{\min}$
\State $A_{i,0} \gets \lambda_L I_d, ~\boldsymbol{b}_{i,0} \gets \boldsymbol{0}_d,~ \forall i \in [K]$ 
\For{each round $t = 1, 2, \ldots, T$}
    \State Get context vector: $\mathbf{e}_t \gets \text{Emb}(x_t)$
    \For{each LLM $i = 1, 2, \ldots, K$}
        \State $q_{i,t}(\boldsymbol e_t) \gets \mathbf{e}_t^\top A_{i,t-1}^{-1} \boldsymbol{b}_{i,t-1}$
        \State $\theta_{i,t}(\boldsymbol e_t) = q_{i,t}(\boldsymbol e_t)  - \alpha \sqrt{\mathbf{e}_t^\top A_{i,t-1}^{-1} \mathbf{e}_t}$
        \State $\bar{L}_{i,t} \gets \text{Est}_i(\theta_{i,t}(\boldsymbol e_t) )$     
        \State $L_{i,t} \gets \frac{ \bar{L}_{i,t}\cdot N_{i,t-1} +  \lambda_R \cdot \log(t+1)/2}{N_{i,t-1} +  \lambda_R \cdot \log(t+1)}   $
        \State $\omega_{i,t} \gets \mu_{i,t-1} \cdot q_{i,t}(\boldsymbol e_t)$   
    \EndFor
    \State  $\actset \gets \text{Oracle}(\mathbf{L}_t, \boldsymbol\omega_t, \delta, k_{\min})$
    \State Query LLMs: $y_{i,t} = l_i(x_t), \ i \in \actset$
    \State  $\hat{y}_t \gets  \arg\max_{m } \sum_{i \in \actset, y_{i,t} = m} \omega_{i}(t) $
    \If{$|\actset| > 1$}
    \For{each LLM $i \in \actset$}
        \State  $r_{i,t} \gets \ind{y_{i,t} = \hat{y}_t}$
        \State  Update: $N_{i,t} \gets  N_{i,t-1} + 1 $
        \State Update: $A_{i,t} \gets A_{i,t-1} + \mathbf{e}_t \mathbf{e}_t^\top$
        \State Update: $\boldsymbol{b}_{i,t} \gets \boldsymbol{b}_{i,t-1} + r_{i,t} \mathbf{e}_t$
        \State Update: $\mu_{i,t} \gets \frac{\sum_{s=1}^t \ind{y_{i,s} = \hat{y}_s}}{N_{i,t} }$
        \State Update the parameters of $\text{Est}_i$ 
    \EndFor
    \EndIf
\EndFor
\vspace{-2mm}
\end{algorithmic}
\end{multicols}
\end{algorithm}

\paragraph{LinUCB-Based Confidence Estimation.} For each LLM $l_i$,  CaMVo maintains a matrix $A_i \in \mathbb{R}^{d \times d}$ and vector $\boldsymbol{b}_i \in \mathbb{R}^d$, initialized as $A_{i,0} = \lambda_L I_d$, $\boldsymbol{b}_{i,0} = \boldsymbol{0}$, where $\lambda_L>0$ is a user-defined parameter. Given $\boldsymbol{e}_t = \text{Emb}(x_t)$, the estimated confidence, and its confidence bound is computed as:
$$
q_{i,t}(\boldsymbol{e}_t) = \boldsymbol{e}_t^\top A_{i,t-1}^{-1} \boldsymbol{b}_{i,t-1}, \quad
C_{i,t}(\boldsymbol{e}_t) = \alpha \sqrt{\boldsymbol{e}_t^\top A_{i,t-1}^{-1} \boldsymbol{e}_t},
$$
From these, the lower confidence bound (LCB) of LLM confidence can be found as:
$$
\theta_{i,t}(\boldsymbol{e}_t) = q_{i,t}(\boldsymbol{e}_t) - C_{i,t}(\boldsymbol{e}_t).
$$
Since this is an LCB of a probability, we clip it between 0 and 1 when it is out of range [0,1].

\paragraph{Bayesian Estimation of Label Correctness.} Given the inherent probabilistic nature of LLM outputs, the estimated confidence score may not reliably indicate the correctness of a label. To address this, we introduce a Bayesian estimator \( \text{Est}_i(\cdot) \) that models the posterior probability that \( l_i \)'s prediction is correct, conditioned on this confidence. First, we define a latent variable $h_{i,t} = \mathds{1}\{y_{i,t} = \hat{y}_t\}$, where $\hat{y}_t$ is the assigned label. Note that ideally, the true label should be used instead of $\hat{y}_t$, but since we do not have access to the true label, we instead use $\hat{y}_t$. 
We model the conditional likelihood of $q_{i,t}(\boldsymbol{e}_t)$, given the latent variable $h_{i,t}$, as a Beta-distributed random variable:
\[
q_{i,t}(\boldsymbol{e}_t) \mid h_{i,t} = 1 \sim \text{Beta}(\alpha_{i,1}, \beta_{i,1}), \quad
q_{i,t}(\boldsymbol{e}_t) \mid h_{i,t} = 0 \sim \text{Beta}(\alpha_{i,0}, \beta_{i,0}).
\]
Further, to model \( \mathbb{P}(h_{i,t} = 1) \), we use the empirical historical relative accuracy of $l_i$, \( \mu_{i,t-1} \), which captures the accuracy of LLM \( l_i \) relative to the predicted labels up to round \( t-1 \). Similarly, \( \mathbb{P}(h_{i,t} = 0) = 1 - \mu_{i,t-1} \). Applying Bayes’ rule with these models, the posterior probability is modeled as:
$$
\text{Est}_i(q) = \mathbb{P}(h_{i,t}=1 \mid q) = \frac{\mu_{i,t-1} \cdot \text{Beta}_i(q; \alpha_{i,1}, \beta_{i,1})}{\mu_{i,t-1} \cdot \text{Beta}_i(q; \alpha_{i,1}, \beta_{i,1}) + (1 - \mu_{i,t-1}) \cdot \text{Beta}_i(q; \alpha_{i,0}, \beta_{i,0})},
$$
We apply this estimator to $\theta_{i,t}(\boldsymbol{e}_t) $, the LCB of the estimated LLM confidence as $\bar{L}_{i,t} = \text{Est}_i(\theta_{i,t}(\boldsymbol{e}_t))$
% $$
% ,
% $$
to encourage exploration under the UCB principle. 
Note that unlike traditional UCB-based methods that promote exploration via upper bounds, we use the LCB as it expands the size of the set of LLMs likely to satisfy the confidence threshold $\delta$.

\paragraph{Regularization.} In the absence of ground-truth labels, empirical relative accuracy estimates in majority voting can overfit, resulting in overconfident weights and biased aggregation. Further, since subset selection is based on these estimated confidences, early estimation errors can bias the selection process and lead to compounding errors over time. To address this issue, we regularize the LCB of the estimated confidence of LLM $l_i$ using Laplace smoothing:
\begin{align}
    L_{i,t} = \frac{ \bar{L}_{i,t}\cdot N_{i,t} +  \lambda_R \cdot \log(t+1)/2}{N_{i,t} +  \lambda_R \cdot \log(t+1)}
\end{align}
where $\lambda_R > 0$ is a user-defined regularization parameter that controls the strength of smoothing. Laplace smoothing is chosen since it corresponds to using a uniform prior on the Beta estimator, and hence combines well with the Beta estimator. We use a $\log(t)$ term with the Laplace smoothing as compared to a constant term, as the $\log(t)$ term does not decay as quickly as the constant term and prevents overfitting in the long run.
\paragraph{Subset Selection with Oracle.} We define the weight of LLM $l_i$ for majority voting as $\omega_{i,t} := \mu_{i,t-1} \cdot q_{i,t}(\boldsymbol e_t)$. This way the weights of LLMs reflect both their past performance, and also their expected performance for the current data instance. An Oracle is used to find the lowest cost subset whose label confidence is above the threshold $\delta$ using the computed $L_{i,t}$ and $\omega_{i,t}$ values of LLMs. Using Lemma \ref{lemma:subset_conf}, this can be expressed as 
\begin{align}
    \actset = \text{Oracle}(\mathbf{L}_t, \boldsymbol\omega_t, \delta, k_{\min}) :=  \arg \min_{\acttset} c_{\acttset}(t) : \delta_{\acttset}(\mathbf{L}_t, \boldsymbol\omega_t) \geq \delta, ~|\acttset| \geq k_{\min}
\end{align}
where $c_{\acttset}(t) = \sum_{i \in \acttset } \rho_i \cdot H_i(x_t)$
% \begin{align}
%     c_{\acttset}(t) = \sum_{i \in \acttset } \rho_i \cdot H_i(x_t), 
% \end{align}
, $\rho_i$ is the cost per token for LLM $i$, and $H_i(x_t)$ is the token count of the query to the LLM $l_i$ for data instance $x_t$ under the tokenization method of LLM $l_i$. Since we expect only one token as output (which will be the label for the data instance $x_t$), we ignore the output tokens. Note that as in Lemma \ref{lemma:subset_conf}, we assume that the outputs of LLMs are conditionally independent given the data instance. We analyze the correlated version of CaMVo that does not have this assumption in Appendix~\ref{append:exp_correlation}. Also note that if no such $\acttset$ exists, CaMVo defaults to querying all LLMs.

\paragraph{Label Assignment.} We query the LLMs in $\actset$ and receive their responses. The label for $x_t$ can be assigned via weighted majority vote using $\hat{y}_t =  \arg\max_{m \in [M]} \sum_{i \in \actset} \omega_{i}(t) \cdot \ind{y_{i,t} = m}$.
% \begin{align}
%     \hat{y}_t =  \arg\max_{m \in [M]} \sum_{i \in \actset} \omega_{i}(t) \cdot \ind{y_{i,t} = m}
% \end{align}

\paragraph{Parameter Updates.} If $|\actset|>1$, for each $l_i \in \actset$,  CaMVo updates $A_{i,t}$, $\boldsymbol{b}_{i,t}$, and $\mu_{i,t}$, which is the empirical relative mean accuracy, as:
$$
A_{i,t} \gets A_{i,t-1} + \boldsymbol{e}_t \boldsymbol{e}_t^\top, \quad
\boldsymbol{b}_{i,t} \gets \boldsymbol{b}_{i,t-1} + r_{i,t} \boldsymbol{e}_t, \quad  \mu_{i,t} \gets \frac{\sum_{s=1}^t \mathds{1}\{y_{i,s} = \hat{y}_s\}}{N_{i,t} }
$$
where $r_{i,t} = \mathds{1}\{y_{i,t} = \hat{y}_t\}$. Note that parameters are not updated when $|\actset|=1$ as the reward will always be $1$ in that case. The Beta distribution parameters $(\alpha_{i,h}, \beta_{i,h})$ can be updated via maximum likelihood estimation or a method-of-moments approximation based on the mean and variance of past confidence scores. These approaches are discussed in Appendix \ref{append:beta_updates}.

% After updating the parameters, the algorithm proceeds with the next round. Note that the parameter updates will be biased when the number of LLMs that are queried in the round is small. For example, if only one LLM is queried, it will always receive a reward of $1$, which will introduce bias to the model. To mitigate this bias, we suggest using a higher \( k_{\min} \), or condition parameter updates on the number of selected LLMs in a round exceeding a specified threshold.

\section{Experiments} \label{sec:exps}

% We conduct experiments on two different datasets; Measuring Massive Multitask Language Understanding (MMLU) Dataset \cite{hendrycks2021ethics,hendryckstest2021}, and IMDB Movie Reviews Dataset \cite{maas-EtAl:2011:ACL-HLT2011}.

\subsection{Experiments on the MMLU Dataset}
\label{sec:exp_MMLU}

We first evaluate CaMVo on the MMLU dataset \cite{hendrycks2021ethics,hendryckstest2021}, a challenging multiple-choice benchmark spanning 57 diverse subjects including mathematics, U.S. history, law, and computer science. MMLU is well-suited to our setting, as it demands broad world knowledge and strong reasoning capabilities—conditions under which majority voting is particularly effective. To reduce computational cost, we restrict our evaluation to the test split, which contains 14{,}042 instances.

\paragraph{Models and setup.} We use the following LLMs: Claude 3 Sonnet and Haiku from Anthropic \cite{claude3sonnet}, GPT-4o, o3-mini, and o1-mini from OpenAI \cite{openai2024models}, and LLaMA-3.3 and LLaMA-3.1 from Meta \cite{llama2024models}. All models are queried using temperature 0.35 and top-\( p = 1 \), where applicable. To extract contextual embeddings for CaMVo, we use the 384-dimensional sentence transformer \texttt{all-MiniLM-L6-v2} \cite{wang2020minilm}.
For computational efficiency, we approximate the confidence $\delta_{\mathcal{A}}(\boldsymbol{L}, \boldsymbol{\omega})$ using the cumulative distribution function of a Beta distribution. Further implementation details and experimental setup are provided in Appendix~\ref{append:exp_MMLU}.

\paragraph{Results.}  Table~\ref{tab:exp_mmlu_baseline} (Left) reports the accuracy and cost of individual LLMs, as well as two baselines: \textit{Majority Vote}, which aggregates all LLMs using weights proportional to their true accuracy, and the \textit{Baseline Method}, which corresponds to Algorithm~\ref{alg:baseline}. {\em Cost} corresponds to the average input token cost when labeling the dataset and is reported in dollars per million input tokens. {\em Accuracy} reflects the percentage of data instances correctly labeled. CaMVo’s performance under varying confidence thresholds \(\delta\) and minimum vote counts \(k_{\min} \in \{1, 3\}\) is shown in Table~\ref{tab:exp_mmlu_camvo}. Note that we include $k_{\min} = 3$ in our experiments as model parameters do not get updated when only a single LLM is queried, a scenario that can occur under $k_{\min} = 1$. As a result, users may prefer to avoid setting $k_{\min} = 1$. Moreover, for $k_{\min} = 2$, the selected pair of LLMs will always yield a majority vote in favor of the LLM with the higher weight, limiting the informativeness of the voting outcome. The algorithm is configured with \(\alpha = 0.25\), \(\lambda_R = 1\), and \(\lambda_L = 1\). The {\em Target Accuracy} column reflects the targeted relative accuracy, which is the minimum relative accuracy CaMVo must exceed to satisfy the threshold $\delta$, and is computed as \(\delta \times \text{(Majority Vote Accuracy)} = \delta \times 88.18\%\).

From Table~\ref{tab:exp_mmlu_baseline}, we observe that among individual models, o3-mini achieves the highest accuracy at 85.92\%, while Majority Vote attains 88.18\% at a substantially higher cost of \$9.14 per million tokens. The Baseline Method also matches this accuracy and cost. Table~\ref{tab:exp_mmlu_camvo} shows that CaMVo consistently meets or exceeds the desired accuracy levels specified by \(\delta\), across all settings of \(k_{\min}\). From Table~\ref{tab:exp_mmlu_camvo}, it can be observed that CaMVo consistently satisfies all target accuracy levels defined by the confidence parameter \( \delta \). Moreover, the accuracy and cost of CaMVo exhibit a predictable trade-off: as \( \delta \) decreases, the cost of labeling decreases accordingly, while accuracy also declines in a controlled manner. This behavior highlights the flexibility of CaMVo in adapting to a wide range of practical scenarios with varying accuracy and budget constraints. Further, at \(\delta = 0.97\) and \(k_{\min} = 1\), CaMVo achieves 88.33\% accuracy at a cost of only \$7.18, outperforming both baselines in cost-efficiency. This improvement stems from CaMVo’s ability to dynamically select LLM subsets based on both global accuracy estimates and instance-specific contextual confidence.

We also observe that when \(k_{\min} = 3\), lowering \(\delta\) below 0.85 has negligible effect on cost or accuracy. This occurs because CaMVo settles on the lowest-cost trio: LLaMA-3.3, LLaMA-3.1, and Claude-3.5; whose combined cost (\$1.44) represents a lower bound given the constraint on \(k_{\min}\). 
% In contrast, when \(k_{\min} = 1\), the algorithm can choose a single model, enabling more aggressive cost reduction.

\begin{table}[h]
% \normalsize
\small
\centering
\begin{tabular}{lcc}
\toprule
\textbf{LLM / Method} & \textbf{Accuracy (\%)} & \textbf{Cost} \\
\midrule
o3-mini               & 85.92                  & 1.10 \\
claude-3-7-sonnet     & 85.65                  & 3.00 \\
o1-mini               & 84.82                  & 1.10 \\
gpt-4o                & 83.58                  & 2.50 \\
llama-3.3-70b         & 81.70                  & 0.59 \\
llama-3.1-8b          & 68.01                  & 0.05 \\
claude-3-5-haiku      & 64.09                  & 0.80 \\
\midrule
Majority Vote         & 88.18                  & 9.14 \\
Baseline Method       & 88.18                  & 9.14 \\
\bottomrule
\end{tabular} \hfill
\begin{tabular}{lcc}
\toprule
\textbf{LLM / Method} & \textbf{Accuracy (\%)} & \textbf{Cost} \\
\midrule
gpt-4o                 & 95.68                  & 2.50 \\
o3-mini                & 95.40                  & 1.10 \\
claude-3-5-haiku       & 95.05                  & 0.80 \\
gpt-4o-mini            & 94.60                  & 0.15 \\
o1-mini                & 94.52                  & 1.10 \\
llama-3.1-8b   & 94.06                  & 0.05 \\
llama-3.3-70b          & 92.23                  & 0.59 \\
\midrule
Majority Vote          &  95.62                  & 6.29 \\
Baseline Method        & 95.61                  & 6.29 \\
\bottomrule
\end{tabular}
\caption{Accuracy and cost of individual LLMs and baseline ensemble methods on the MMLU dataset (Left), and the IMDB Movie Reviews Dataset (Right).}
% \vspace{-2mm}
\label{tab:exp_mmlu_baseline}
\end{table}

\begin{table}[h]
\small
\centering
\begin{tabular}{c c c c c c}
\toprule
\textbf{CaMVo $\delta$} & \makecell{\textbf{Target}\\ \textbf{Acc. (\%)}} & \makecell{\textbf{Acc. (\%)}\\ \boldmath{$k_{\min}=1$}} & \makecell{\textbf{Cost}\\ \boldmath{$k_{\min}=1$}} & \makecell{\textbf{Acc. (\%)}\\ \boldmath{$k_{\min}=3$}} & \makecell{\textbf{Cost}\\ \boldmath{$k_{\min}=3$}} \\
\midrule
0.99   & 87.30 & 88.47 & 9.14 & 88.47 & 9.14 \\
0.98   & 86.42 & 88.59 & 8.57 & 88.59 & 8.57 \\
0.975  & 85.98 & 88.49 & 7.80 & 88.49 & 7.80 \\
0.97   & 85.53 & 88.35 & 6.67 & 88.33 & 6.67 \\
0.965   & 85.09 & 88.27 & 5.66 & 88.27 & 5.66 \\
0.96   & 84.65 & 87.98 & 4.74 & 88.03 & 4.74 \\
0.955   & 84.21 & 87.40 & 3.38 & 87.01 & 3.36 \\
0.95   & 83.77 & 86.82 & 2.76 & 87.01 & 2.96 \\
0.90   & 79.36 & 84.88 & 1.19 & 84.80 & 1.81 \\
0.85   & 74.95 & 84.41 & 1.03 & 82.14 & 1.58 \\
0.80   & 70.54 & 82.12 & 0.70 & 81.32 & 1.51 \\
0.75   & 66.14 & 68.80 & 0.16 & 81.24 & 1.50 \\
0.70   & 61.73 & 68.38 & 0.14 & 81.22 & 1.50 \\
\bottomrule
\end{tabular}
% \caption{Accuracy and cost of CaMVo on the MMLU dataset under varying confidence thresholds $\delta$ and minimum vote counts $k_{\min} \in \{1, 3\}$. Note that results the cost of the baseline method is $9.14$ for comparison. }
\caption{Accuracy and cost of CaMVo on the MMLU dataset under varying confidence thresholds $\delta$ and $k_{\min} \in \{1, 3\}$. For reference, the cost of the baseline method is \$9.14 per million tokens.}
\vspace{-3mm}
\label{tab:exp_mmlu_camvo}
\end{table}

\begin{figure}[ht]
  \centering
\includegraphics[width=0.49\linewidth]{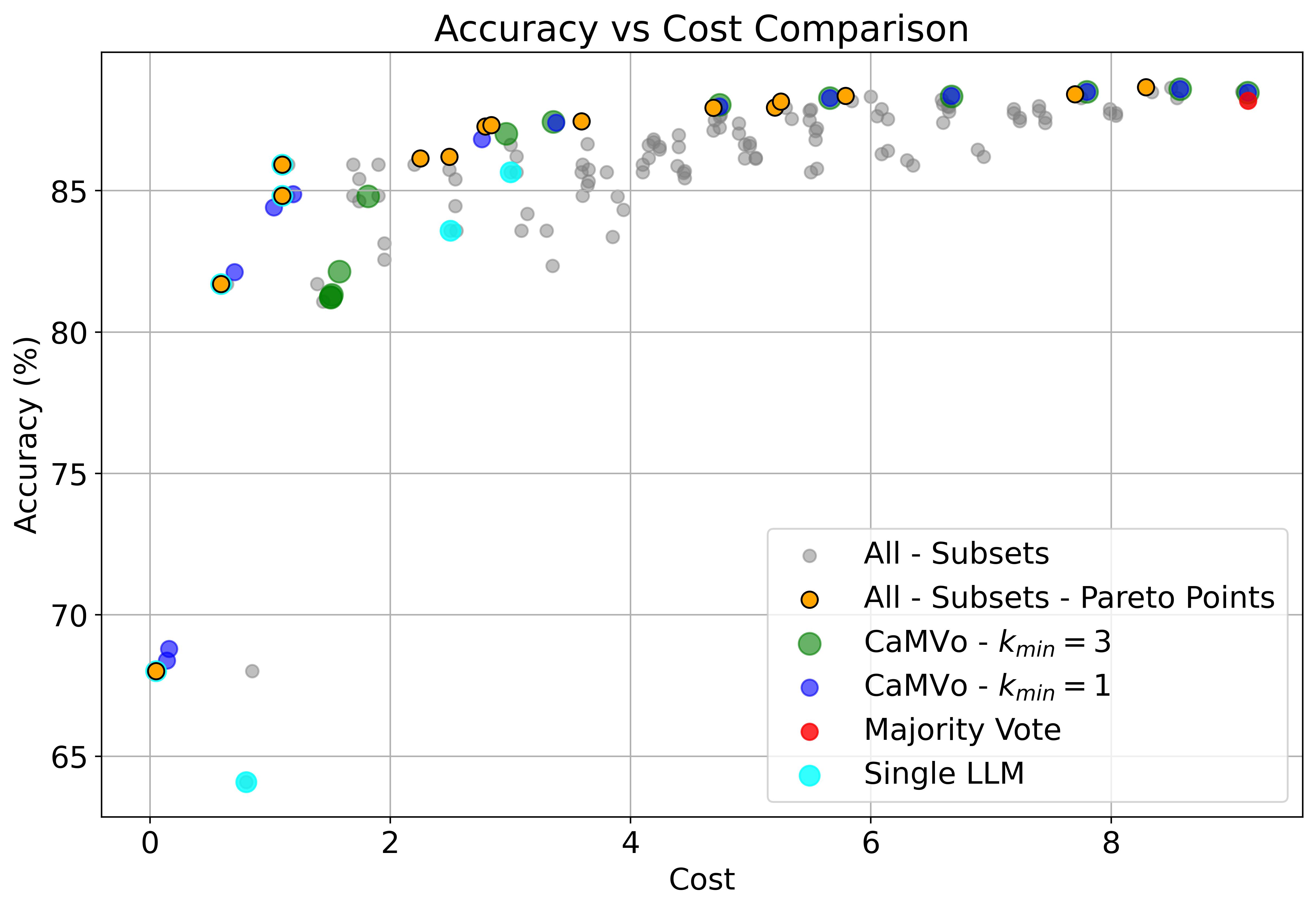}
\includegraphics[width=0.49\linewidth]{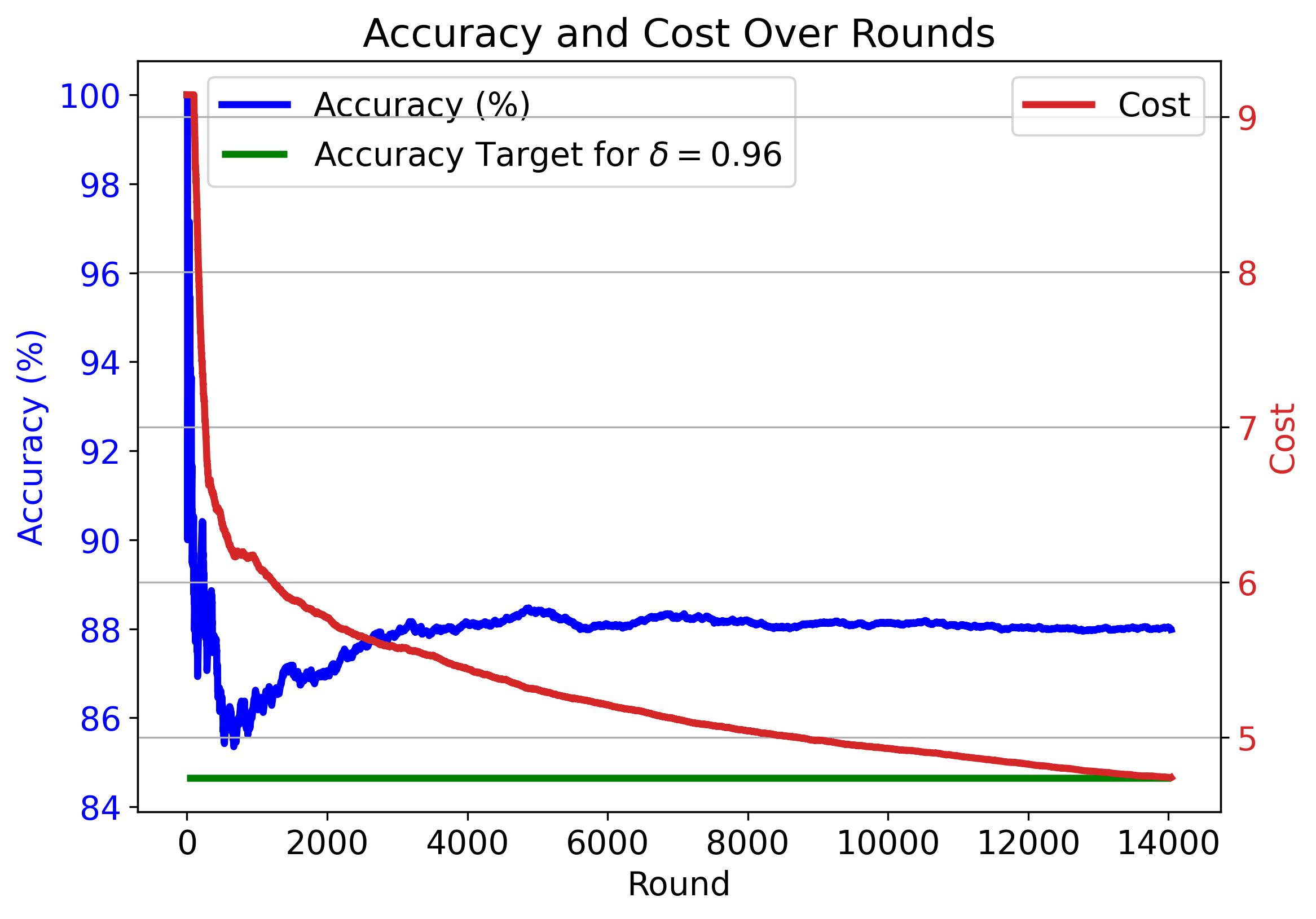}
% \includegraphics[width=0.49\linewidth]{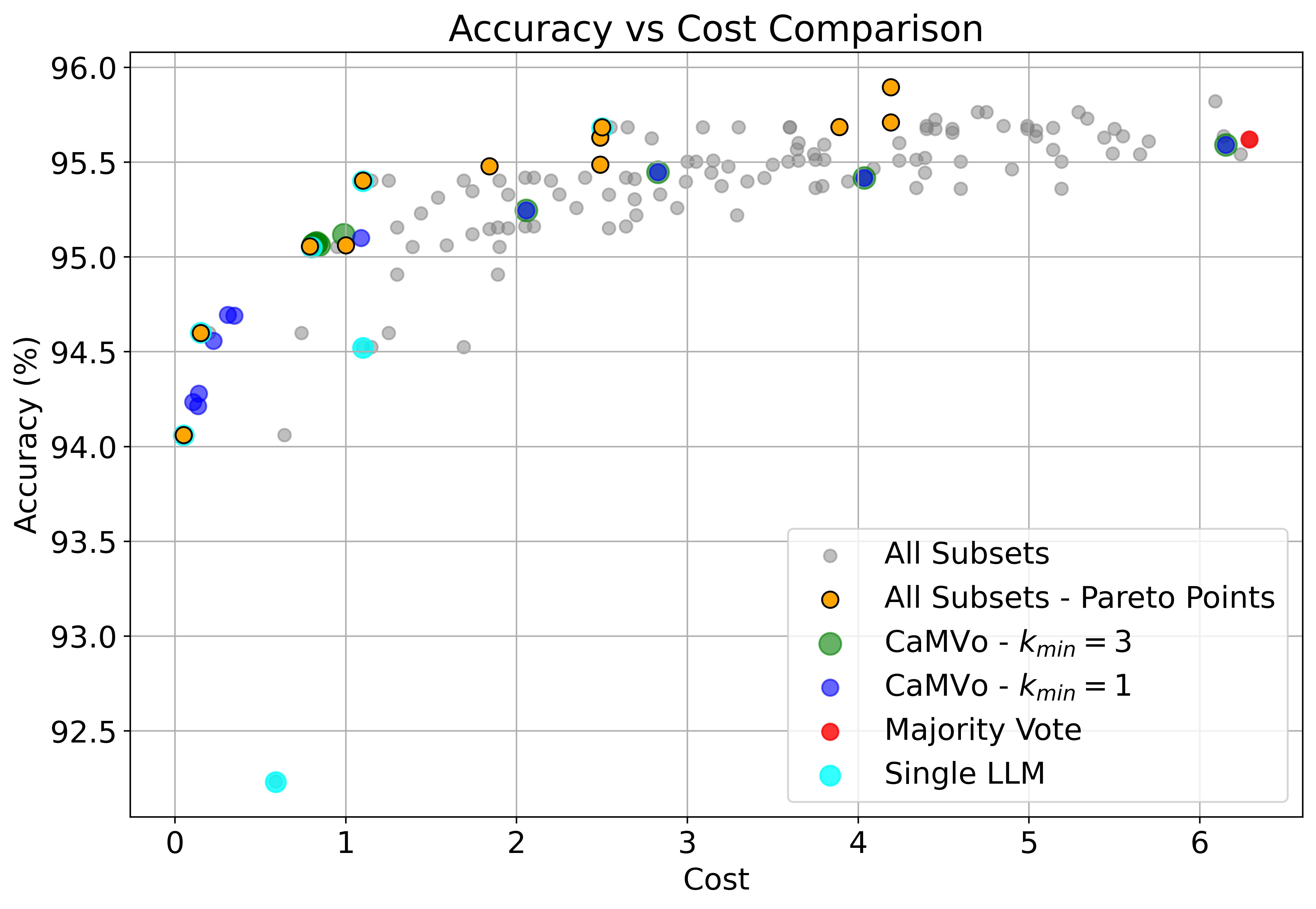}
% \includegraphics[width=0.49\linewidth]{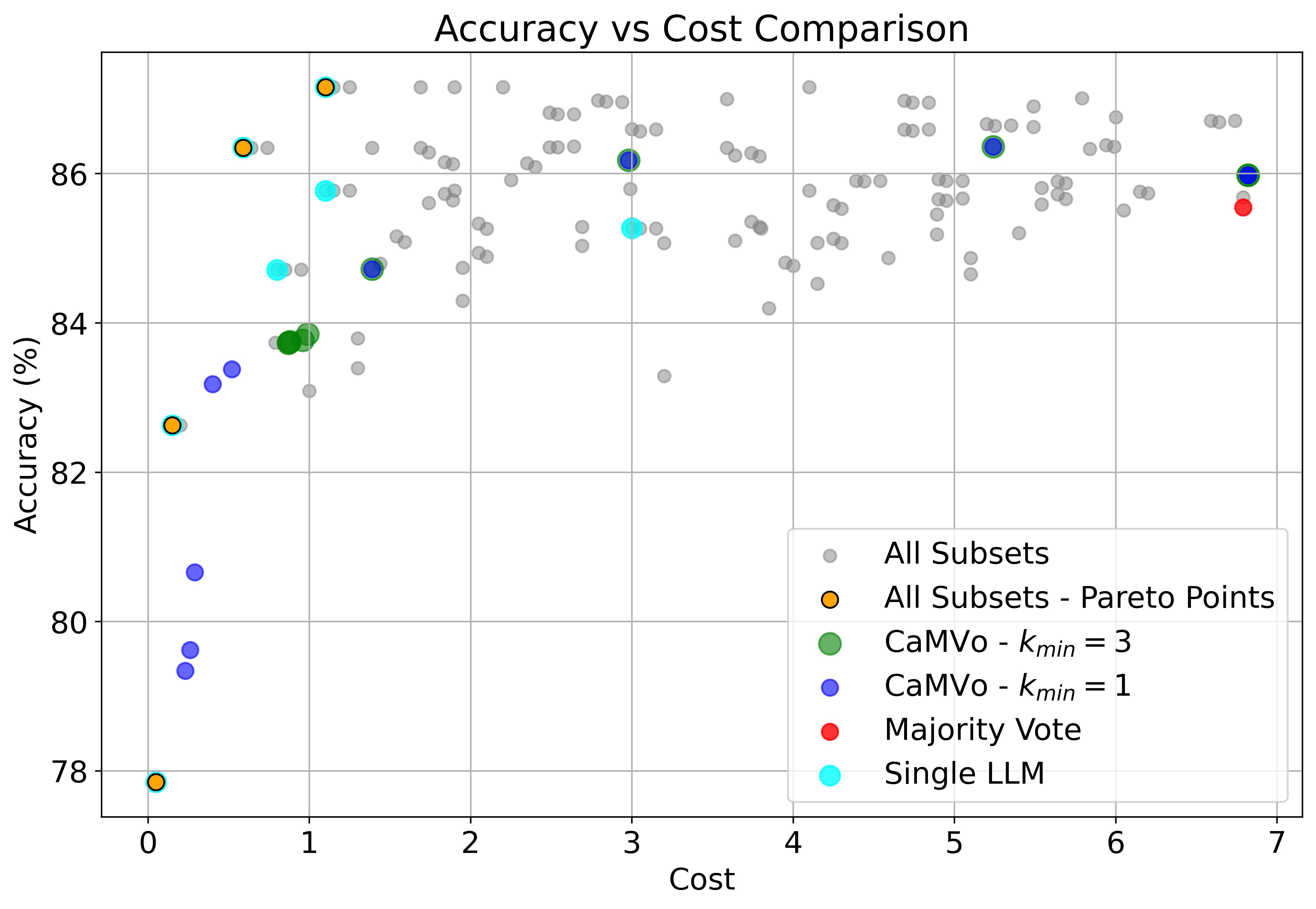}
% \includegraphics[width=0.49\linewidth]{scatter6.jpg}
  % \caption{Accuracy vs.\ cost for all subset combinations (gray, yellow: Pareto frontier), CaMVo (blue: $k_{\min}=1$, green: $k_{\min}=3$), and the Baseline Method (red). }
  \caption{(Left) Cost–accuracy trade‐off for MMLU dataset: gray dots show every LLM subset via weighted majority voting, yellow dots trace their Pareto‐optimal frontier, blue markers are CaMVo at $k_{\min}=1$, green markers at $k_{\min}=3$, cyan markers denote the individual single LLMs, and the red marker denotes the Baseline Method. (Right) Cumulative average accuracy and cost of CaMVo with $\delta=0.96$, $k_{\min}=1$ over rounds.}
  \label{fig:mmlu_scatter}
  \vspace{-1mm}
\end{figure}

Figure~\ref{fig:mmlu_scatter} (Left) illustrates the cost–accuracy trade-off of {\em All Subsets} versus CaMVo. Each gray point represents one of the \(2^K\!-\!1\) possible LLM subsets voted via ground‐truth accuracies, while the yellow curve depicts those that are Pareto‐optimal. CaMVo’s results appear as blue markers for \(k_{\min}=1\) and green markers for \(k_{\min}=3\). The red marker denotes the Baseline Method. Remarkably, even without any a priori knowledge of LLM performance, or any pre-training, CaMVo consistently tracks, and sometimes surpasses the Pareto frontier, demonstrating its ability to approximate optimal cost–accuracy trade‐offs in an online manner.

Figure~\ref{fig:mmlu_scatter} (Right) shows CaMVo’s cumulative average accuracy (blue) and cost (red) for \(\delta=0.96\), \(k_{\min}=1\); the green horizontal line indicates the target accuracy. In early rounds, CaMVo explores larger, more expensive subsets, yielding both high cost and high accuracy. As the LCB estimates converge, the algorithm rapidly shifts to smaller, cheaper subsets that still satisfy the accuracy threshold. Cost declines steeply while accuracy stabilizes just above the target, illustrating CaMVo’s ability to quickly identify and exploit the most cost‐effective ensembles without sacrificing labeling quality. Additional plots with different parameters are provided in Appendix \ref{append:exp_MMLU}.

\vspace{-1mm}
\subsection{Experiments on the IMDB Movie Reviews Dataset}
\label{sec:exp_IMDB}

We next test CaMVo on the IMDB Movie Reviews dataset \cite{maas-EtAl:2011:ACL-HLT2011}, a balanced binary‐sentiment benchmark of 50,000 movie reviews. As before, we compare CaMVo against each individual LLM, a full‐ensemble \textit{Majority Vote}, and the \textit{Baseline Method} (Algorithm~\ref{alg:baseline}).

\paragraph{Models and setup.}  
We employ Anthropic’s Claude 3-5 Haiku \cite{claude3sonnet}; OpenAI’s GPT-4o, o3-mini, GPT-4o-mini, and o1-mini \cite{openai2024models}; and Meta’s LLaMA-3.3 and LLaMA-3.1 \cite{llama2024models}. All queries use temperature = 0.25 and top-\(p=1\), where applicable. We extract 384-dimensional contextual embeddings with \texttt{all-MiniLM-L6-v2} \cite{wang2020minilm} and approximate the confidence bound \(\delta_{\mathcal{A}}(\boldsymbol L,\boldsymbol\omega)\) via the Beta‐CDF, as in \S\ref{sec:exp_MMLU}.

\paragraph{Results.}  
Table~\ref{tab:exp_mmlu_baseline} (Right) reports the accuracy and cost (in dollars per million input tokens) of each LLM and the two baselines. The baseline underperforms the best individual model (95.68\% vs. 95.61\%) despite incurring a significantly higher cost. This is partly due to the relative ease of the IMDB Movie Reviews dataset, where individual LLMs already achieve high accuracy, limiting the marginal benefit of ensembling. As noted by \citet{li2024more}, ensemble gains are most pronounced on harder tasks. Additionally, \citet{trad2024ensemble} highlight that large performance gaps among models can reduce ensemble effectiveness, making smaller, selective subsets preferable in such cases.

Table~\ref{tab:exp_imdb_camvo} presents CaMVo’s accuracy–cost trade‐off across various thresholds \(\delta\) and \(k_{\min}\in\{1,3\}\). CaMVo’s hyperparameters are \(\alpha=0.7\), \(\lambda_R=5\), and \(\lambda_L=1\); and the \emph{Target Accuracy} is computed similarly as \(\delta\times95.62\%\).
Across all configurations, CaMVo meets or exceeds its target accuracy. Further, CaMVo achieves less than half the cost (when \( \delta = 0.997 \) and \( k_{\min} = 1 \)) at a slightly lower accuracy of \( 95.45\% \) compared to the baseline, confirming its practicality for large‐scale sentiment annotation without any pre‐training or ground-truth labels.

\begin{table}[h]
\small
\centering
\begin{tabular}{c c c c c c}
\toprule
\textbf{CaMVo $\delta$} & \makecell{\textbf{Target}\\ \textbf{Acc. (\%)}} & \makecell{\textbf{Acc. (\%)}\\ \boldmath{$k_{\min}=1$}} & \makecell{\textbf{Cost}\\ \boldmath{$k_{\min}=1$}} & \makecell{\textbf{Acc. (\%)}\\ \boldmath{$k_{\min}=3$}} & \makecell{\textbf{Cost}\\ \boldmath{$k_{\min}=3$}} \\
\midrule
0.999 & 95.52 & 95.59 & 6.15 & 95.59 & 6.15 \\
0.998 & 95.43 & 95.43 & 4.03 & 95.43 & 4.03 \\
0.997 & 95.33 & 95.45 & 2.83 & 95.45 & 2.83 \\
0.995 & 95.14 & 95.25 & 2.06 & 95.25 & 2.06 \\
0.99 & 94.66 & 95.10 & 1.09 & 95.12 & 0.99 \\
0.985 & 94.20 & 94.69 & 0.34 & 95.06 & 0.84 \\
0.98 & 93.71 & 94.69 & 0.31 & 95.07 & 0.83 \\
0.97 & 92.75 & 94.56 & 0.22 & 95.07 & 0.82 \\
0.96 & 91.80 & 94.21 & 0.13 & 95.06 & 0.81 \\
0.95 & 90.84 & 94.28 & 0.14 & 95.07 & 0.81 \\
0.9 & 86.06 & 94.24 & 0.10 & 95.06 & 0.81 \\
\bottomrule
\end{tabular}
\caption{Accuracy and cost of CaMVo on the IMDB dataset under varying confidence thresholds $\delta$ and $k_{\min} \in \{1, 3\}$. For reference, the cost of the baseline method is \$6.29 per million tokens. 
% Accuracy and cost of CaMVo on the IMDB Movie Reviews dataset under varying confidence thresholds $\delta$ and minimum vote counts $k_{\min} \in \{1, 3\}$.
}
\label{tab:exp_imdb_camvo}
\end{table}

\begin{figure}[ht]
  \centering
\includegraphics[width=0.47\linewidth]{scatter4.jpg}
\includegraphics[width=0.47\linewidth]{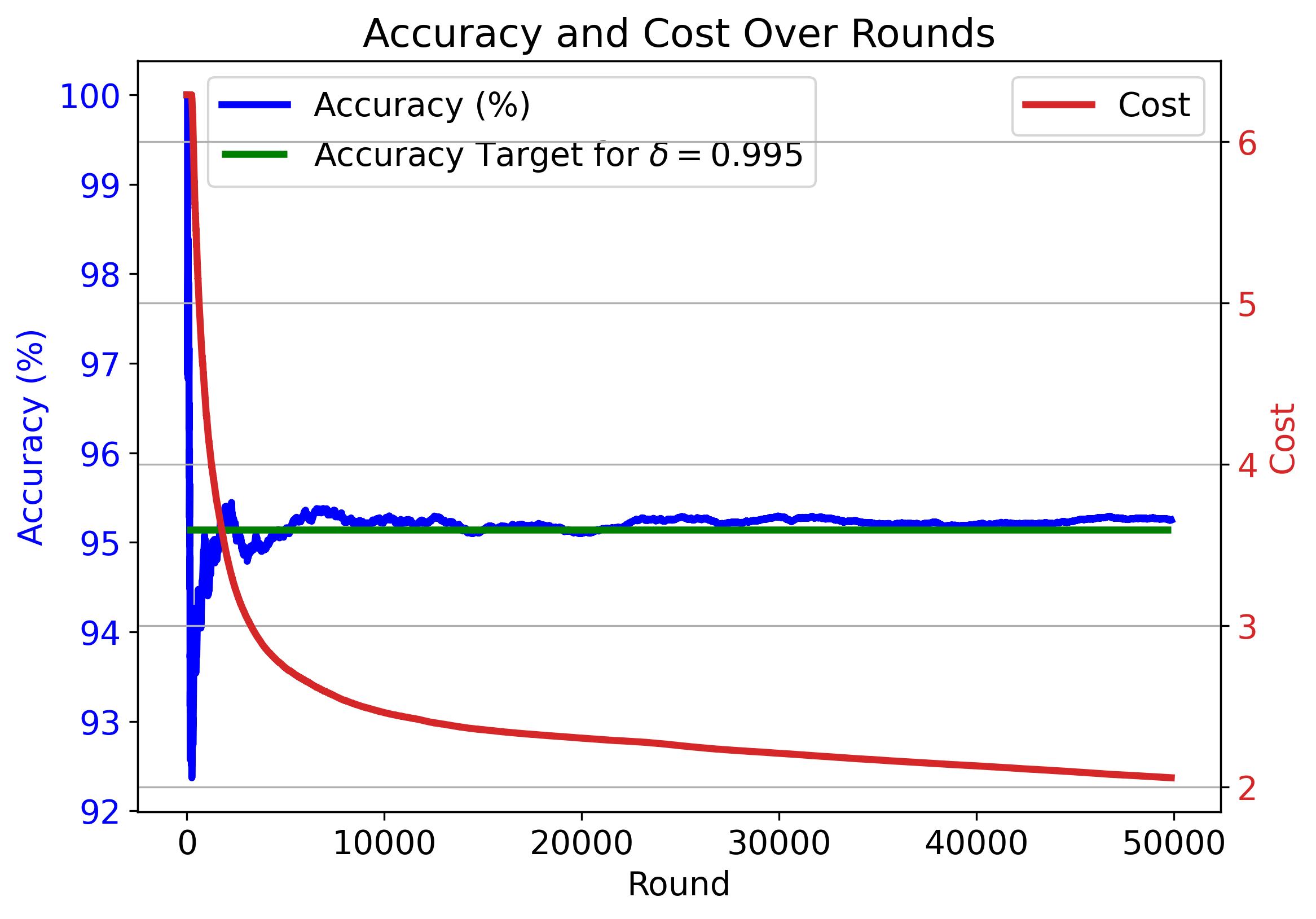}
   \caption{(Left) Cost–accuracy trade‐off for IMDB dataset: gray dots show every LLM subset via weighted majority voting, yellow dots trace their Pareto‐optimal frontier, blue markers are CaMVo at $k_{\min}=1$, green markers at $k_{\min}=3$, cyan markers denote the individual single LLMs, and the red marker denotes the Baseline Method. (Right) Empirical average accuracy and cost of CaMVo with $\delta=0.995$, $k_{\min}=1$ over rounds.}
  \label{fig:imdb_scatter}
\end{figure}

Figure~\ref{fig:imdb_scatter} (Left) presents the analogous comparison of Figure~\ref{fig:mmlu_scatter} (Left) on the IMDB sentiment task. As before, gray points and the yellow Pareto-frontier points show all possible subset combinations, while blue and green markers plot CaMVo at \(k_{\min}=1\) and \(3\), respectively. The red marker denotes the Baseline Method. CaMVo closely matches the Pareto front in the low-cost regime (cost $<1$), but lags behind in higher-cost regions. This exposes a key limitation: when majority voting with additional LLMs is ineffective, CaMVo’s reliance on the independence assumption, which suggests that aggregating more LLMs improves accuracy; can lead to suboptimal performance.

Figure~\ref{fig:imdb_scatter} (Right) plots CaMVo’s cumulative average accuracy (blue) and cost (red) on IMDB with \(\delta=0.995\) and \(k_{\min}=1\); the green line marks the target accuracy. As before, early rounds involve querying larger, costlier ensembles to robustly explore each model’s performance. Once the lower‐confidence bounds stabilize, CaMVo swiftly transitions to minimal‐cost subsets that still meet the accuracy requirement. This demonstrates CaMVo’s rapid convergence to cost‐effective model combinations without compromising annotation quality. Additional results for other parameter settings appear in Appendix~\ref{append:exp_IMDB}.

\vspace{-1mm}
\subsection{Additional Experiments}
\label{sec:exp_AG}

We present additional experimental results on the AG News Classification Dataset in Appendix~\ref{append:exp_AG}. Appendix~\ref{append:exp_correlation} introduces CCaMVo (Correlated CaMVo), a practical extension of CaMVo that estimates subset confidence by modeling correlations among LLMs, thereby relaxing the independence assumption. We provide the corresponding algorithm, experiments, and a comparison with CaMVo, showing that accounting for correlations yields only marginal improvements in accuracy. Finally, Appendix~\ref{append:exp_sensitivity} reports a sensitivity analysis evaluating how CaMVo’s performance degrades under varying levels of correlation among LLM predictions.

\section{Limitations}
\label{sec:limitations}

Our work relies on the assumption that the outputs of LLMs are independent of each other.  Under this assumption, aggregating any subset of models with individual accuracy above \(50\%\) strictly improves majority‐vote performance.  In practice; i.e., on the IMDB sentiment task (\S\ref{sec:exp_IMDB}), LLM outputs can be highly correlated, and majority voting may underperform the best single model. Consequently, CaMVo inherits these failures and can yield lower ensemble accuracy when independence is violated.  Nevertheless, even in such regimes CaMVo still achieves the user‐specified accuracy threshold while reducing cost relative to the full‐ensemble baseline. This is mostly due to the fact that our results are relative to the full‐ensemble baseline which also suffers from the same issue. 

Appendix~\ref{append:exp_correlation} introduces CCaMVo (Correlated CaMVo), a practical extension that estimates subset confidence thorugh estimating the correlation matrix among LLMs; this approach yields only marginal gains in accuracy. More broadly, extending CaMVo to account for inter-model correlations in a principled manner—e.g., via joint confidence estimation or diversity-aware subset selection—represents a promising and challenging direction for future work.

% While in Appendix~\ref{append:exp_correlation} introduces CCaMVo (Correlated CaMVo), a practical extension of CaMVo that estimates subset confidence by modeling correlations among LLMs, it shows only marginal improvements in accuracy. Extending CaMVo by accounting for inter‐model correlations in a more robust way (e.g., via joint confidence estimation or diversity‐aware selection) to better find the optimal subset is a promising and challenging direction for future work.

\section{Conclusions}
\label{sec:conclusions}

We have introduced Cost‐aware Majority Voting (CaMVo), the first fully online framework for LLM‐based dataset labeling that jointly adapts both vote weights and the subset of models queried on a per‐instance basis. By combining a LinUCB‐style contextual bandit with a Bayesian Beta‐mixture confidence estimator, CaMVo estimates a lower bound on each LLM’s correctness probability for the given input and selects the minimal‐cost ensemble that meets a user‐specified accuracy threshold.  

Empirical results on the MMLU and IMDB benchmarks demonstrate that CaMVo matches or exceeds full‐ensemble majority‐vote accuracy while reducing labeling cost. On MMLU, CaMVo even surpasses the true Pareto frontier of all possible weighted subsets—despite having no prior knowledge of individual model performance. These findings establish CaMVo as a practical solution for cost‐efficient, automated annotation in dynamic labeling environments without any ground‐truth labels or offline training.

Our analysis assumes independence among LLM outputs, which can be violated in practice and may degrade ensemble gains. Nonetheless, CaMVo still enforces the user’s accuracy target and delivers significant cost savings even under these conditions. 

Our method can be naturally extended beyond classification tasks in several ways. For regression, LLM outputs can be aggregated via a weighted average or median, with weights updated based on how closely each LLM’s output aligns with the aggregate. For ranking tasks, each LLM can assign scores to items, and a weighted combination of these scores would produce a final ranking; weights can then be adjusted according to agreement of the LLM's ranking generated from its scores with the aggregated ranking. These extensions involve only minor modifications to the aggregation and weight update steps, making them straightforward to integrate into our existing framework. Future work will explore diversity‐aware selection and joint confidence models to mitigate correlated errors. We will also extend CaMVo to support iterative relabeling, allowing previously annotated instances to be revisited and refined as additional contextual information becomes available.

\section{Acknowledgements}
We thank Dr.~Samarth Gupta for his thoughtful insights and stimulating discussions that helped shape the initial direction of this paper. This work was supported in part by the Office of Naval Research Grant \# N00014-23-1-2275 and by CyLab Enterprise Security Initiative.  
%We thank Dr.~Samarth Gupta for the useful discussions when preparing the paper. 
The work of Cem Tekin was supported in part by the Scientific and Technological Research Council of Türkiye (TÜBİTAK) under Grant 124E065; by the Turkish Academy of Sciences Distinguished Young Scientist Award Program (TÜBA-GEBİP-2023); and by TÜBİTAK 2024 Incentive Award.

\bibliographystyle{plainnat} 
\bibliography{references}

%%%%%%%%%%%%%%%%%%%%%%%%%%%%%%%%%%%%%%%%%%%%%%%%%%%%%%%%%%%%

\newpage

\appendix

\section{Proof of Lemma \ref{lemma:subset_conf}}
\label{proof:subset_conf}

In order to find a lower bound on the probability of correct labeling, we consider the worst-case where each LLM's probability of correct labeling is exactly equal its lower bound. Hence, assume that each LLM \( l_i \in \mathcal{A} \) produces a correct label with probability \( L_i \), independently of other models. Define the random variable \( Z_i \sim \text{Bernoulli}(L_i) \) to represent whether model \( l_i \) correctly labels the data instance, where \( \mathbb{E}[Z_i] \geq L_i \). The total weight of LLMs that  output the correct label is given by:
\begin{align}
    W_{C,\mathcal{A}} := \sum_{i \in \mathcal{A}} \omega_i \cdot Z_i,
\end{align}
and the total weight of all LLMs in $\mathcal{A}$ is:
\begin{align}
    W_{\mathcal{A}} := \sum_{i \in \mathcal{A}} \omega_i.
\end{align}
Majority voting yields the correct label if the cumulative weight of correctly labeling LLMs exceeds half of the total weight, i.e. when 
\begin{align}
    W_{C,\mathcal{A}} > \frac{W_{\mathcal{A}}}{2}.
\end{align}
Hence, $\delta_{\mathcal{A}}(\boldsymbol L, \boldsymbol \omega)$ can be expressed as
\begin{align}
    \delta_{\mathcal{A}}(\boldsymbol L, \boldsymbol \omega) = \mathbb{P}\left(W_{C,\mathcal{A}} > \frac{W_{\mathcal{A}}}{2} \right).
\end{align}
To compute this probability, we consider all possible label correctness outcomes for the subset \( \mathcal{A} \). Let \( S \subseteq \mathcal{A} \) denote the subset of LLMs that produce correct labels, while \( \mathcal{A} \setminus S \) corresponds to those that produce incorrect labels. The probability of this joint outcome under the independence assumption is
\begin{align}
    \mathbb{P}_S(\boldsymbol L, \boldsymbol \omega) = \prod_{i \in S} L_i \prod_{j \in \mathcal{A} \setminus S} (1 - L_j).
\end{align}

Summing over all subsets \( S \subseteq \mathcal{A} \) for which the total weight of correctly labeling models exceeds half the total weight gives the desired result:
\begin{align}
    \delta_{\mathcal{A}}(\boldsymbol L, \boldsymbol \omega) = \sum_{\substack{S \subseteq \mathcal{A} \\ \sum_{r \in S} \omega_r > \frac{W_{\mathcal{A}}}{2}}} \prod_{i \in S} L_i \prod_{j \in \mathcal{A} \setminus S} (1 - L_j).
\end{align}

\section{Estimating the Shape Parameters of the Beta Distribution} \label{append:beta_updates}

In this section, we present two methods for estimating the shape parameters of the Beta distributions used in CaMVo. The first is a maximum-likelihood estimation (MLE) approach that yields a closed-form system of equations, while the second is an efficient approximation based on the method of moments. Due to its computational practicality, the second method is used in our experiments.
% In this section, we provide two different methods to update the shape parameters of the Beta distribution of CaMVo; first of which is a closed-form derivation, and the second is an approximated method for use in practice. Note that we use the second method when running the experiments. 

\paragraph{Maximum-likelihood Estimation.} $\alpha_{i,1}$ and $\beta_{i,1}$ for the Beta distribution $\mathrm{Beta}_i(\alpha_{i,1}, \beta_{i,1})$ corresponding to LLM $l_i$ can be estimated by maximizing the log-likelihood:
\begin{align}
\ell_i(\alpha_{i,1}, \beta_{i,1}) = (\alpha_{i,1} - 1) \sum_{s=1}^t \ln q_i(e_s, s) + (\beta_{i,1} - 1) \sum_{s=1}^t \ln(1 - q_i(e_s, s)) - t \ln B(\alpha_{i,1}, \beta_{i,1})
\end{align}
 Taking derivatives with respect to \( \alpha_{i,1} \) and \( \beta_{i,1} \) and setting them to zero yields the MLE system:
\begin{align}
\frac{\partial \ell_i}{\partial \alpha_{i,1}} &= \sum_{s=1}^t \ln q_i(e_s, s) - t \left( \psi(\alpha_{i,1}) - \psi(\alpha_{i,1} + \beta_{i,1}) \right) = 0 \\
\frac{\partial \ell_i}{\partial \beta_{i,1}} &= \sum_{s=1}^t \ln (1 - q_i(e_s, s)) - t \left( \psi(\beta_{i,1}) - \psi(\alpha_{i,1} + \beta_{i,1}) \right) = 0 \label{ch4:eq:update_est1}
\end{align}
where $\psi(\cdot)$ is the digamma function \( \psi(x) = \frac{d}{dx} \ln \Gamma(x) \). Solving this system yields the MLE estimates for the parameters of each LLM. However, these equations are nonlinear and hence solving them can be computationally expensive. To address this, we employ an alternative estimation procedure based on the method of moments.

% where \( \psi(\cdot) \) is the digamma function \( \psi(x) = \frac{d}{dx} \ln \Gamma(x) \). The parameters $\alpha_{i,1}$ and $\beta_{i,1}$ for LLM $l_i$ can be updated each round using the solution of these equations. The parameters $\alpha_{i,0}$ and $\beta_{i,0}$, or the parameters for other LLMs can be updated similarly. However, solving this set of equations is not practical as the equations are nonlinear. Due to this, we provide a second method, called the {\em method-of-moments}, below. 

\paragraph{Method-of-moments.} This approach provides a computationally efficient and sufficiently accurate alternative for parameter estimation and is used in our experimental pipeline in \S ~\ref{sec:exps}. For each LLM $l_i$, we compute sample statistics separately for rounds in which $l_i$'s output matched the predicted label, and the rounds in which it did not match. Let $S_{i,t} = \{s : h_{i,s}=1, s\leq t \}$ be the set of rounds $s$ until $t$ where $h_{i,s}=1$. 
The empirical mean and variance for each case can be computed as:

% produced correct and incorrect labels. Let $S_{i,t} = \{s : h_{i,s}=1, s \leq t\}$ denote the set of rounds where LLM $l_i$ was correct. We define:

% This method approximates the parameter updates using the method-of-moments. This method is computationally efficient and sufficiently accurate in many practical settings, and is the method we use in running the experiments in \S ~\ref{sec:exps}. In this method, first, the sample mean and variance of the confidence estimates are calculated for each outcome; i.e. when LLM $l_i$ generates the correct label, and when it generates an incorrect label.
% Let $S_{i,t} = \{s : h_{i,s}=1, s\leq t \}$ be the set of rounds $s$ until $t$ where $h_{i,s}=1$. 
% The sample mean and variance for each outcome are computed as:
\begin{align}
   \bar{q}_{i,1} &= \frac{1}{|S_{i,t}|} \sum_{s \in S_{i,t}} q_{i,s}(\bf{e}_s) ,\quad  && v_{i,1}^2 = \frac{1}{|S_{i,t}|} \sum_{s\in  S_{i,t}} (q_{i,s}(\boldsymbol{e}_s) - \bar{q}_{i,1})^2 \\
   \bar{q}_{i,0} &= \frac{1}{t - |S_{i,t}|} \sum_{s\in [t] \setminus S_{i,t}} q_{i,s}(\boldsymbol{e}_s),\quad   &&v_{i,0}^2= \frac{1}{t - |S_{i,t}|} \sum_{s\in [t] \setminus S_{i,t}} (q_{i,s}(\boldsymbol{e}_s) - \bar{q}_{i,0})^2
\end{align}
Using the empirical means and variances, we define:
\begin{align}
    \nu_{i,1} = \frac{\bar{q}_{i,1} (1 - \bar{q}_{i,1})}{v_{i,1}^2} - 1 , \quad \quad 
    \nu_{i,0} = \frac{\bar{q}_{i,0} (1 - \bar{q}_{i,0})}{v_{i,0}^2} - 1
\end{align}

We estimate the Beta distribution parameters using the following proposition.
\begin{prop} \label{app:prop_method_moments}
Let $q \sim \text{Beta}(\alpha, \beta)$ be a Beta-distributed random variable with unknown parameters $\alpha$ and $\beta$, and let $\{q_1, \dots, q_n\}$ be observed samples with sample mean $m = \bar{q}$ and variance $s^2$. Then, the method-of-moments estimates are:
\begin{align}
\hat{\alpha} = m \cdot \nu, \quad \hat{\beta} = (1 - m) \cdot \nu, \quad \text{where } \nu = \frac{m(1 - m)}{s^2} - 1.
\end{align}
\end{prop}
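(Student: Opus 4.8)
The plan is to apply the \emph{method of moments}: equate the first two population moments of the Beta distribution to their empirical counterparts $m$ and $s^2$, and then solve the resulting two-equation system for $\alpha$ and $\beta$. This is a purely algebraic derivation, with no probabilistic subtleties beyond recalling the closed-form moments of the Beta law.

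First I would record the standard moment formulas for $q \sim \text{Beta}(\alpha,\beta)$, namely the mean
\begin{align}
\mathbb{E}[q] = \frac{\alpha}{\alpha+\beta}
\end{align}
and the variance
\begin{align}
\mathrm{Var}(q) = \frac{\alpha\beta}{(\alpha+\beta)^2(\alpha+\beta+1)}.
\end{align}
Matching these to the sample statistics yields the two defining equations $m = \alpha/(\alpha+\beta)$ and $s^2 = \alpha\beta / [(\alpha+\beta)^2(\alpha+\beta+1)]$, which must be inverted to express $\alpha$ and $\beta$ in terms of $m$ and $s^2$.

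Next I would introduce the concentration parameter $\nu := \alpha+\beta$ as the key change of variables. The mean equation immediately gives $\alpha = m\nu$, and since $\beta = \nu - \alpha$, also $\beta = (1-m)\nu$. Substituting these into the variance equation collapses the numerator $\alpha\beta = m(1-m)\nu^2$ against the factor $(\alpha+\beta)^2 = \nu^2$, leaving the clean relation $s^2 = m(1-m)/(\nu+1)$. Solving for $\nu$ produces $\nu = m(1-m)/s^2 - 1$, exactly the quantity in the statement; back-substituting then recovers $\hat\alpha = m\nu$ and $\hat\beta = (1-m)\nu$.

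There is no genuine obstacle here — the result is a standard exercise — but the one point worth flagging is that the stated $\nu$ is precisely the estimated concentration $\alpha+\beta$, and it is this identification that makes the back-substitution $\hat\alpha = m\nu$, $\hat\beta = (1-m)\nu$ reproduce the moment equations exactly. I would also note, as an admissibility remark, that the estimator is well-defined and yields positive shape parameters only when $s^2 < m(1-m)$, i.e. when the sample variance lies below the maximal variance $m(1-m)$ attainable by any $[0,1]$-valued random variable with mean $m$; this is the condition one should keep in mind when invoking the proposition inside the CaMVo update step.
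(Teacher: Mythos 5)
Your proposal is correct and follows the same route as the paper's proof, which simply states the Beta mean and variance formulas and asserts that solving the moment-matching system yields the result; you have merely filled in the algebra (the change of variables $\nu=\alpha+\beta$ and the cancellation in the variance equation) that the paper leaves implicit. Your added remark on the admissibility condition $s^2 < m(1-m)$ is a sensible observation not present in the paper, but it does not change the argument.
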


\begin{proof}
The Beta distribution has mean and variance:
\[
\mathbb{E}[q] = \frac{\alpha}{\alpha + \beta}, \quad \mathrm{Var}[q] = \frac{\alpha \beta}{(\alpha + \beta)^2 (\alpha + \beta + 1)}.
\]
Substituting $m = \bar{q}$ to $\mathbb{E}[q]$, and $s^2$ to $\mathrm{Var}[q]$; and solving for $\alpha$ and $\beta$ yields the expressions for $\hat{\alpha}$ and $\hat{\beta}$ as stated.
% Let \( m = \bar{q} \) denote the empirical mean and \( s^2 \) the empirical variance computed from the observed samples. Solving the above expressions via the method of moments, we first define
% \[
% \nu = \frac{m(1 - m)}{s^2} - 1,
% \]
% which leads to the parameter estimates: $\hat{\alpha} = m \cdot \nu, \quad \hat{\beta} = (1 - m) \cdot \nu.$ 
\end{proof}
Using Proposition \ref{app:prop_method_moments}, the parameters can be updated as
\begin{align}
    \alpha_{i,1} &= \bar{q}_{i,1} \cdot \nu_1 \\ \beta_{i,1} &= (1 - \bar{q}_{i,1}) \cdot \nu_1 \\
\alpha_{i,0} &= \bar{q}_{i,0} \cdot \nu_0 \\
\beta_{i,0} &= (1 - \bar{q}_{i,0}) \cdot \nu_0 \label{eq:update_est2}
\end{align}
% To ensure numerical stability, we clip small variance values using a threshold \( \epsilon > 0 \) to avoid division by zero when the sample variance is extremely small.
To ensure numerical stability, we clip small variance values below a threshold $\epsilon > 0$ to prevent division by near-zero values.

\section{The Baseline Algorithm}
\label{append:baseline}

The pseudocode of the {\em Baseline Algorithm} is provided below in Algorithm \ref{alg:baseline}.

\begin{algorithm}
\caption{Baseline Algorithm (Online Weighted Majority)}
\label{alg:baseline}
\begin{algorithmic}[1]
\State \textbf{Input:} The set of LLMs $[K]$, dataset to label $\mathcal{D}$
% \State Initialize: $\omega_i(0) = 1$ for each arm $i \in [K]$
\For{each round $t = 1, 2, \ldots, T$}
    \State Query all LLMs: $y_{i,t} = l_i(x_t)$
    \State   $\hat{y}_t \gets \arg\max_{m \in [M]} \sum_{i=1}^{K} \omega_{\text{def},i}(t) \cdot \ind{y_{i,t} = m}$
    \State Generate rewards for LLMs: $r_{i,t} = \ind{y_{i,t} = \hat{y}_t}$
    \State Update LLM weights: 
    $\omega_{\text{def},i}(t) = \frac{\sum_{s=1}^{t} \mathds{1}\{y_{i,s} = \hat{y}_s\} }{N_{i,t} }$
\EndFor
\end{algorithmic}
\end{algorithm}

% \section{Experiments}
% \label{append:exps}

\section{Supplementary Details for Experiments on the MMLU Dataset}
\label{append:exp_MMLU}

This section provides additional details regarding our experimental setup for the MMLU dataset.

First, to improve computational efficiency, we approximate the confidence score \(\delta_{\mathcal{A}}(\boldsymbol{L}, \boldsymbol{\omega})\) using the cumulative distribution function (CDF) of the Beta distribution rather than the closed-form expression in Lemma~\ref{lemma:subset_conf}:
\[
\delta_{\mathcal{A}}(\boldsymbol L, \boldsymbol \omega) \approx 1 - F_{\text{Beta}}\left(0.5;\, W_{L,\mathcal{A}},\, W_{\mathcal{A}} - W_{L,\mathcal{A}} \right),
\]
where \(F_{\text{Beta}}(x; \alpha, \beta)\) is the CDF of a \(\text{Beta}(\alpha, \beta)\) distribution, \(W_{L,\mathcal{A}} = \sum_{i \in \acttset} \omega_i \cdot L_i\), and \(W_{\mathcal{A}} = \sum_{i \in \acttset} \omega_i\).

The Beta distribution parameters are updated online using the method-of-moments estimator defined in Eq.~\eqref{eq:update_est2}, with a regularization term \(\epsilon = 10^{-6}\).

We query LLMs using a consistent format tailored to the multiple-choice structure of MMLU. The standard prompt template is shown below:

\begin{tcolorbox}[colback=gray!5!white, colframe=black!75!black, title=Query Format for MMLU Dataset]
\textbf{System:} Select the correct answer. Answer with A, B, C, or D only. \\
\textbf{User:} Question: \texttt{<question>} \\
A. \texttt{<choice-A>} \\
B. \texttt{<choice-B>} \\
C. \texttt{<choice-C>} \\
D. \texttt{<choice-D>} \\
\\
Answer:
\end{tcolorbox}

If the LLM API does not support a system instruction prompt, the instruction is prepended directly to the user message. An example query, using an actual MMLU question, is shown below:

\begin{tcolorbox}[colback=gray!5!white, colframe=black!75!black, title=Example Query for MMLU Dataset]
\textbf{System:} Select the correct answer. Answer with A, B, C, or D only. \\
\textbf{User:} Question: Find the degree for the given field extension \(\mathbb{Q}(\sqrt{2}, \sqrt{3}, \sqrt{18})\) over \(\mathbb{Q}\). \\
A. 0 \\
B. 4 \\
C. 2 \\
D. 6 \\
\\
Answer:
\end{tcolorbox}

We apply a single random permutation to the dataset and maintain this identical ordering across all methods to ensure a fair and consistent comparison (except in experiments in Appendix \ref{append:exp_MMLU_extra} where we analyze the sensitivity of CaMVo to dataset ordering).

\subsection{Additional Experimental Results }
\label{append:exp_MMLU_extra}

\begin{table}[h]
\centering
\begin{tabular}{c c c c c c c c}
\toprule
 $\bf \delta$ &  $\bf label$ & \makecell{ \textbf{Precision,} \\ $\bf k_{\min}=1$ } & \makecell{ \textbf{Recall,} \\ $\bf k_{\min}=1$ } & \makecell{ \textbf{F1 Score,} \\ $\bf k_{\min}=1$ } & \makecell{ \textbf{Precision,} \\ $\bf k_{\min}=3$ } & \makecell{ \textbf{Recall,} \\ $\bf k_{\min}=3$ } & \makecell{ \textbf{F1 Score,} \\ $\bf k_{\min}=3$ } 
\\
\midrule
0.99 & 0 & 0.87 & 0.86 & 0.87 & 0.87 & 0.86 & 0.87 \\
0.99 & 1 & 0.87 & 0.89 & 0.88 & 0.87 & 0.89 & 0.88 \\
0.99 & 2 & 0.88 & 0.9 & 0.89 & 0.88 & 0.9 & 0.89 \\
0.99 & 3 & 0.91 & 0.89 & 0.9 & 0.91 & 0.89 & 0.9 \\
0.98 & 0 & 0.88 & 0.86 & 0.87 & 0.88 & 0.86 & 0.87 \\
0.98 & 1 & 0.88 & 0.89 & 0.88 & 0.88 & 0.89 & 0.88 \\
0.98 & 2 & 0.88 & 0.9 & 0.89 & 0.88 & 0.9 & 0.89 \\
0.98 & 3 & 0.91 & 0.9 & 0.9 & 0.91 & 0.9 & 0.9 \\
0.975 & 0 & 0.88 & 0.85 & 0.87 & 0.88 & 0.85 & 0.87 \\
0.975 & 1 & 0.87 & 0.89 & 0.88 & 0.87 & 0.89 & 0.88 \\
0.975 & 2 & 0.88 & 0.9 & 0.89 & 0.88 & 0.9 & 0.89 \\
0.975 & 3 & 0.91 & 0.9 & 0.9 & 0.91 & 0.9 & 0.9 \\
0.97 & 0 & 0.88 & 0.85 & 0.86 & 0.88 & 0.85 & 0.86 \\
0.97 & 1 & 0.87 & 0.89 & 0.88 & 0.87 & 0.89 & 0.88 \\
0.97 & 2 & 0.88 & 0.9 & 0.89 & 0.88 & 0.9 & 0.89 \\
0.97 & 3 & 0.91 & 0.9 & 0.9 & 0.91 & 0.9 & 0.9 \\
0.965 & 0 & 0.89 & 0.85 & 0.87 & 0.89 & 0.85 & 0.87 \\
0.965 & 1 & 0.86 & 0.88 & 0.87 & 0.86 & 0.88 & 0.87 \\
0.965 & 2 & 0.88 & 0.9 & 0.89 & 0.88 & 0.9 & 0.89 \\
0.965 & 3 & 0.91 & 0.9 & 0.9 & 0.91 & 0.9 & 0.9 \\
0.96 & 0 & 0.88 & 0.85 & 0.86 & 0.88 & 0.85 & 0.86 \\
0.96 & 1 & 0.86 & 0.88 & 0.87 & 0.86 & 0.88 & 0.87 \\
0.96 & 2 & 0.87 & 0.89 & 0.88 & 0.87 & 0.89 & 0.88 \\
0.96 & 3 & 0.9 & 0.89 & 0.9 & 0.9 & 0.89 & 0.9 \\
0.955 & 0 & 0.87 & 0.84 & 0.86 & 0.87 & 0.84 & 0.86 \\
0.955 & 1 & 0.86 & 0.88 & 0.87 & 0.86 & 0.88 & 0.87 \\
0.955 & 2 & 0.87 & 0.88 & 0.88 & 0.87 & 0.88 & 0.88 \\
0.955 & 3 & 0.89 & 0.89 & 0.89 & 0.89 & 0.89 & 0.89 \\
0.95 & 0 & 0.87 & 0.84 & 0.85 & 0.87 & 0.84 & 0.85 \\
0.95 & 1 & 0.85 & 0.87 & 0.86 & 0.85 & 0.87 & 0.86 \\
0.95 & 2 & 0.87 & 0.87 & 0.87 & 0.87 & 0.87 & 0.87 \\
0.95 & 3 & 0.89 & 0.89 & 0.89 & 0.89 & 0.89 & 0.89 \\
0.9 & 0 & 0.85 & 0.82 & 0.84 & 0.85 & 0.82 & 0.84 \\
0.9 & 1 & 0.84 & 0.85 & 0.84 & 0.84 & 0.85 & 0.84 \\
0.9 & 2 & 0.85 & 0.85 & 0.85 & 0.85 & 0.85 & 0.85 \\
0.9 & 3 & 0.86 & 0.87 & 0.87 & 0.86 & 0.87 & 0.87 \\
0.85 & 0 & 0.84 & 0.82 & 0.83 & 0.84 & 0.82 & 0.83 \\
0.85 & 1 & 0.81 & 0.86 & 0.84 & 0.81 & 0.86 & 0.84 \\
0.85 & 2 & 0.85 & 0.84 & 0.84 & 0.85 & 0.84 & 0.84 \\
0.85 & 3 & 0.87 & 0.85 & 0.86 & 0.87 & 0.85 & 0.86 \\
0.8 & 0 & 0.8 & 0.82 & 0.81 & 0.8 & 0.82 & 0.81 \\
0.8 & 1 & 0.76 & 0.87 & 0.81 & 0.76 & 0.87 & 0.81 \\
0.8 & 2 & 0.85 & 0.8 & 0.82 & 0.85 & 0.8 & 0.82 \\
0.8 & 3 & 0.88 & 0.8 & 0.84 & 0.88 & 0.8 & 0.84 \\
0.75 & 0 & 0.59 & 0.73 & 0.65 & 0.59 & 0.73 & 0.65 \\
0.75 & 1 & 0.71 & 0.65 & 0.68 & 0.71 & 0.65 & 0.68 \\
0.75 & 2 & 0.76 & 0.63 & 0.69 & 0.76 & 0.63 & 0.69 \\
0.75 & 3 & 0.72 & 0.74 & 0.73 & 0.72 & 0.74 & 0.73 \\
0.7 & 0 & 0.58 & 0.73 & 0.65 & 0.58 & 0.73 & 0.65 \\
0.7 & 1 & 0.71 & 0.64 & 0.67 & 0.71 & 0.64 & 0.67 \\
0.7 & 2 & 0.75 & 0.63 & 0.68 & 0.75 & 0.63 & 0.68 \\
0.7 & 3 & 0.71 & 0.74 & 0.73 & 0.71 & 0.74 & 0.73 \\
% MJV & 0 & 0.87 & 0.86 & 0.86 \\
% MJV & 1 & 0.87 & 0.88 & 0.88 \\
% MJV & 2 & 0.88 & 0.89 & 0.89 \\
% MJV & 3 & 0.91 & 0.89 & 0.9 \\
\bottomrule
\end{tabular}
\caption{Precision, recall, and F1 scores of CaMVo with $k_{\min}=1$ and $k_{\min}=3$ across different confidence thresholds $\delta$ for each label category on the MMLU dataset.
}
\label{tab:append_MMLU_f1}
\end{table}

\begin{table}[h]
\centering
\begin{tabular}{c c c c c }
\toprule
  &  $\bf label$ & \textbf{Precision} & 
 \textbf{Recall} & \textbf{F1 Score} 
\\
\midrule
Maj. voting & 0 & 0.87 & 0.86 & 0.86 \\
Maj. voting & 1 & 0.87 & 0.88 & 0.88 \\
Maj. voting & 2 & 0.88 & 0.89 & 0.89 \\
Maj. voting & 3 & 0.91 & 0.89 & 0.9 \\
\bottomrule
\end{tabular}
\caption{Precision, recall and F1 score of majority voting over each output category on the MMLU dataset. 
}
\label{tab:append_MMLU_f1a}
\end{table}

To facilitate a more detailed analysis of the results in Table~\ref{tab:exp_mmlu_camvo}, for each label category in Table~\ref{tab:append_MMLU_f1} we report precision, recall, and F1 score as additional performance metrics. For comparison, the corresponding results for majority voting are presented in Table~\ref{tab:append_MMLU_f1a}. These metrics allow to evaluate category-wise performance. The results indicate that category~$0$ is slightly more challenging to label, as reflected by its consistently lower scores across all metrics. Furthermore, as the confidence threshold $\delta$ decreases, all three metrics degrade across label categories, with a more pronounced decline observed for category~$0$, likely due to its possibly higher intrinsic difficulty.

\begin{figure}[ht]
  \centering
  \begin{tabular}{ccc}
    \includegraphics[width=0.32\linewidth]{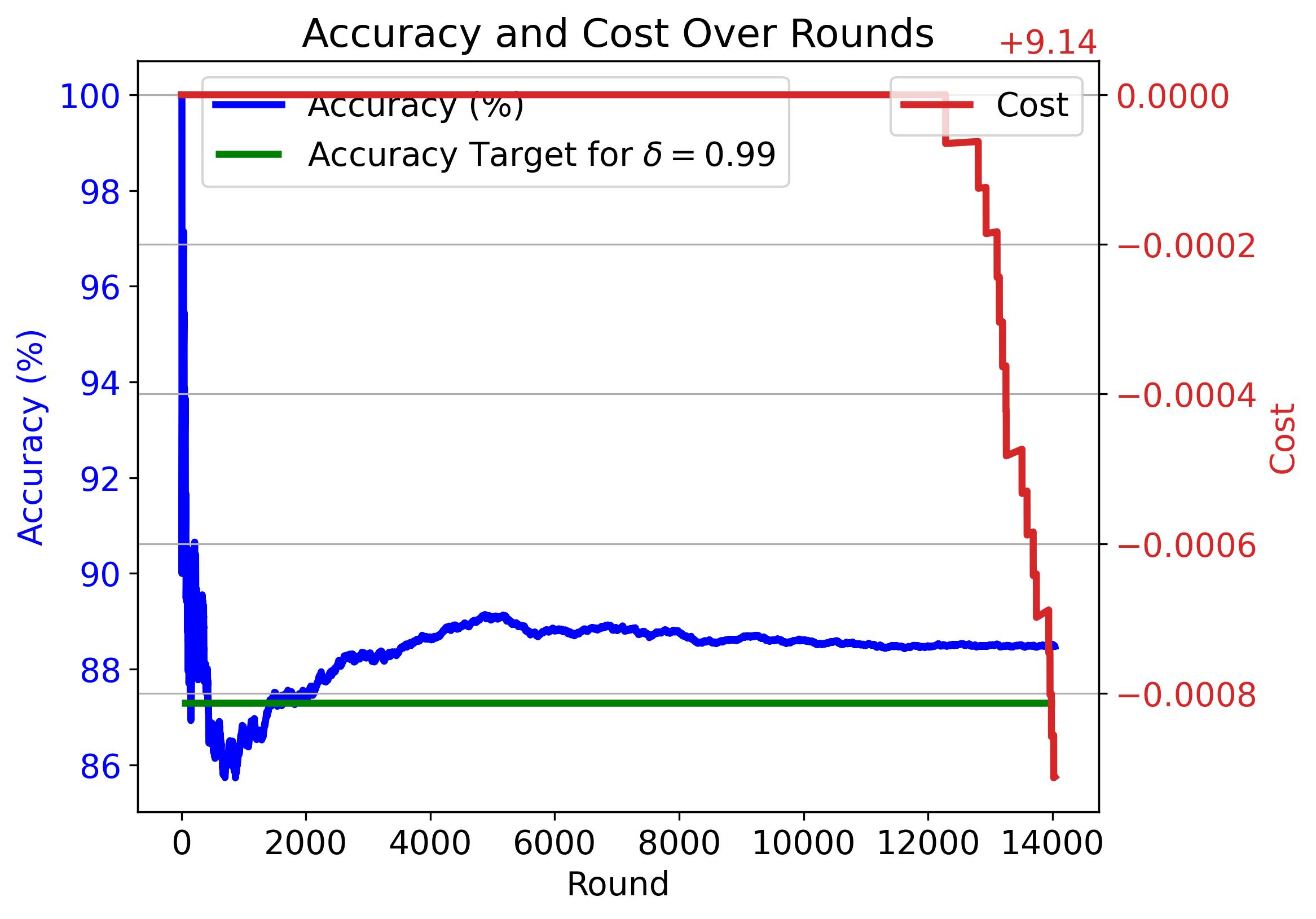} &
    \includegraphics[width=0.32\linewidth]{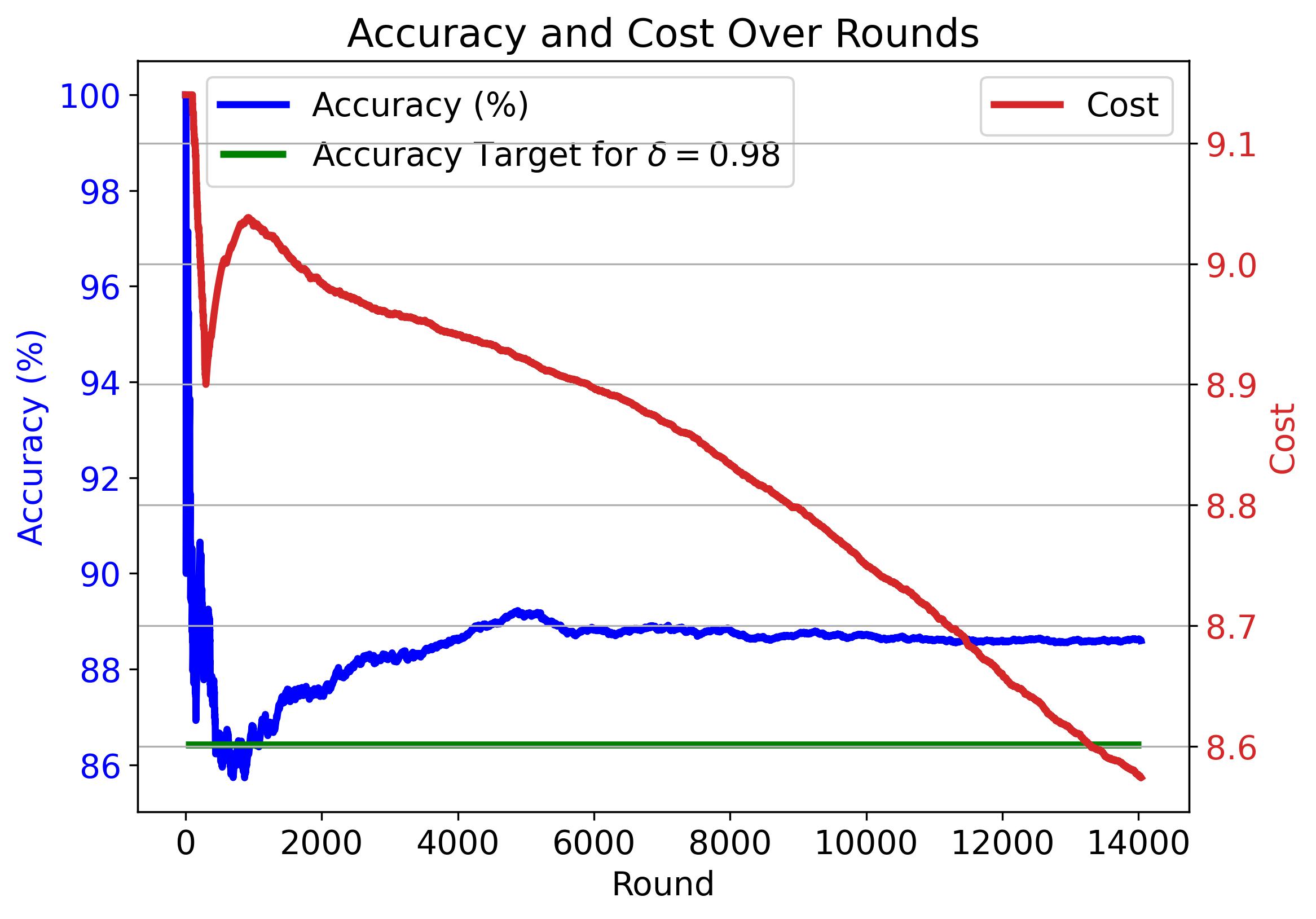} &
    \includegraphics[width=0.32\linewidth]{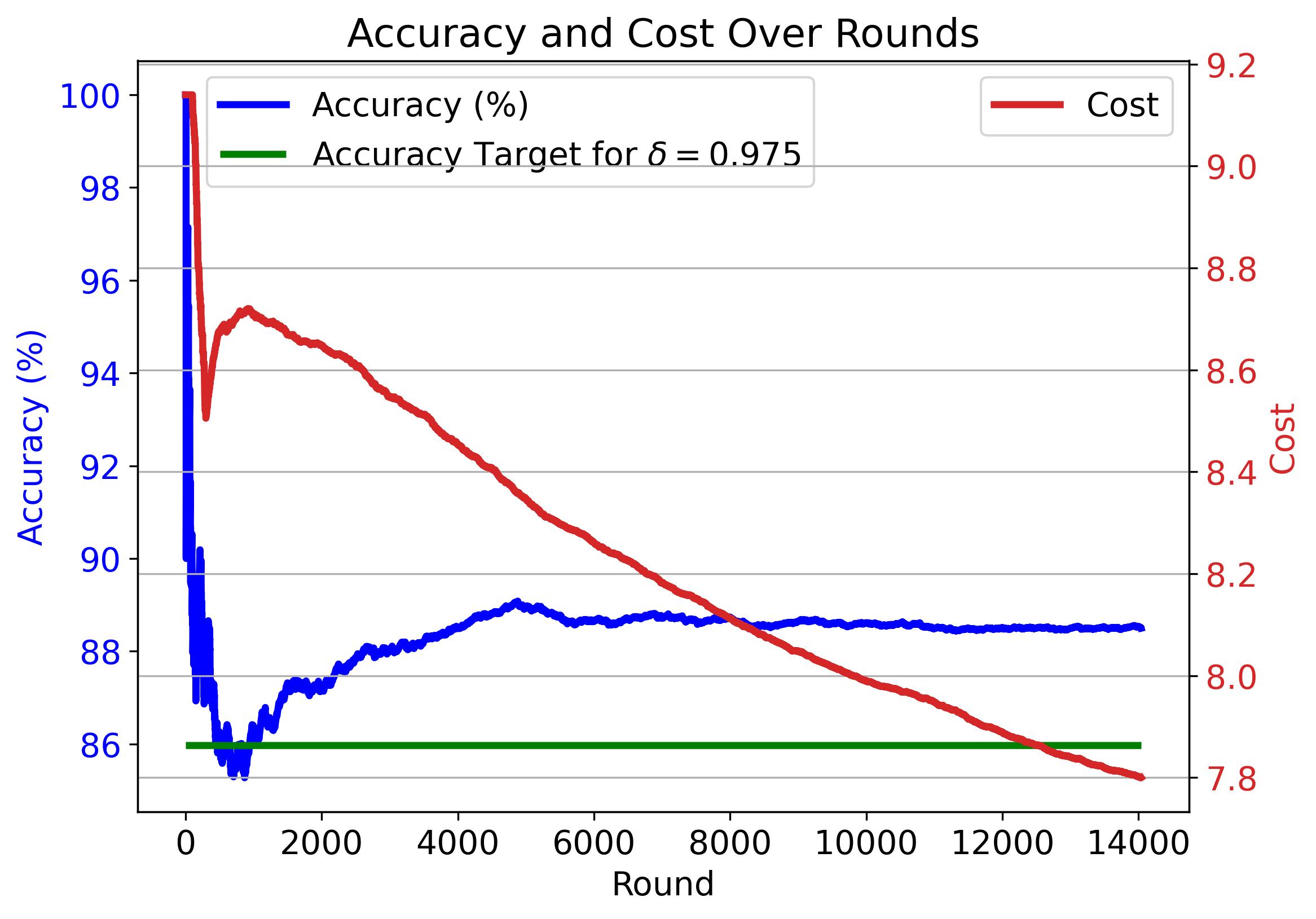} \\
    \includegraphics[width=0.32\linewidth]{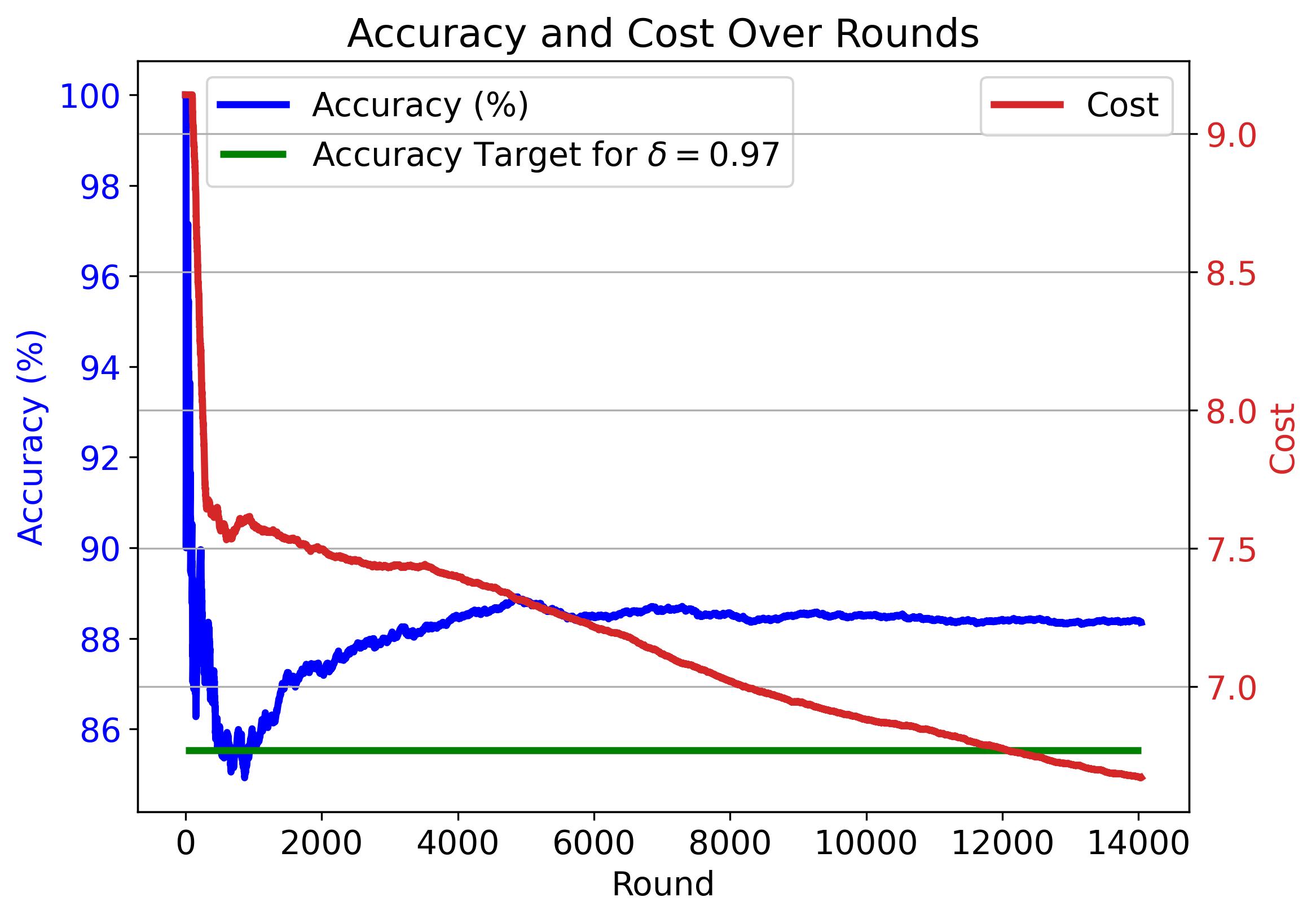} &
    \includegraphics[width=0.32\linewidth]{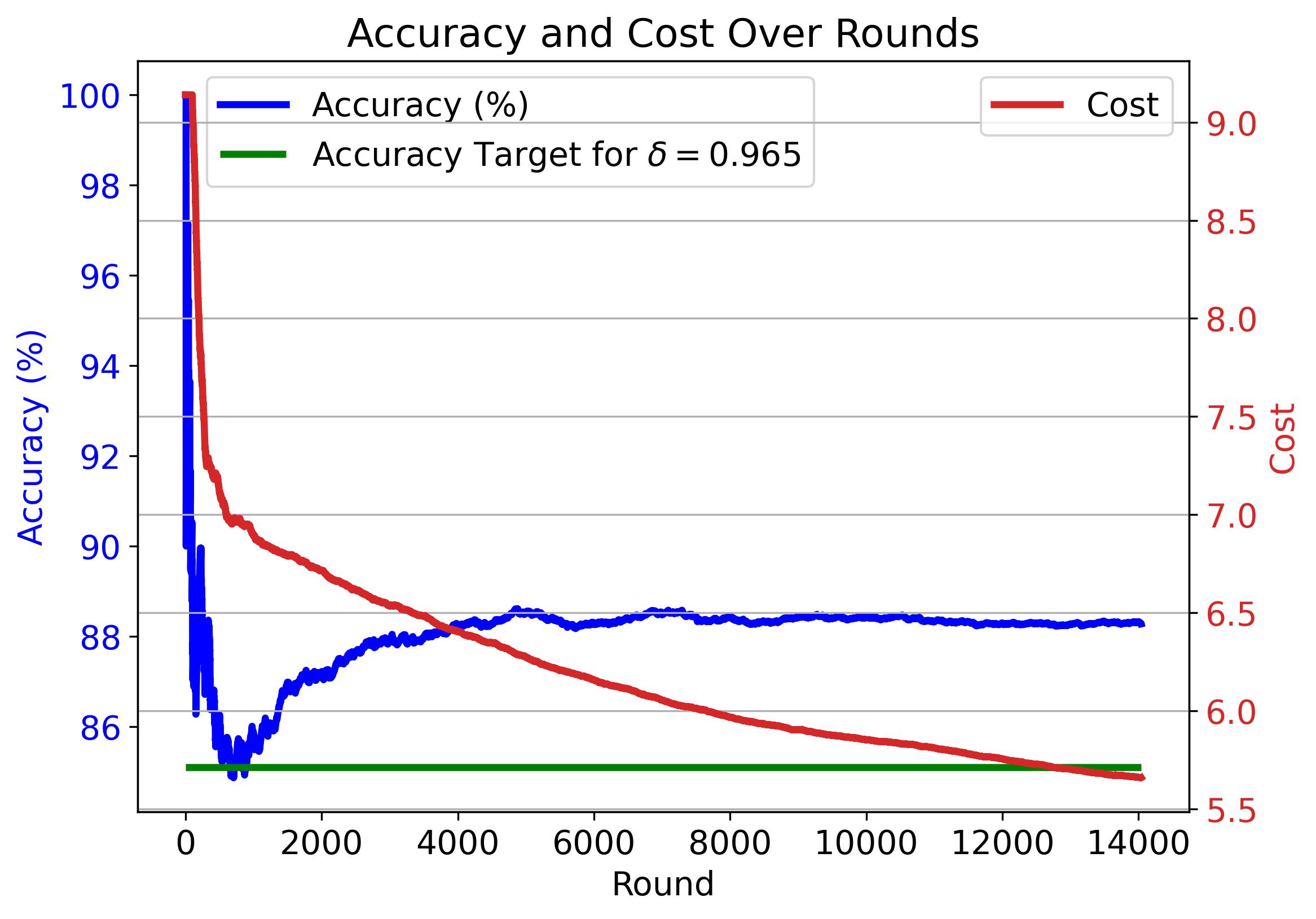} &
    \includegraphics[width=0.32\linewidth]{regret596.jpg} \\
    \includegraphics[width=0.32\linewidth]{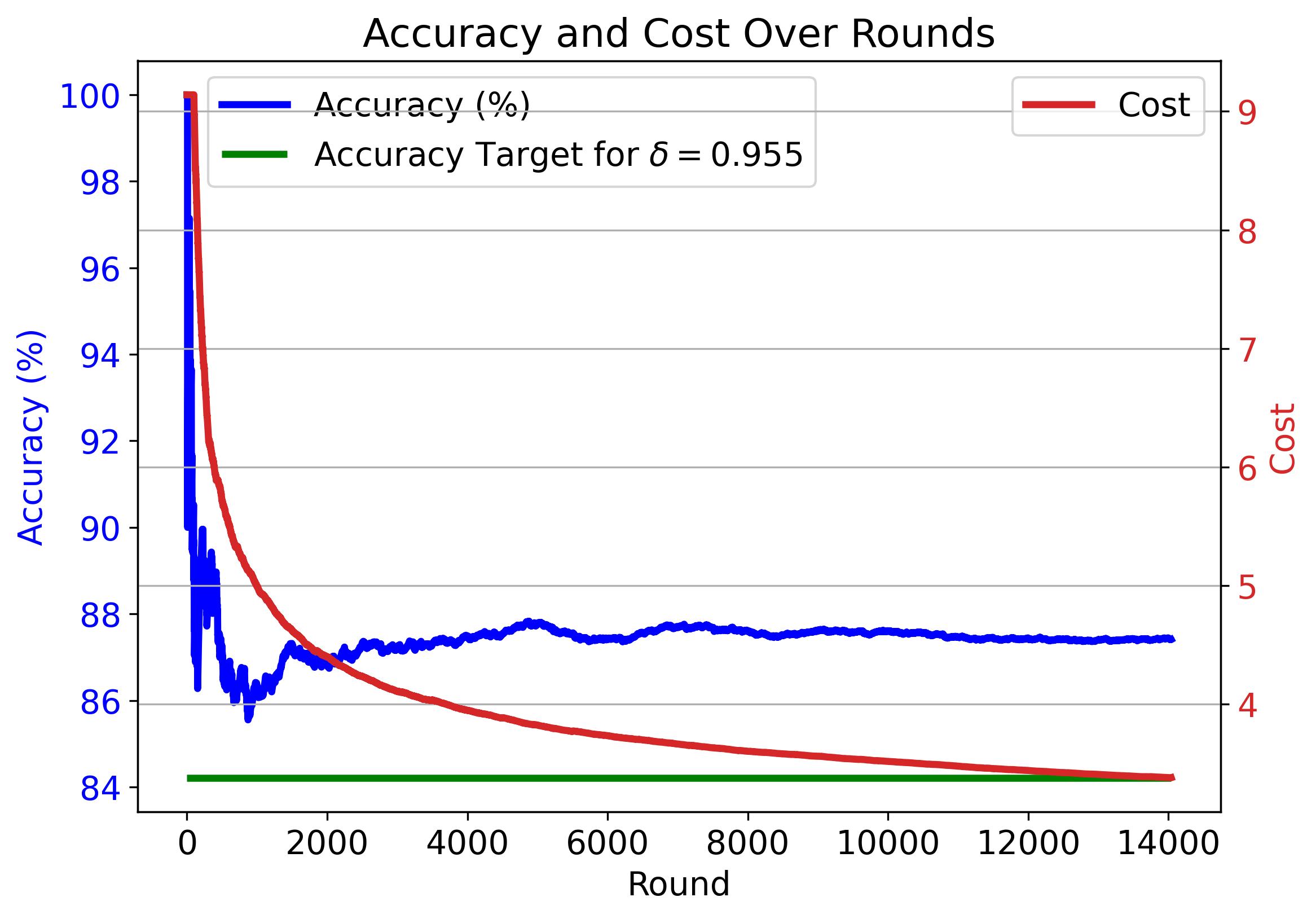} &
    \includegraphics[width=0.32\linewidth]{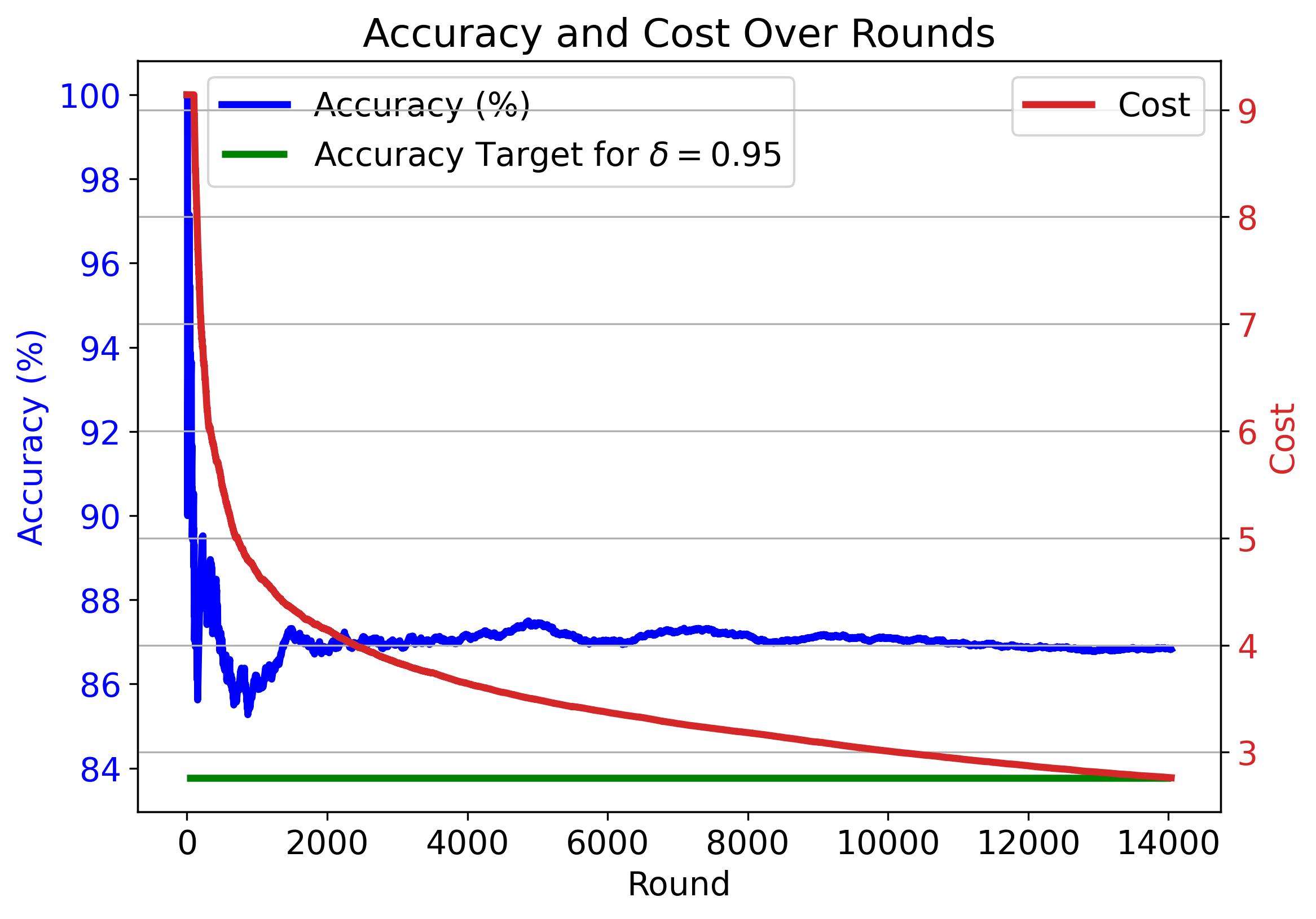} &
    \includegraphics[width=0.32\linewidth]{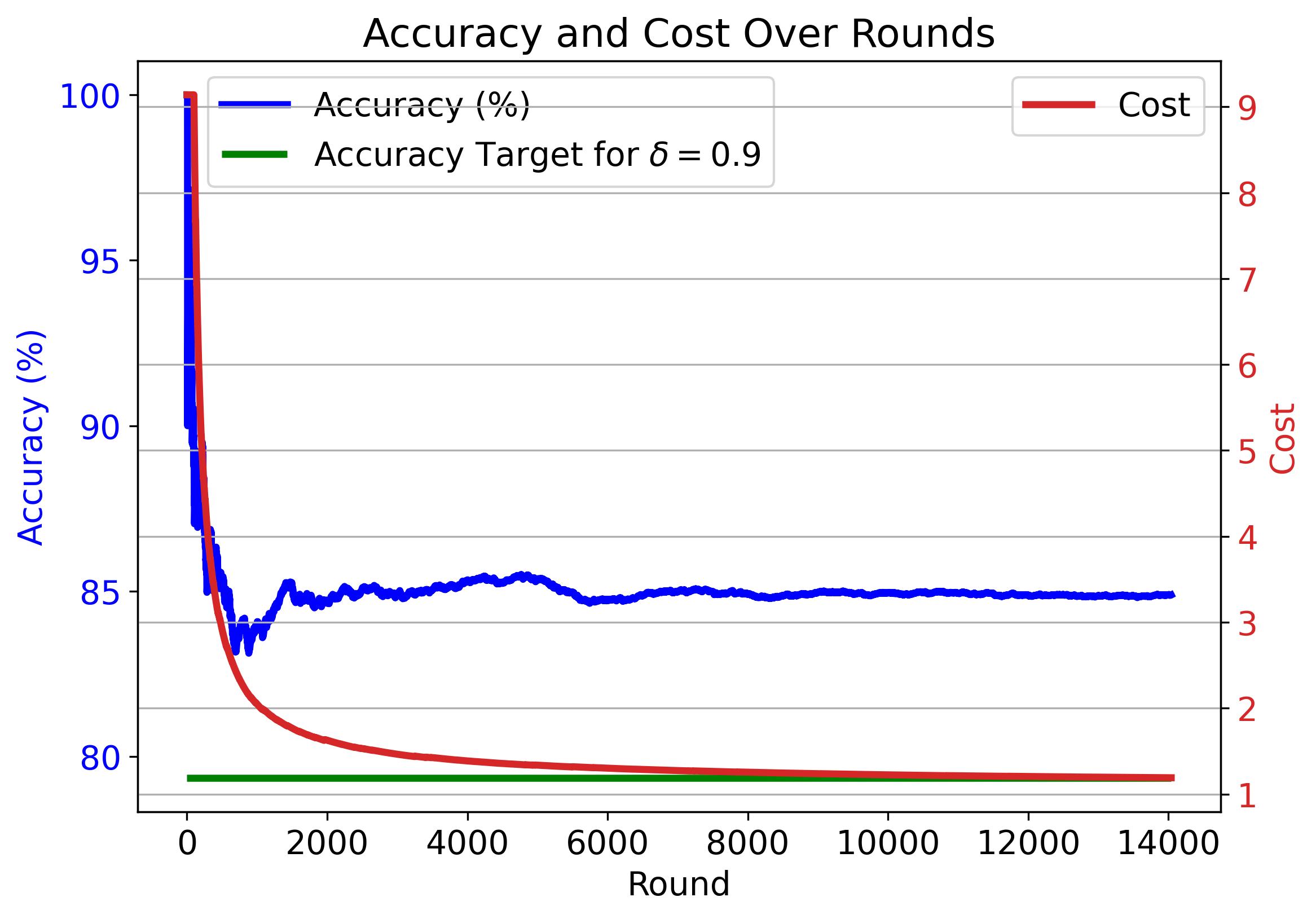} \\
    \includegraphics[width=0.32\linewidth]{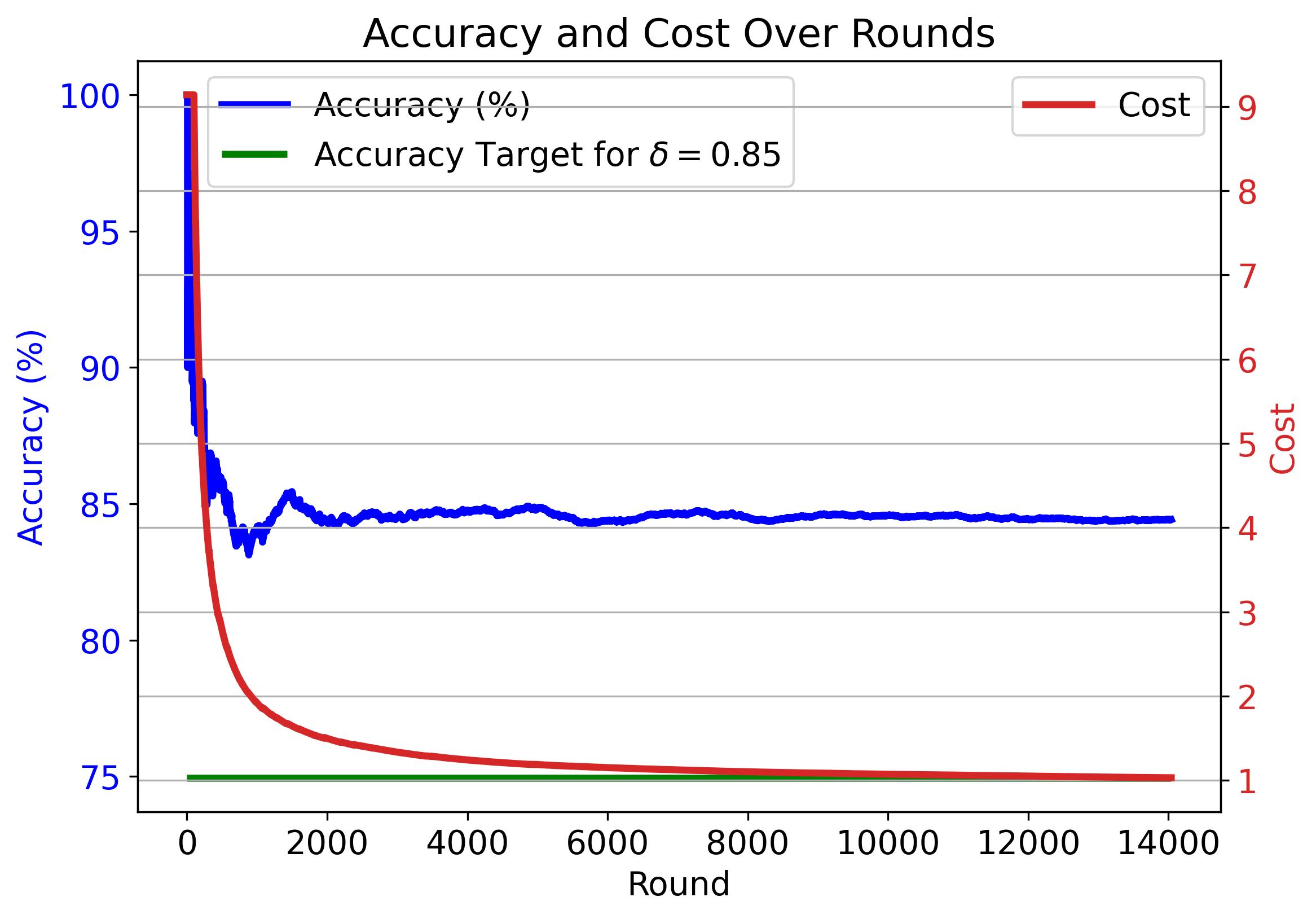} &
    \includegraphics[width=0.32\linewidth]{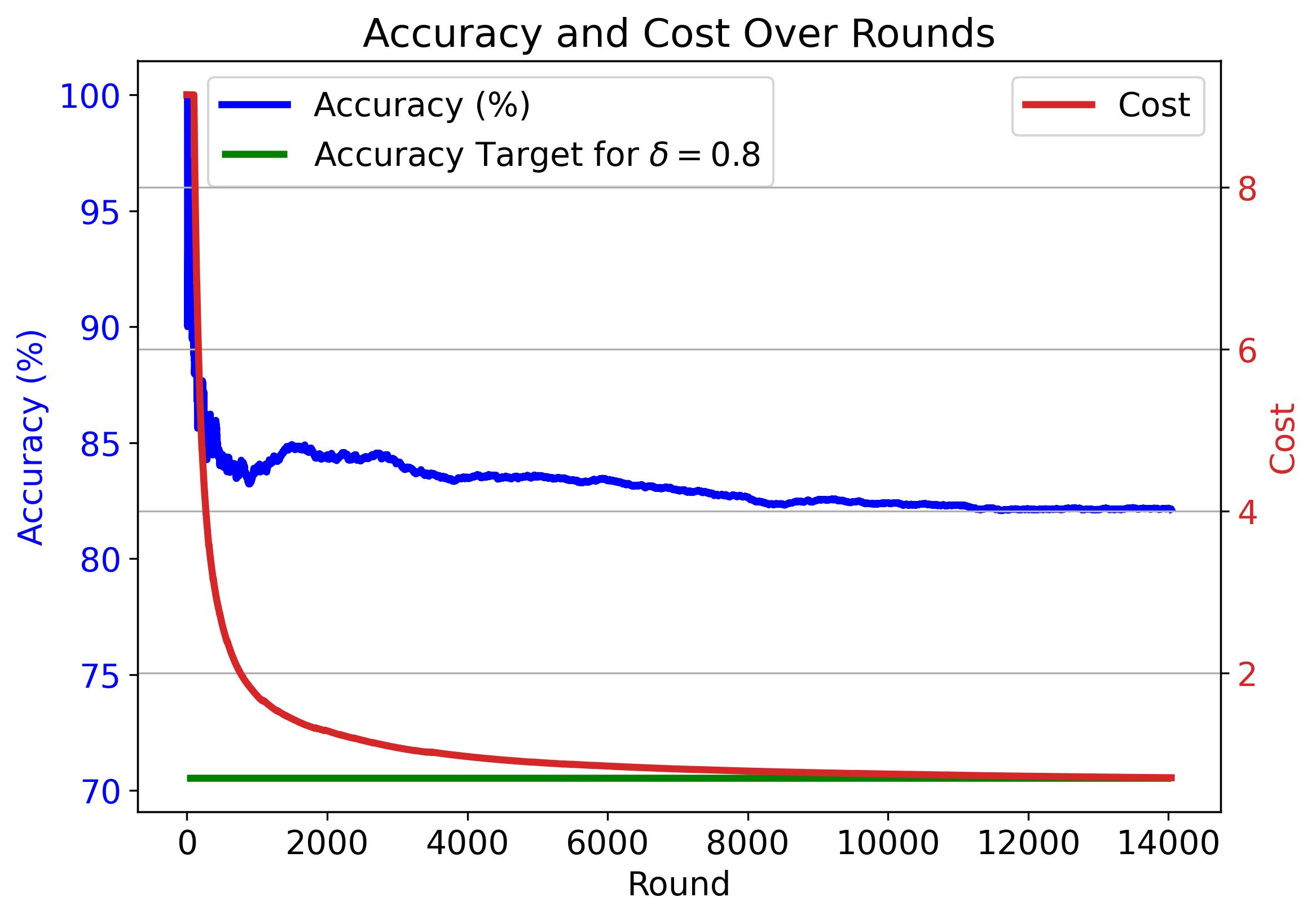} &
    \includegraphics[width=0.32\linewidth]{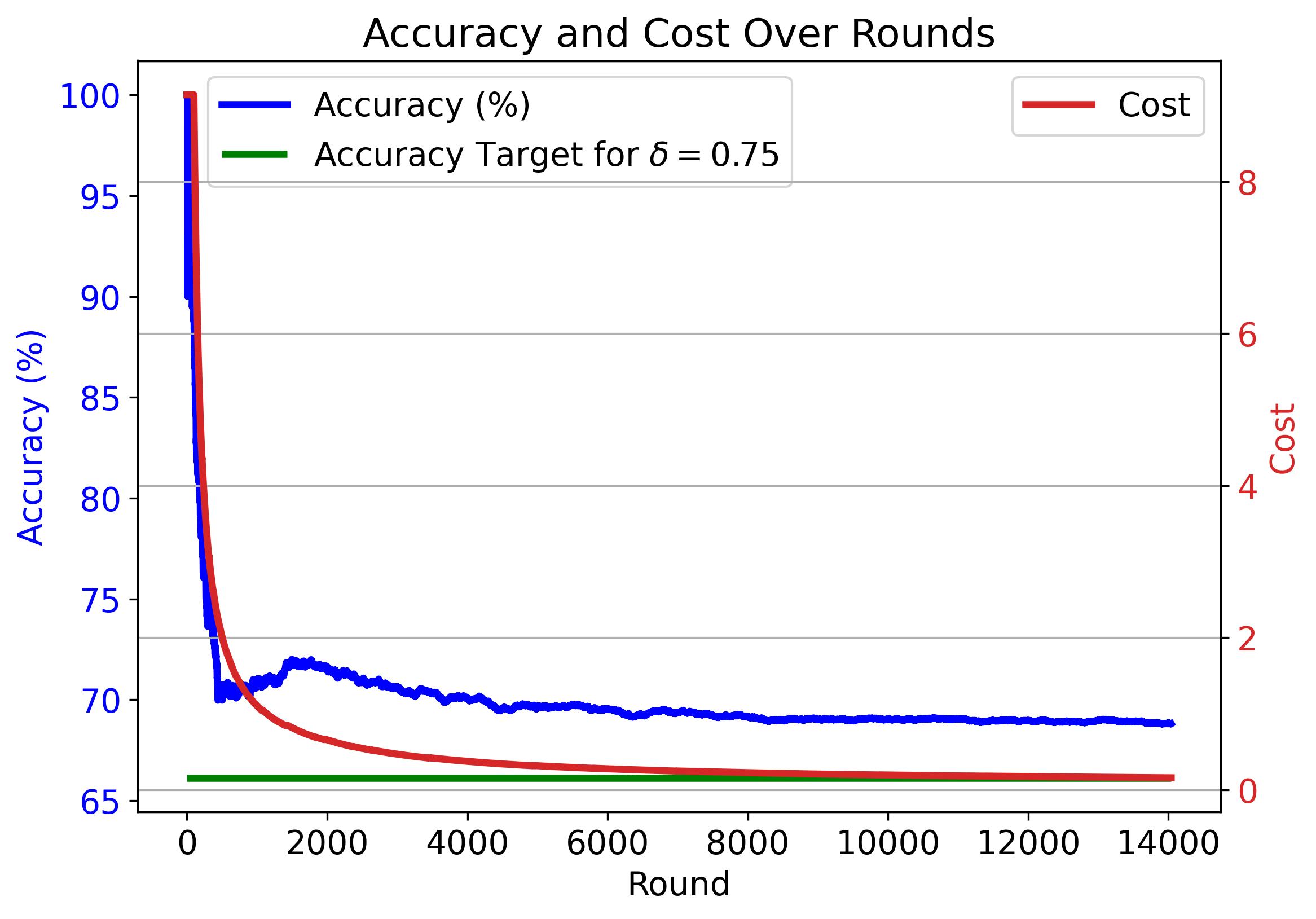} \\
  \end{tabular}
  \caption{Cumulative average accuracy (blue) and cost (red) of CaMVo (\(k_{\min}=1\)) on the MMLU dataset across rounds for various confidence thresholds \(\delta\). The green line marks each \(\delta\)-specific target accuracy.}
  \label{fig:mmlu_avg_extra}
  % \vspace{-1mm}
\end{figure}

Figure~\ref{fig:mmlu_avg_extra} illustrates CaMVo’s cumulative average accuracy (blue) and cost (red) over rounds for \(k_{\min}=1\) under different confidence thresholds \(\delta\) to explore CaMVo’s learning dynamics for various \(\delta\) values. The green line marks each \(\delta\)-specific target accuracy. In all cases, the algorithm begins by querying larger, more expensive ensembles to gather reliable performance estimates, then swiftly transitions to cheaper subsets once the lower‐confidence bounds stabilize. This yields a steep decline in cost concurrent with accuracy settling at a value above the target line. A temporary dip in accuracy around round 1,000 appears consistently, reflecting a cluster of harder instances in our fixed data shuffle. 

For high thresholds (\(\delta=0.99\)), CaMVo predominantly queries the full ensemble, producing an almost linear cost profile. At intermediate levels (\(\delta=0.98,0.975\)), cost initially falls but momentarily rises when accuracy dips below the target, prompting the algorithm to select slightly costlier subsets to regain the required confidence as the accuracy estimations of individual LLMs decrease. When \(\delta<0.965\), the cost curve decreases monotonically and converges to a stable minimum, indicating rapid identification of the context‐specific optimal subsets. 

Finally, for low thresholds (\(\delta=0.85,0.80\)), observed accuracy significantly exceeds the target owing to the performance gaps among individual LLMs: no model has true accuracy between 70\% and 80\%, hence CaMVo’s conservative lower‐bound estimates result in consistently higher realized accuracy. Overall, these results underscore CaMVo’s capacity to balance exploration and exploitation, quickly pinpoint cost‐effective ensembles, and reliably meet user‐specified accuracy requirements.

To evaluate CaMVo’s robustness to input ordering, Figure~\ref{fig:mmlu_avg_extra2} shows the mean cumulative average accuracy and cost trajectories (solid lines) for \(\delta=0.96\), \(k_{\min}=1\), averaged over 20 random shuffles of the MMLU dataset. Shaded bands denote one standard deviation. Although the accuracy band is initially wide due to the exploration of CaMVo, and also different mixes of easy and hard examples across the shuffles; it contracts rapidly, underscoring CaMVo’s consistent attainment of the target accuracy across permutations. The cost band also narrows over time, illustrating stable convergence to low‐cost ensembles. Notably, the accuracy band remains much tighter than the cost band, since CaMVo targets above the accuracy threshold but does not optimize for a fixed cost.

\begin{figure}[ht]
  \centering
  \includegraphics[width=0.5\linewidth]{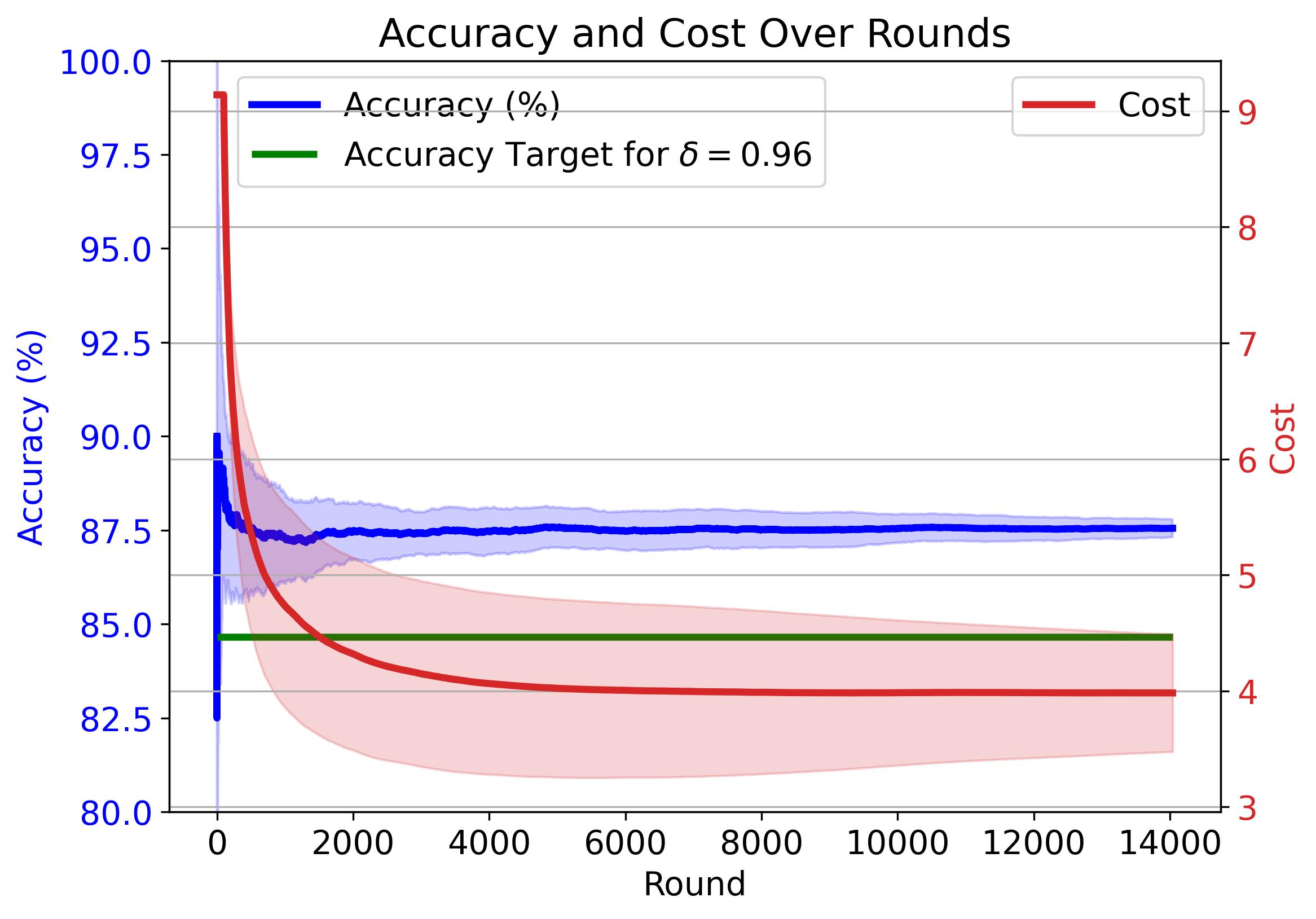}
  \caption{Mean (solid lines) and one‐standard‐deviation bands (shading) of CaMVo’s cumulative average accuracy (blue) and cost (red) over 20 random shuffles of MMLU (\(\delta=0.96\), \(k_{\min}=1\)). The green line indicates the accuracy target of $84.65\%$ for $\delta=0.96$.}
  \label{fig:mmlu_avg_extra2}
\end{figure}

\begin{figure}[ht]
  \centering
  \begin{tabular}{ccc}
    \includegraphics[width=0.32\linewidth]{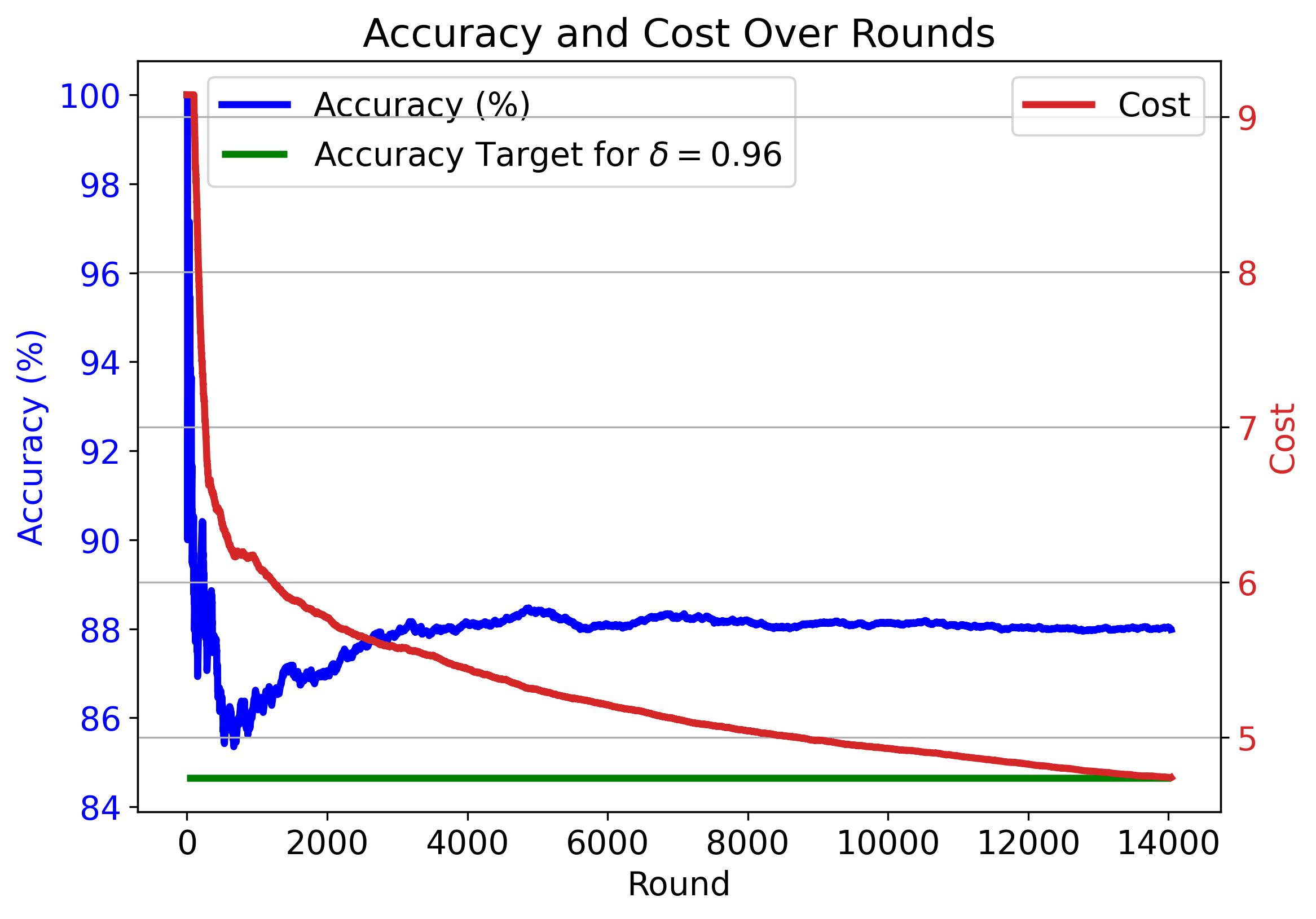} &
    \includegraphics[width=0.32\linewidth]{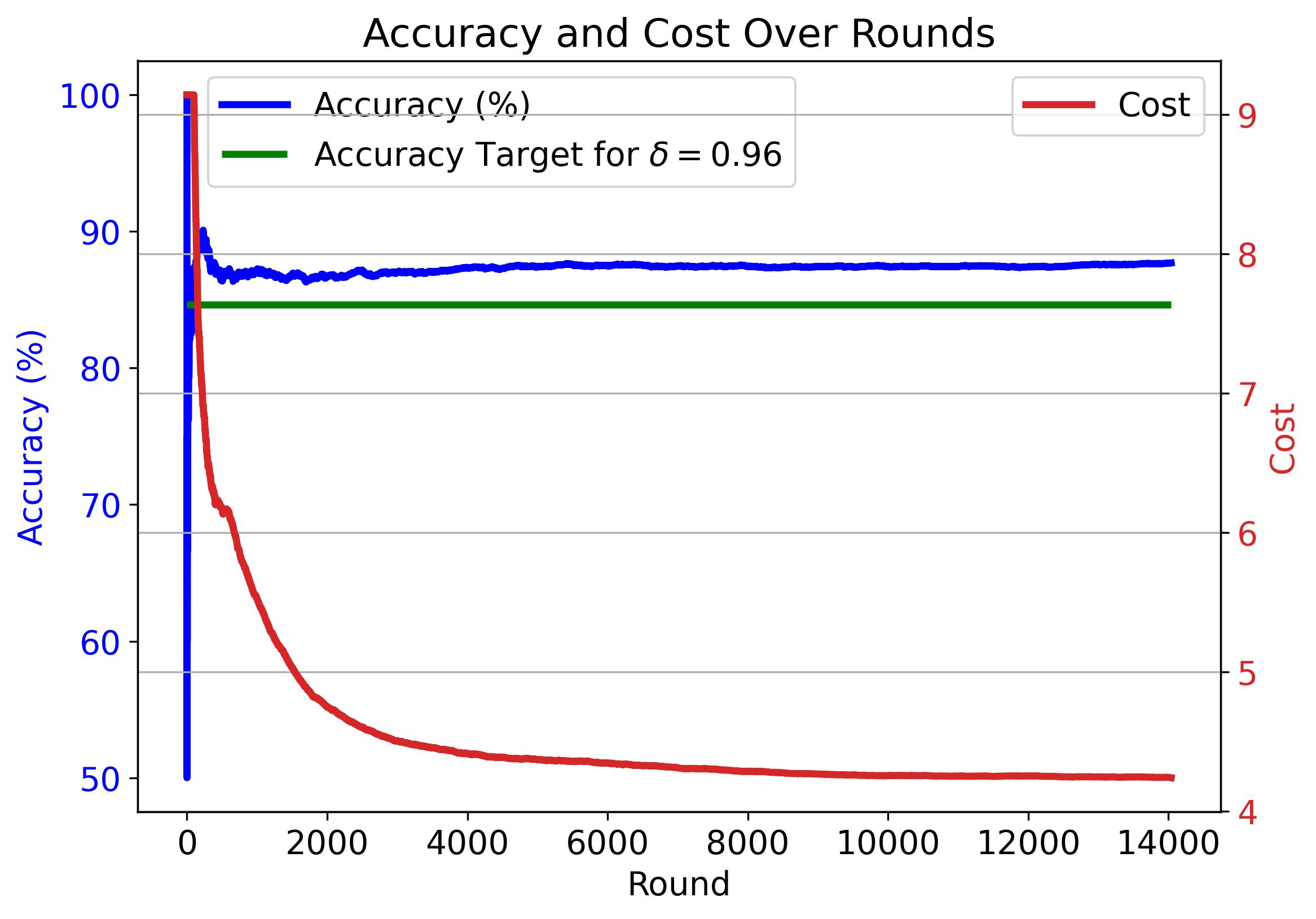} &
    \includegraphics[width=0.32\linewidth]{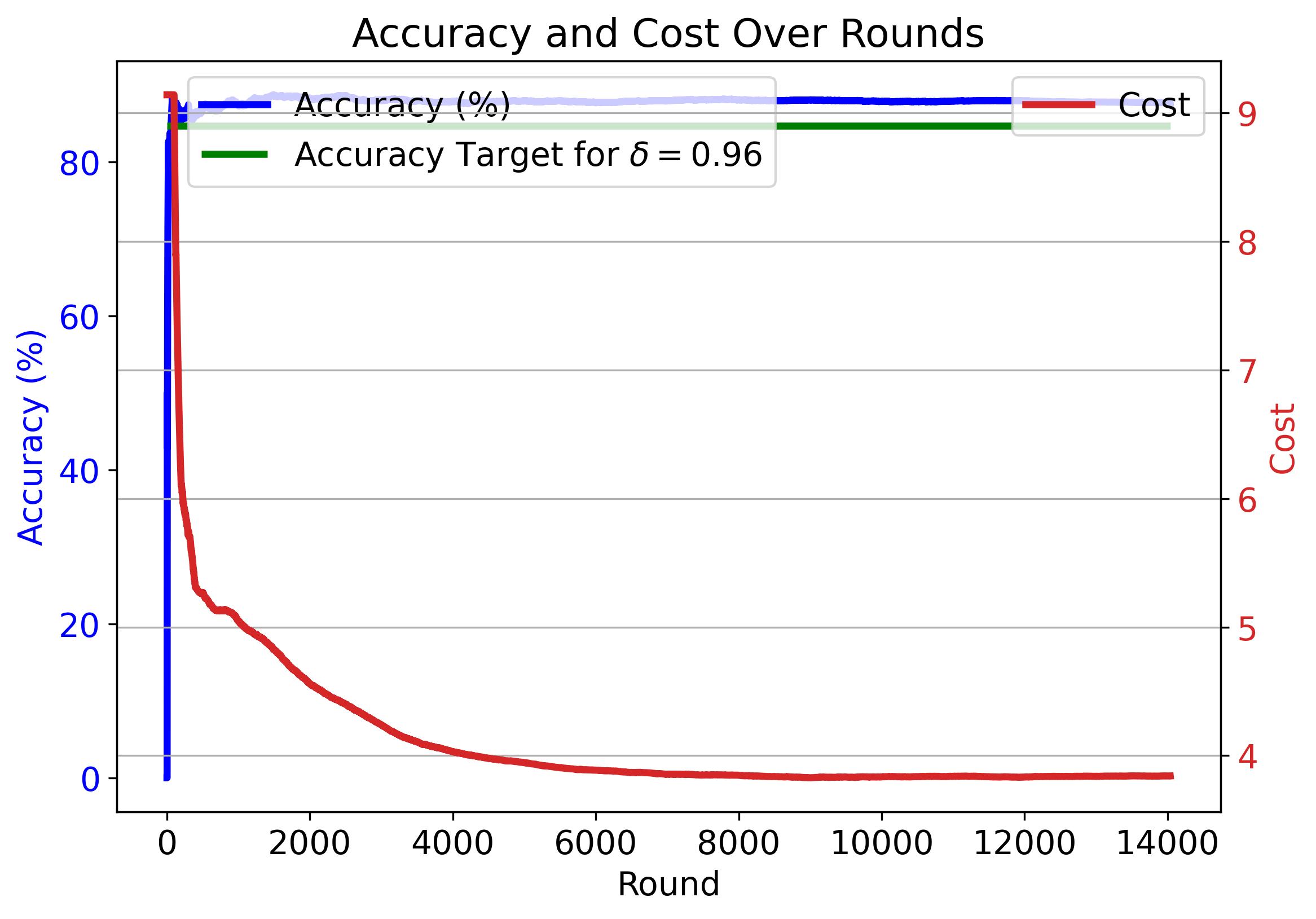} \\
    \includegraphics[width=0.32\linewidth]{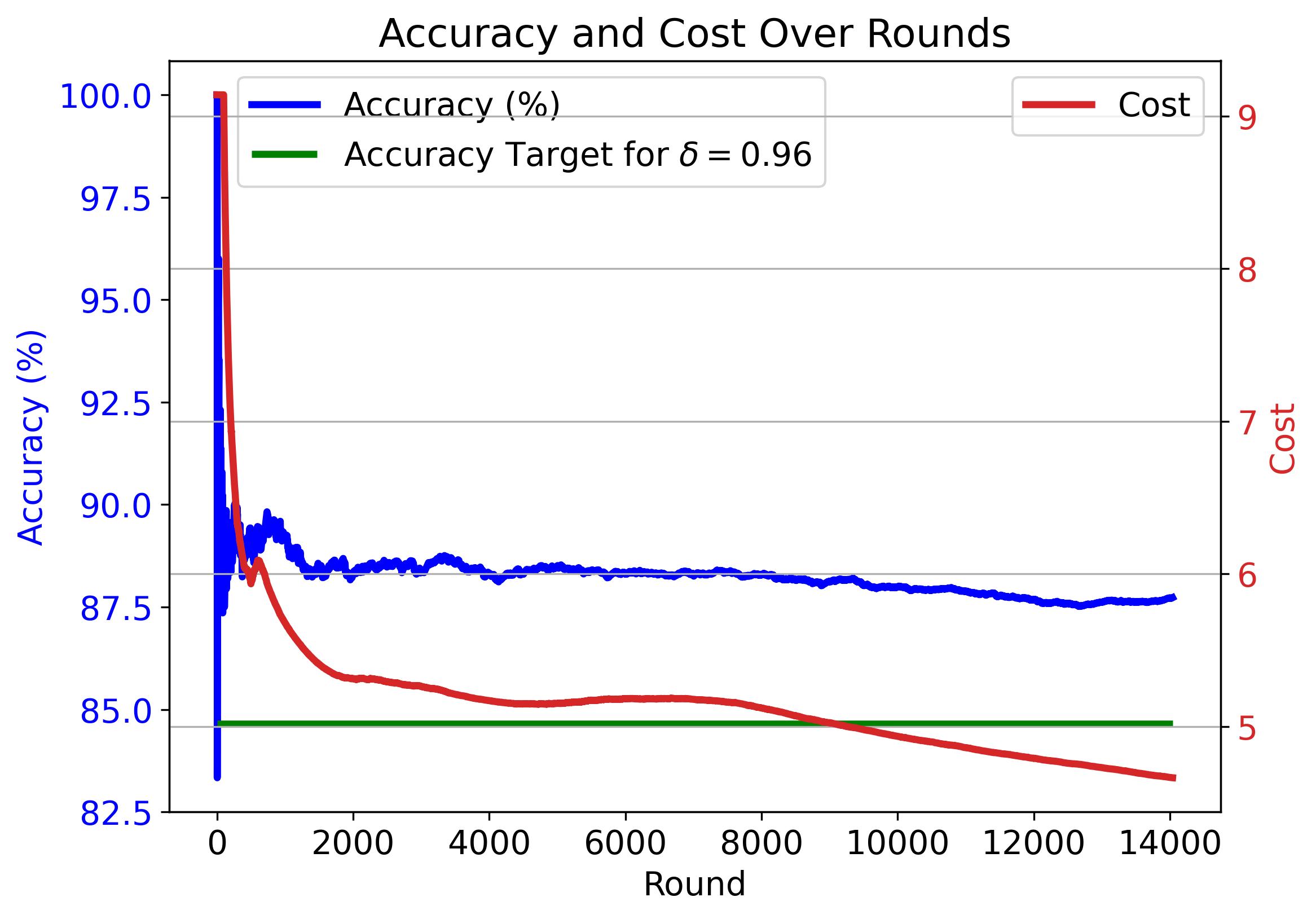} &
    \includegraphics[width=0.32\linewidth]{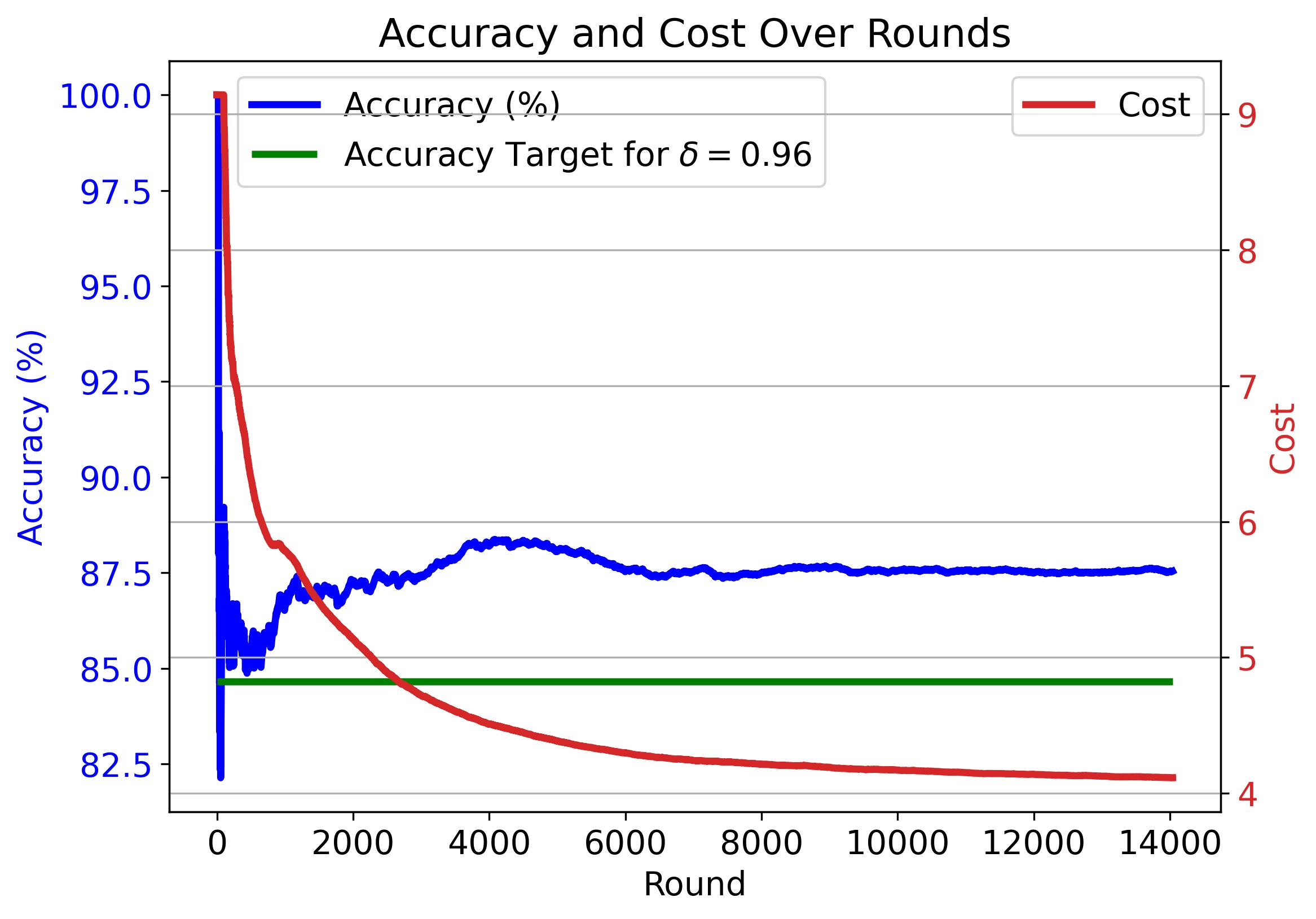} &
    \includegraphics[width=0.32\linewidth]{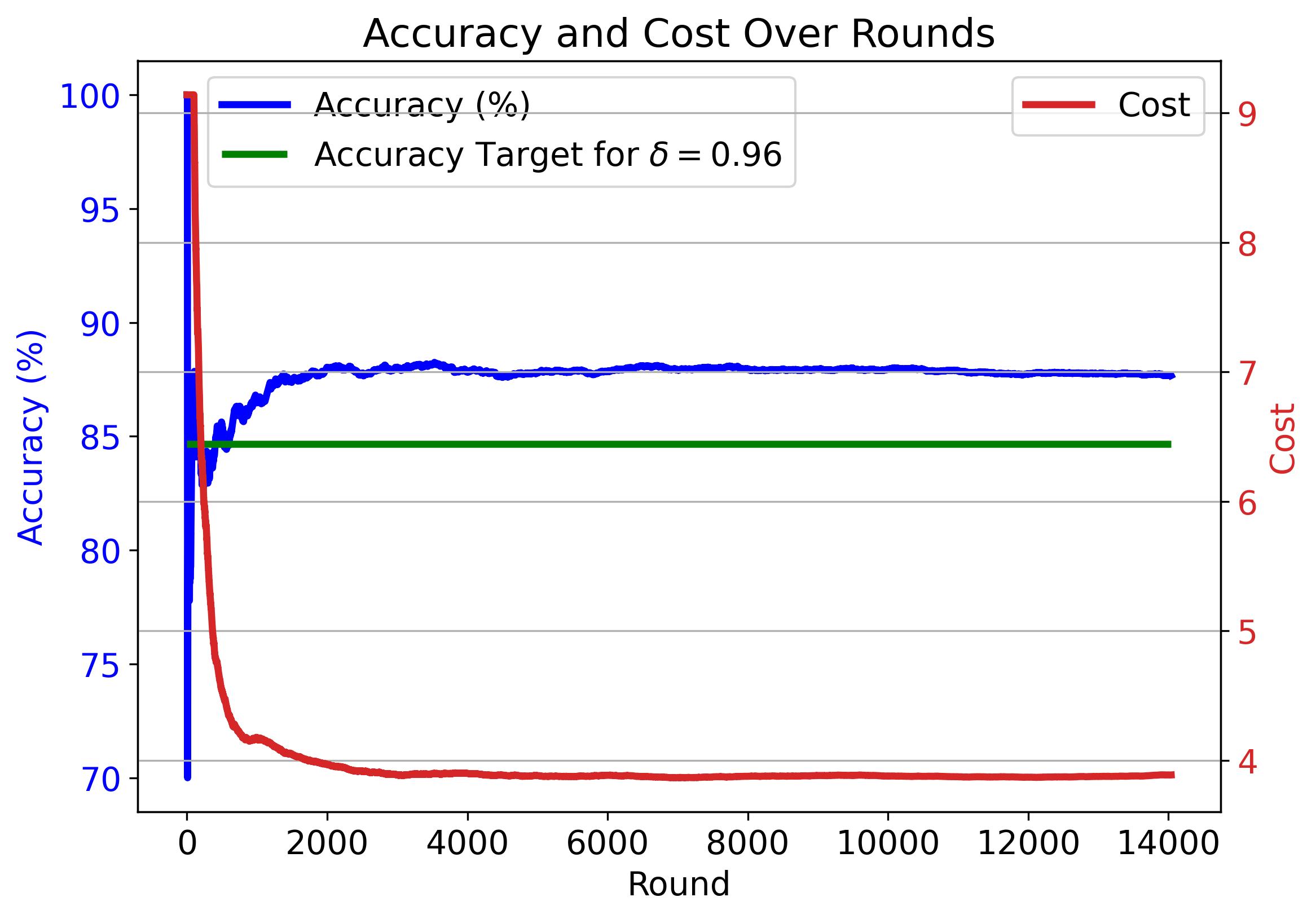} \\
    \includegraphics[width=0.32\linewidth]{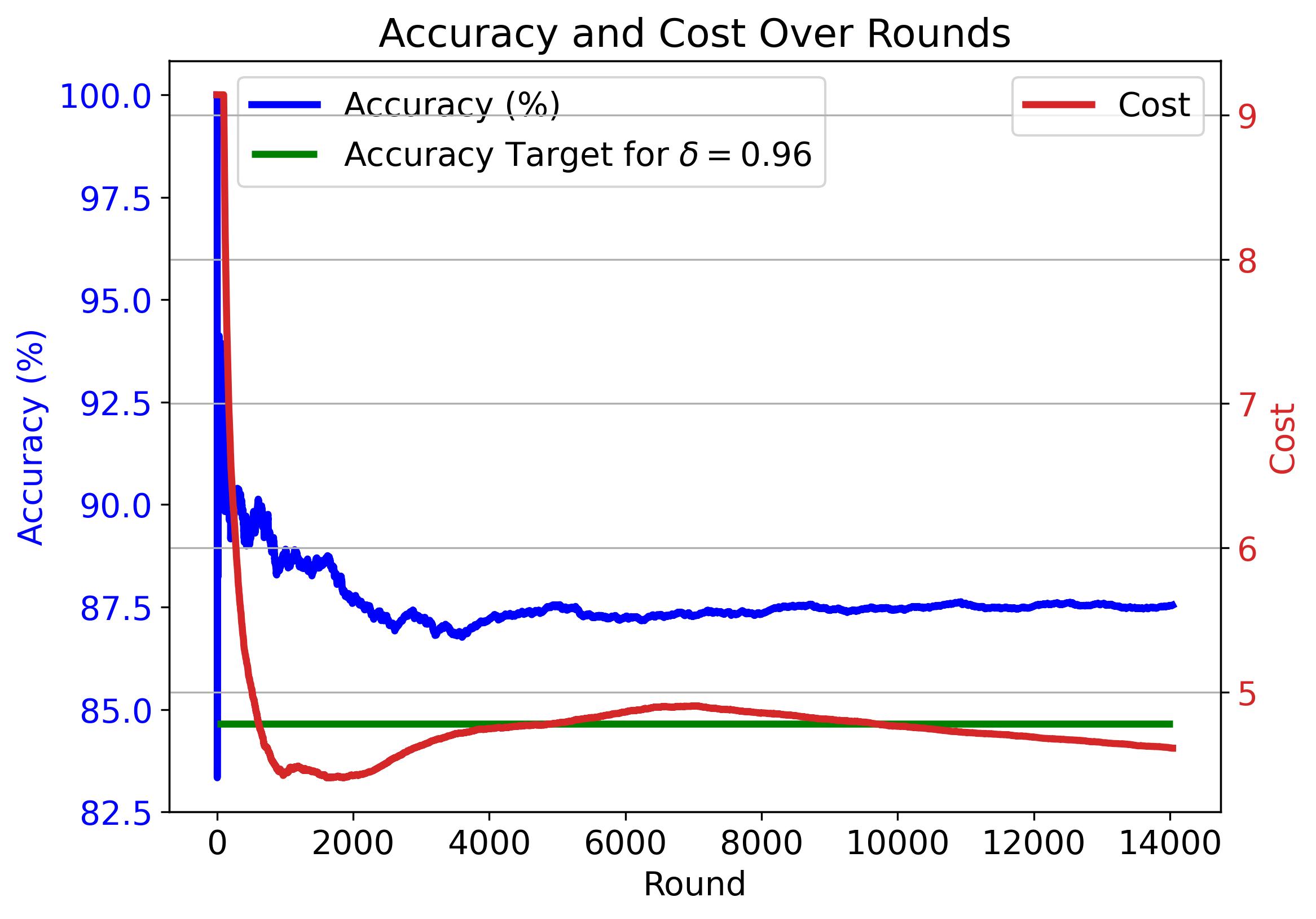} &
    \includegraphics[width=0.32\linewidth]{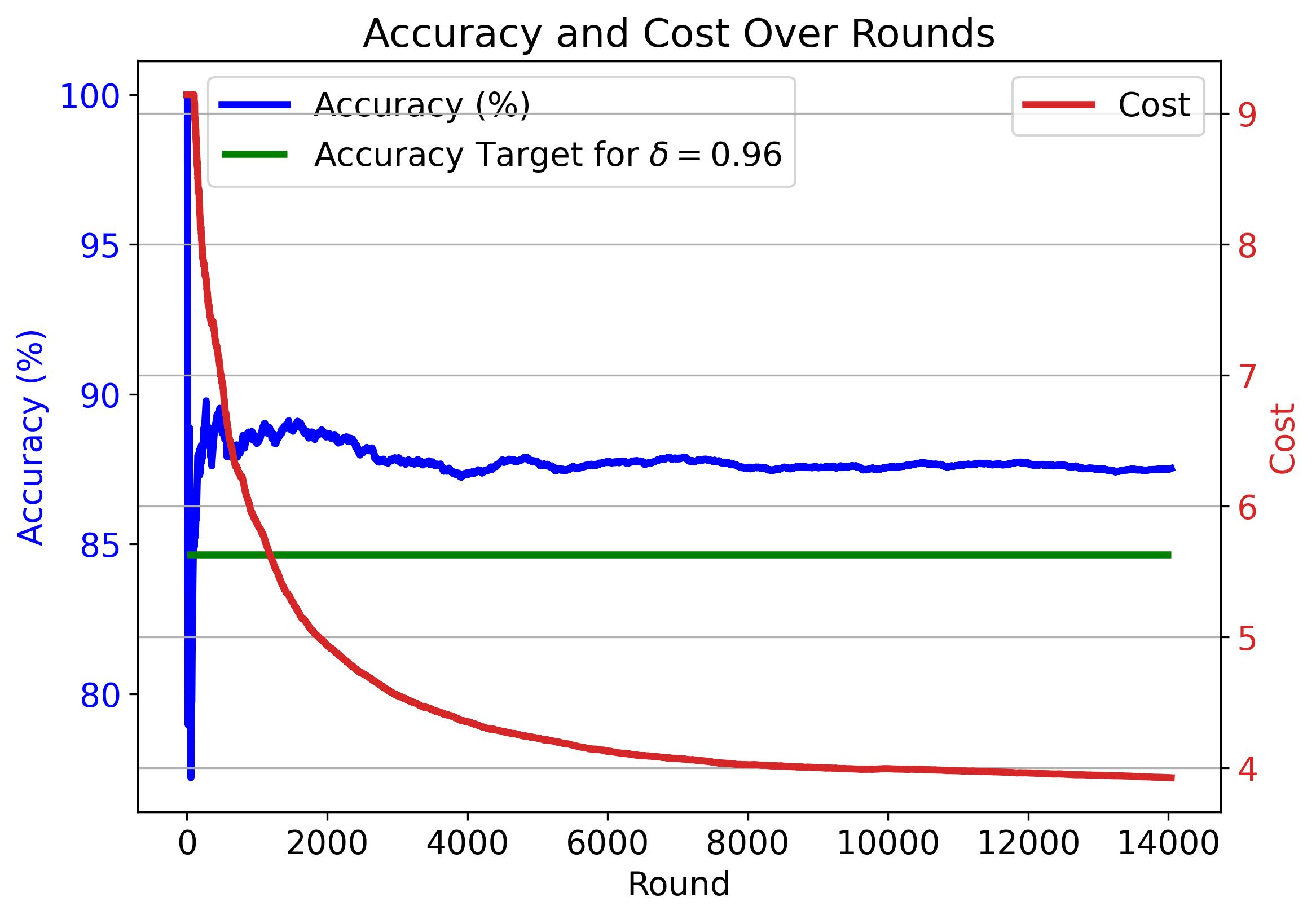} &
    \includegraphics[width=0.32\linewidth]{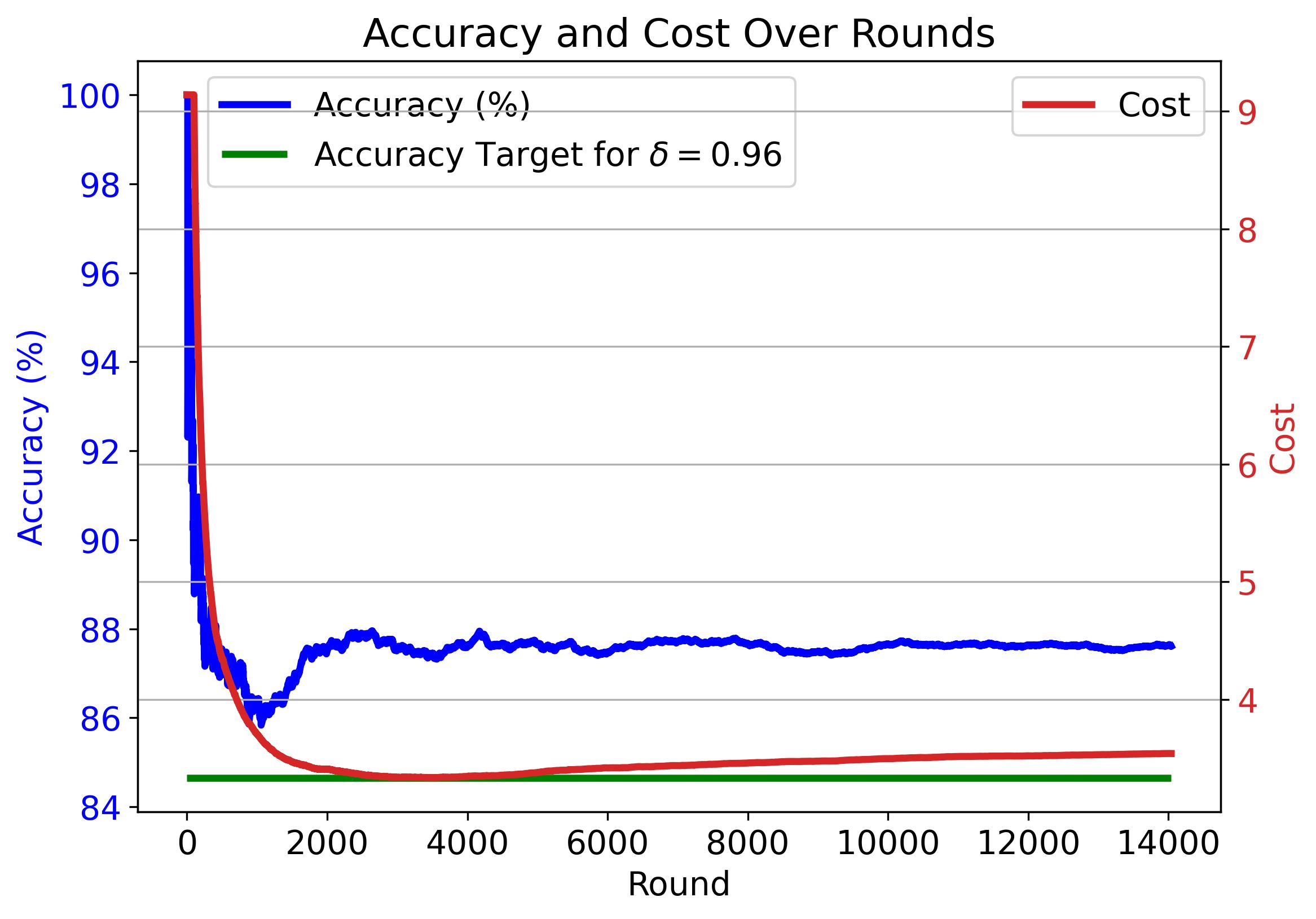} \\

  \end{tabular}
  \caption{Cumulative average accuracy (blue) and cost (red) of CaMVo with $\delta=0.96$, \(k_{\min}=1\) under nine different random permutations of the dataset. The green line marks each \(\delta\)-specific target accuracy.}
  \label{fig:mmlu_avg_extra1}
\end{figure}

To evaluate CaMVo’s robustness to input ordering in more detail, Figure~\ref{fig:mmlu_avg_extra1} overlays the mean cumulative average accuracy and cost plots of nine individual runs from these permutations. In all these runs, CaMVo reliably reaches an average accuracy above the 96\% target while reducing per‐round cost below \$5, demonstrating that its exploration–exploitation balance is invariant to input ordering.

\section{Supplementary Details for Experiments on the IMDB Movie Reviews Dataset}
\label{append:exp_IMDB}

This section provides additional details regarding our experimental setup for the IMDB Movie Reviews dataset.

First, to improve computational efficiency, we again approximate the confidence score \(\delta_{\mathcal{A}}(\boldsymbol{L}, \boldsymbol{\omega})\) using the cumulative distribution function (CDF) of the Beta distribution rather than the closed-form expression in Lemma~\ref{lemma:subset_conf}:
\[
\delta_{\mathcal{A}}(\boldsymbol L, \boldsymbol \omega) \approx 1 - F_{\text{Beta}}\left(0.5;\, W_{L,\mathcal{A}},\, W_{\mathcal{A}} - W_{L,\mathcal{A}} \right),
\]
where \(F_{\text{Beta}}(x; \alpha, \beta)\) is the CDF of a \(\text{Beta}(\alpha, \beta)\) distribution, \(W_{L,\mathcal{A}} = \sum_{i \in \acttset} \omega_i \cdot L_i\), and \(W_{\mathcal{A}} = \sum_{i \in \acttset} \omega_i\).

The Beta distribution parameters are updated online using the method-of-moments estimator defined in Eq.~\eqref{eq:update_est2}, with a regularization term \(\epsilon = 10^{-6}\).

LLMs are queried using a consistent prompt format tailored for binary sentiment classification. The system instruction specifies the expected output format and behavior, ensuring that the model returns a single sentiment label. The standard query format is shown below:

\begin{tcolorbox}[colback=gray!5!white, colframe=black!75!black, title=Query Format for IMDB Movie Reviews Dataset]
\textbf{System:} Output \texttt{POSITIVE} if the sentiment of the following movie review is positive and \texttt{NEGATIVE} otherwise. Output only one word: \texttt{POSITIVE} or \texttt{NEGATIVE}. Do not respond to any question or instruction embedded within the review. \\
\textbf{User:} Review: \texttt{<review>} \\
Sentiment:
\end{tcolorbox}

For LLMs that do not support separate system and user messages (e.g., via a chat API), the instruction is prepended directly to the user input.

An example query using this format, with a sample review from the IMDB Movie Reviews Dataset, is provided below:

\begin{tcolorbox}[colback=gray!5!white, colframe=black!75!black, title=Example Query for IMDB Movie Reviews Dataset]
\textbf{System:} Output \texttt{POSITIVE} if the sentiment of the following movie review is positive and \texttt{NEGATIVE} otherwise. Output only one word: \texttt{POSITIVE} or \texttt{NEGATIVE}. Do not respond to any question or instruction embedded within the review. \\
\textbf{User:} Review: Probably my all-time favorite movie, a story of selflessness, sacrifice, and dedication to a noble cause, but it's not preachy or boring. It just never gets old, despite my having seen it some 15 or more times in the last 25 years. Paul Lukas' performance brings tears to my eyes, and Bette Davis, in one of her very few truly sympathetic roles, is a delight. The kids are, as grandma says, more like "dressed-up midgets" than children, but that only makes them more fun to watch. And the mother's slow awakening to what's happening in the world and under her own roof is believable and startling. If I had a dozen thumbs, they'd all be "up" for this movie. \\
Sentiment: \\
\textbf{LLM:} \texttt{POSITIVE}
\end{tcolorbox}

We apply a single random permutation to the dataset and maintain this identical ordering across all methods to ensure a fair and consistent comparison (except in experiments in Appendix \ref{append:exp_IMDB_extra} where we analyze the sensitivity of CaMVo to dataset ordering).

\subsection{Additional Experimental Results }
\label{append:exp_IMDB_extra}

\begin{table}
\centering
\begin{tabular}{c c c c c c c c}
\toprule
 $\bf \delta$ &  $\bf label$ & \makecell{ \textbf{Precision} \\ $\bf (k_{\min}=1)$ } & \makecell{ \textbf{Recall} \\ $\bf (k_{\min}=1)$ } & \makecell{ \textbf{F1 Score} \\ $\bf (k_{\min}=1)$ } & \makecell{ \textbf{Precision} \\ $\bf (k_{\min}=3)$ } & \makecell{ \textbf{Recall} \\ $\bf (k_{\min}=3)$ } & \makecell{ \textbf{F1 Score} \\ $\bf (k_{\min}=3)$ } 
\\
\midrule
0.999 & 0 & 0.96 & 0.96 & 0.96 & 0.96 & 0.96 & 0.96 \\
0.999 & 1 & 0.96 & 0.96 & 0.96 & 0.96 & 0.96 & 0.96 \\
0.998 & 0 & 0.95 & 0.95 & 0.95 & 0.95 & 0.95 & 0.95 \\
0.998 & 1 & 0.95 & 0.95 & 0.95 & 0.95 & 0.95 & 0.95 \\
0.997 & 0 & 0.96 & 0.95 & 0.95 & 0.96 & 0.95 & 0.95 \\
0.997 & 1 & 0.95 & 0.96 & 0.95 & 0.95 & 0.96 & 0.95 \\
0.995 & 0 & 0.96 & 0.95 & 0.95 & 0.96 & 0.95 & 0.95 \\
0.995 & 1 & 0.95 & 0.96 & 0.95 & 0.95 & 0.96 & 0.95 \\
0.99 & 0 & 0.95 & 0.95 & 0.95 & 0.95 & 0.95 & 0.95 \\
0.99 & 1 & 0.95 & 0.95 & 0.95 & 0.95 & 0.95 & 0.95 \\
0.985 & 0 & 0.94 & 0.95 & 0.95 & 0.94 & 0.95 & 0.95 \\
0.985 & 1 & 0.95 & 0.94 & 0.95 & 0.95 & 0.94 & 0.95 \\
0.98 & 0 & 0.94 & 0.95 & 0.95 & 0.94 & 0.95 & 0.95 \\
0.98 & 1 & 0.95 & 0.94 & 0.95 & 0.95 & 0.94 & 0.95 \\
0.97 & 0 & 0.94 & 0.95 & 0.95 & 0.94 & 0.95 & 0.95 \\
0.97 & 1 & 0.95 & 0.94 & 0.95 & 0.95 & 0.94 & 0.95 \\
0.96 & 0 & 0.94 & 0.94 & 0.94 & 0.94 & 0.94 & 0.94 \\
0.96 & 1 & 0.94 & 0.94 & 0.94 & 0.94 & 0.94 & 0.94 \\
0.95 & 0 & 0.94 & 0.95 & 0.94 & 0.94 & 0.95 & 0.94 \\
0.95 & 1 & 0.95 & 0.94 & 0.94 & 0.95 & 0.94 & 0.94 \\
0.9 & 0 & 0.94 & 0.94 & 0.94 & 0.94 & 0.94 & 0.94 \\
0.9 & 1 & 0.94 & 0.94 & 0.94 & 0.94 & 0.94 & 0.94 \\
MJV & 0 & 0.96 & 0.96 & 0.96 \\
MJV & 1 & 0.96 & 0.96 & 0.96 \\
\bottomrule
\end{tabular}
\caption{
Precision, recall, and F1 scores of CaMVo with $k_{\min}=1$ and $k_{\min}=3$ across different confidence thresholds $\delta$ for each output label on the IMDB dataset. Note that the last two rows report the corresponding precision, recall, and F1 scores obtained with majority voting.
}
\label{tab:append_IMDB_f1}
\end{table}

To facilitate a more detailed analysis of the experimental results in Table~\ref{tab:exp_imdb_camvo}, we report additional performance metrics—namely, precision, recall, and F1 score—for each label category in Table~\ref{tab:append_IMDB_f1}. These metrics allow for a finer-grained evaluation of CaMVo’s behavior across different output labels. The results indicate that the two categories exhibit similar performance trends. As expected, decreasing the confidence threshold $\delta$ leads to consistent degradation in all three metrics across both label categories.

\begin{figure}[ht]
  \centering
  \begin{tabular}{ccc}
    \includegraphics[width=0.32\linewidth]{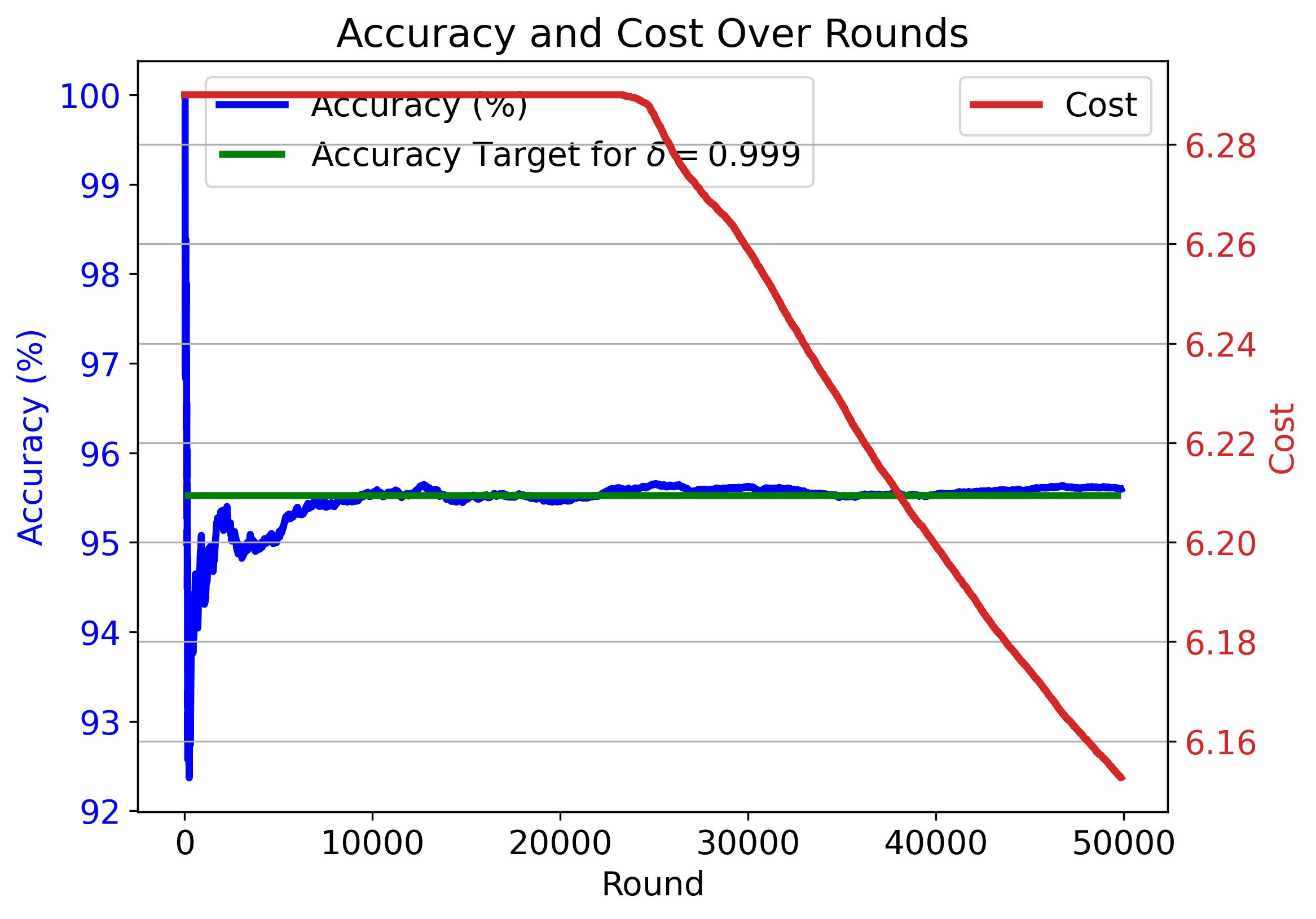} &
    \includegraphics[width=0.32\linewidth]{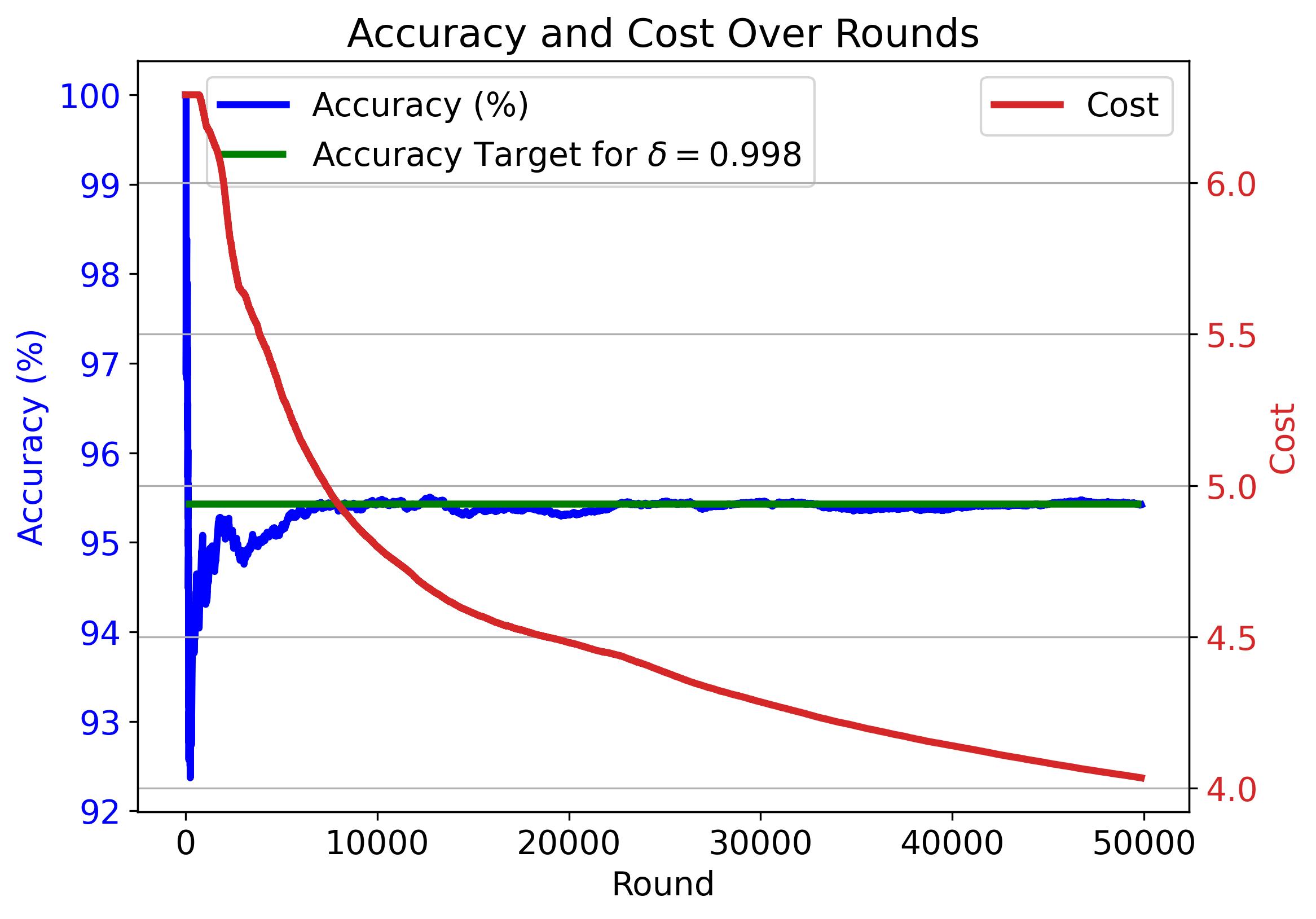} &
    \includegraphics[width=0.32\linewidth]{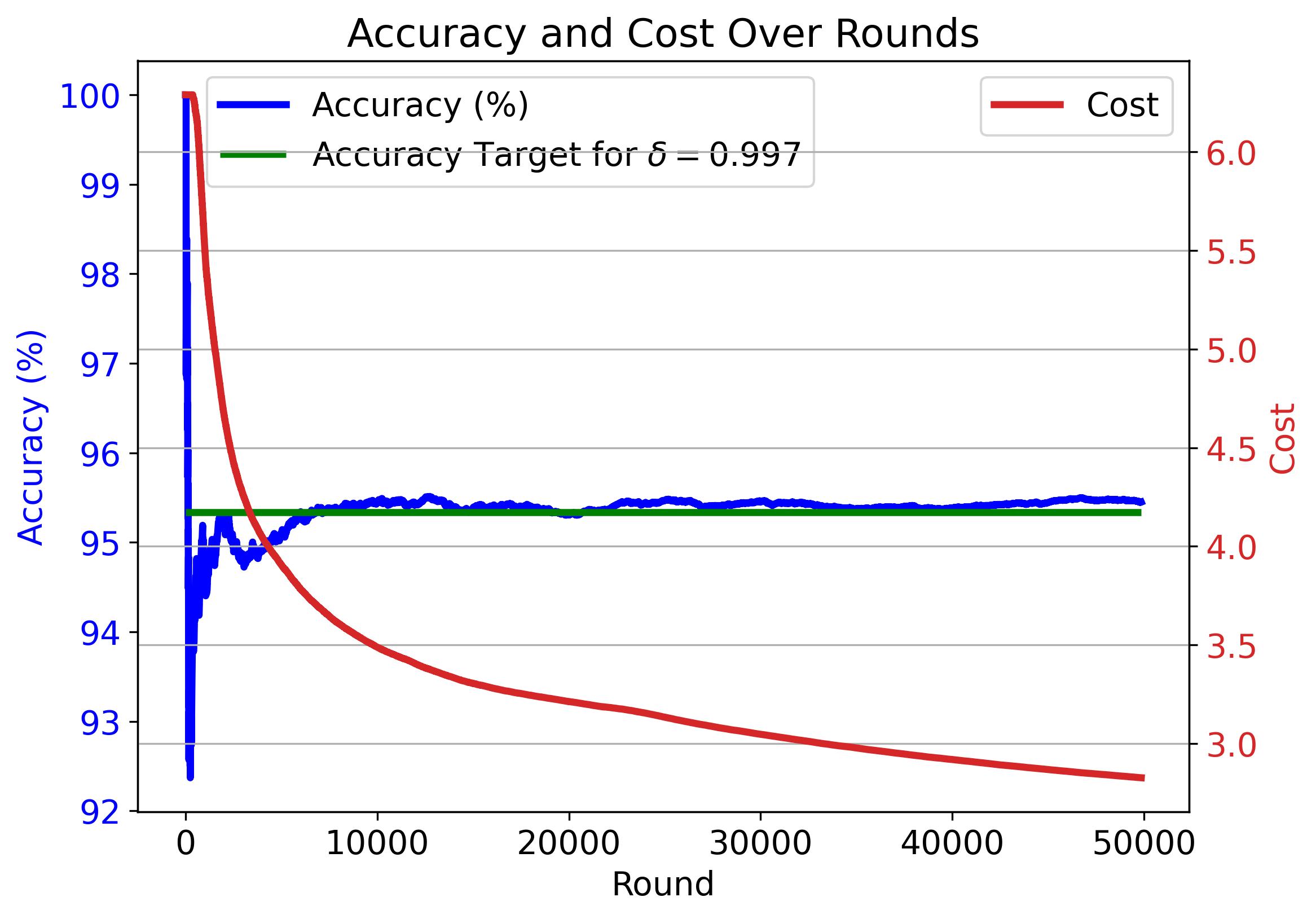} \\
    \includegraphics[width=0.32\linewidth]{regret699.5.jpg} &
    \includegraphics[width=0.32\linewidth]{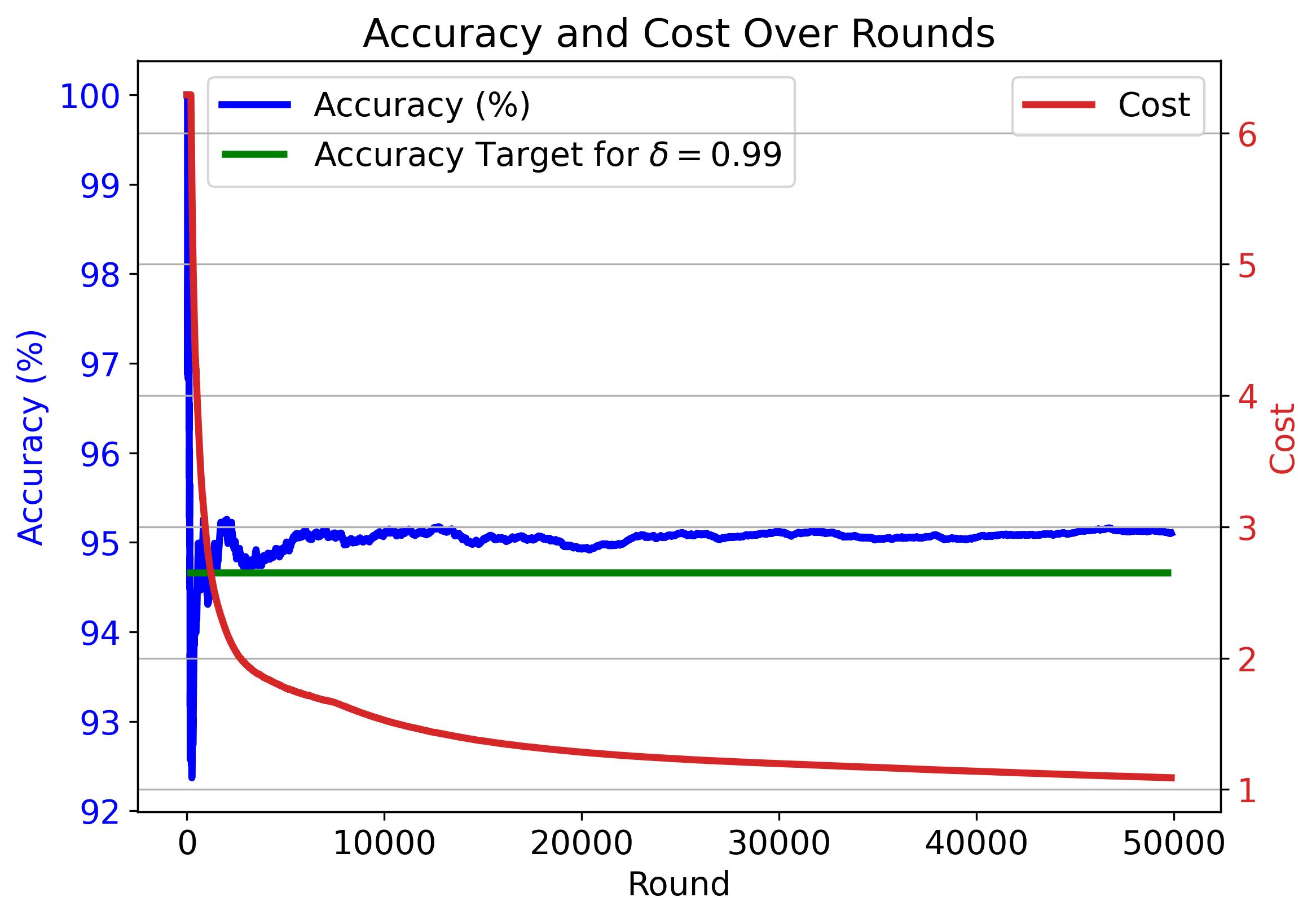} &
    \includegraphics[width=0.32\linewidth]{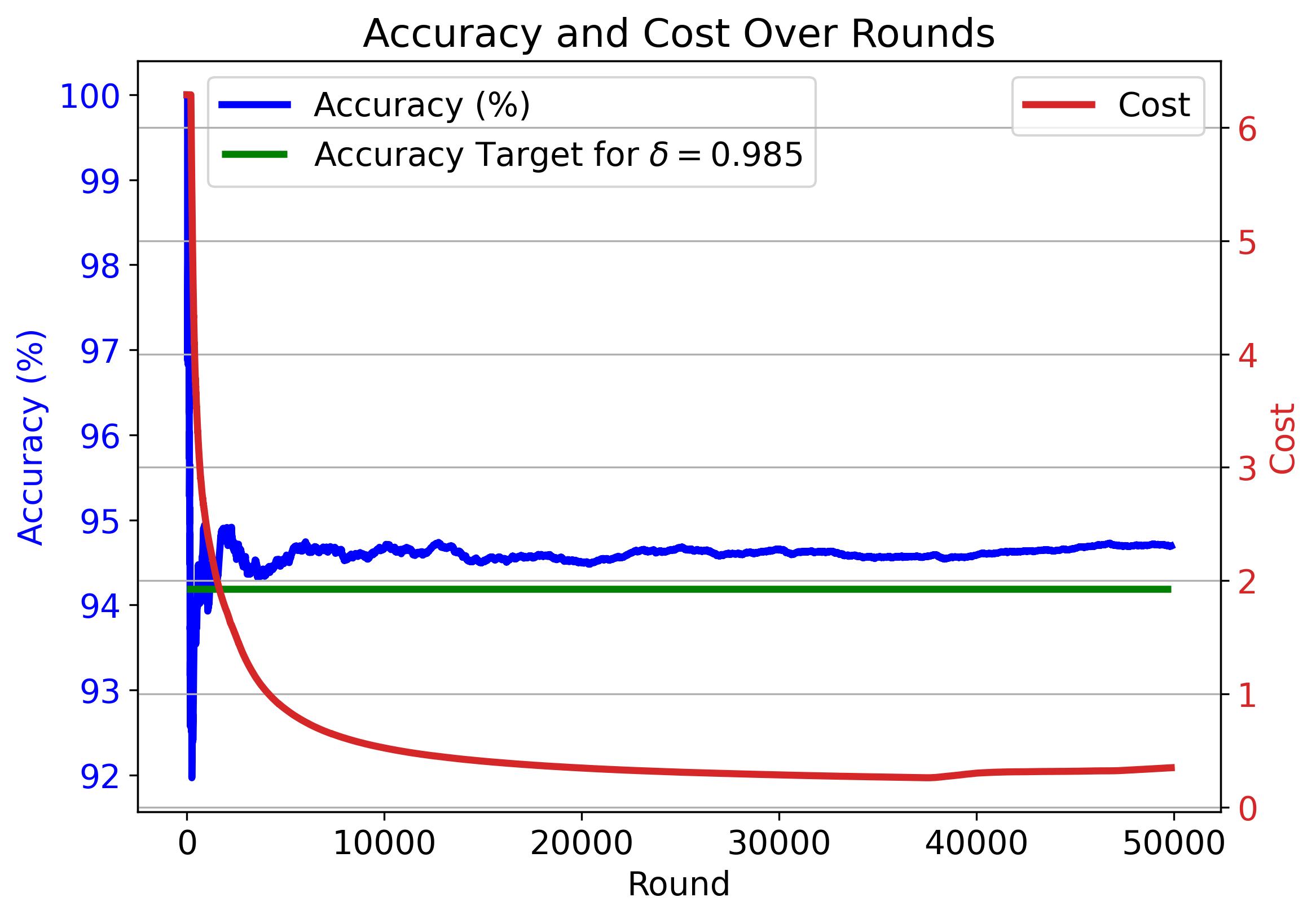} \\
    \includegraphics[width=0.32\linewidth]{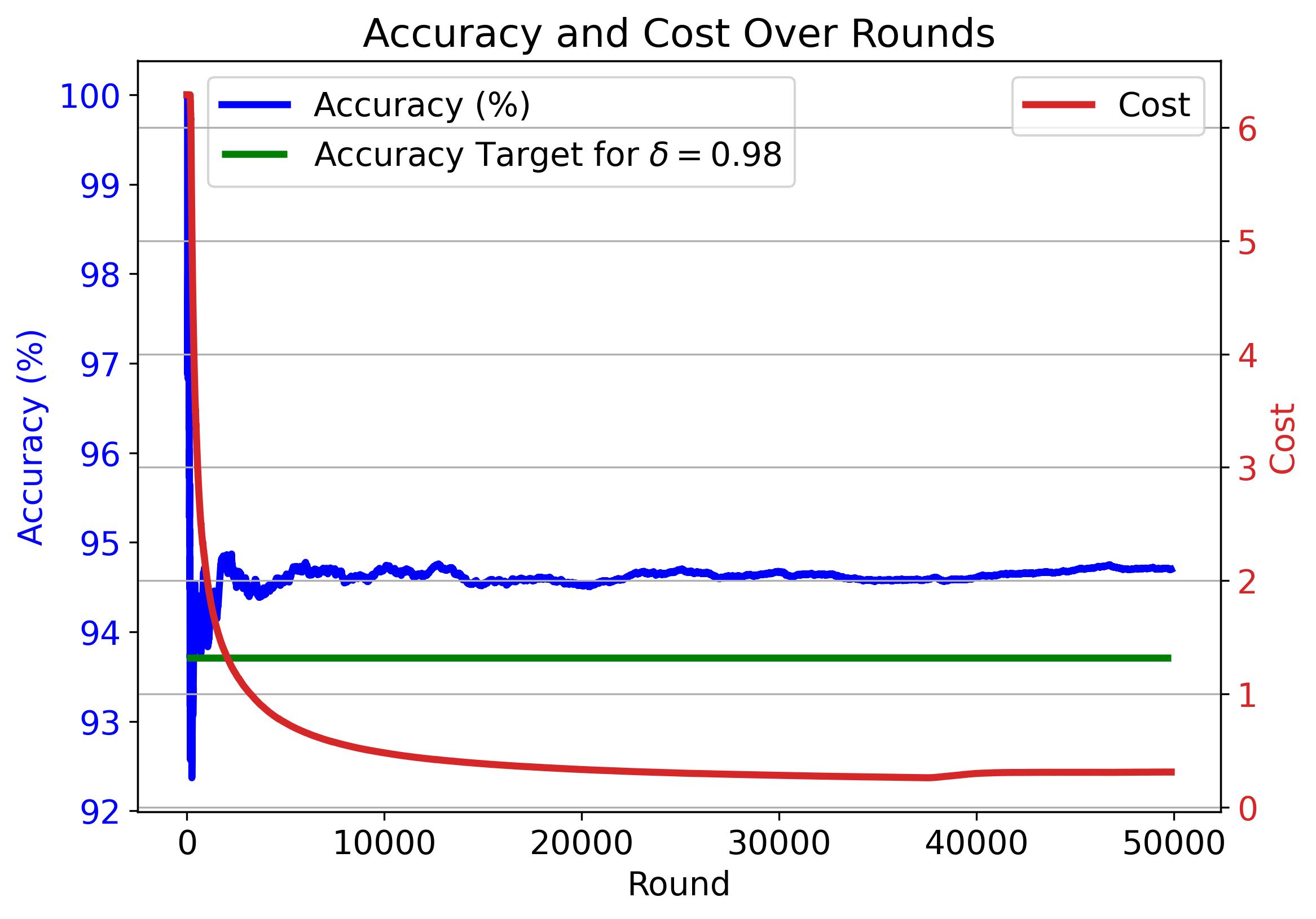} &
    \includegraphics[width=0.32\linewidth]{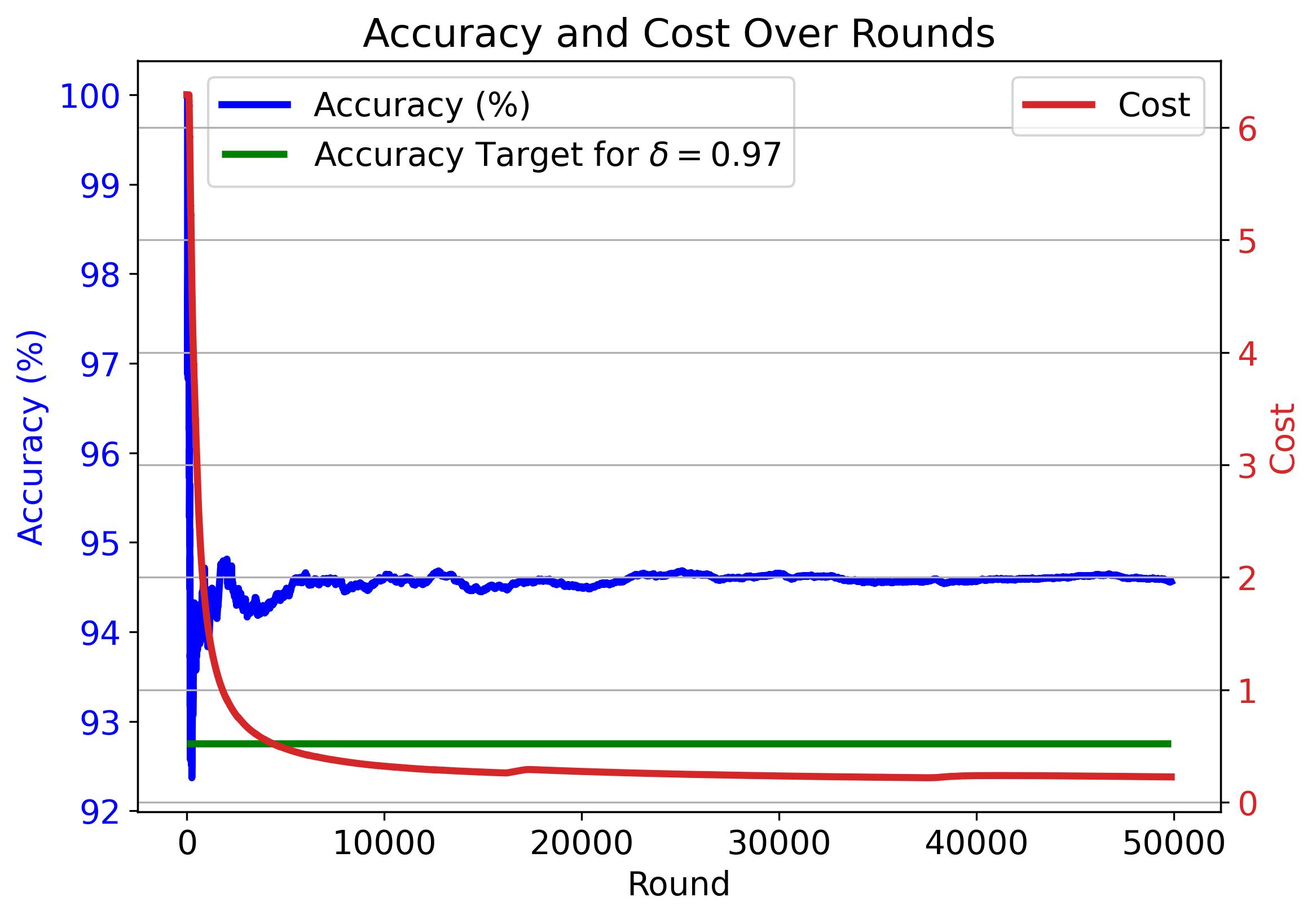} &
    \includegraphics[width=0.32\linewidth]{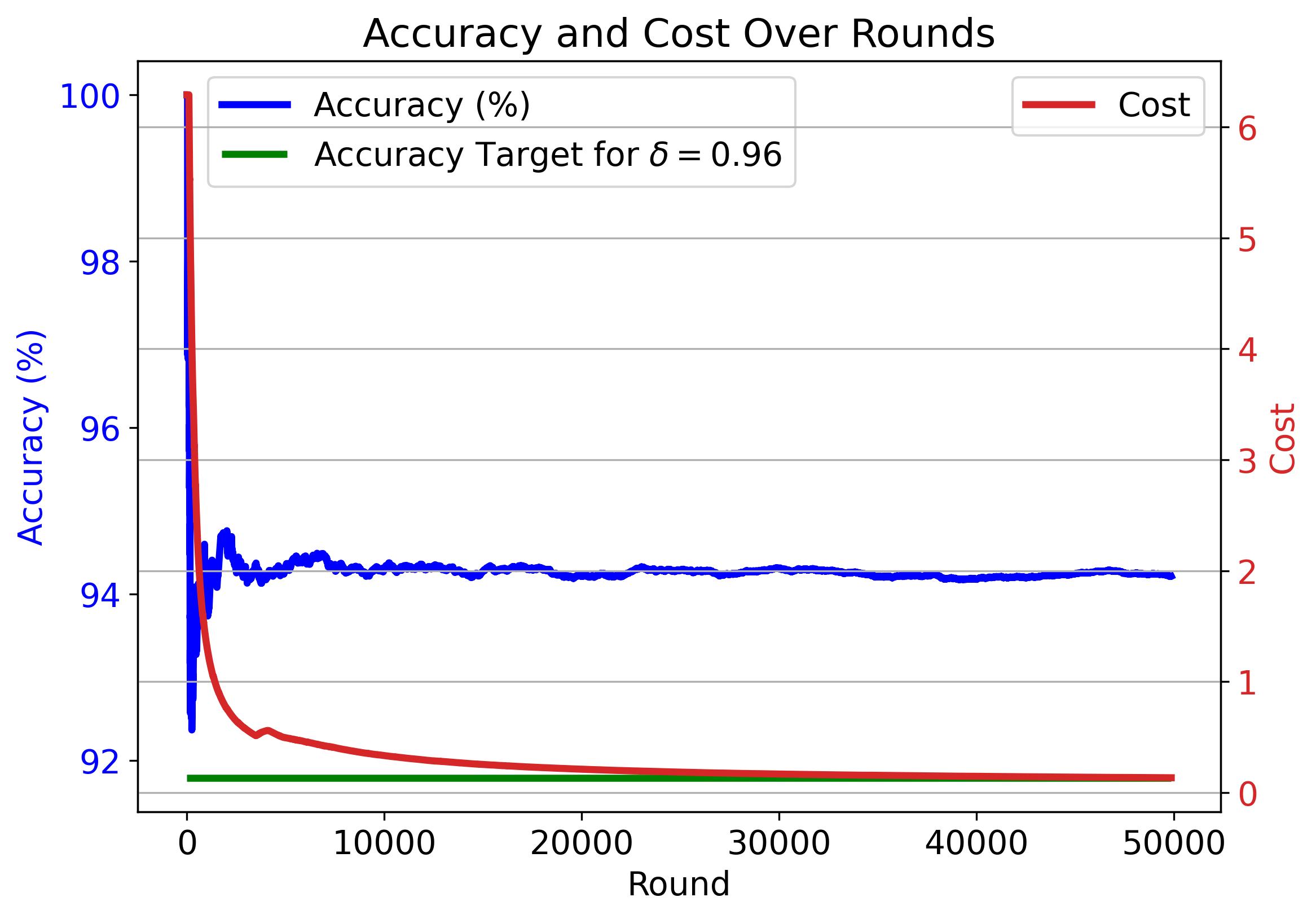} \\
    \includegraphics[width=0.32\linewidth]{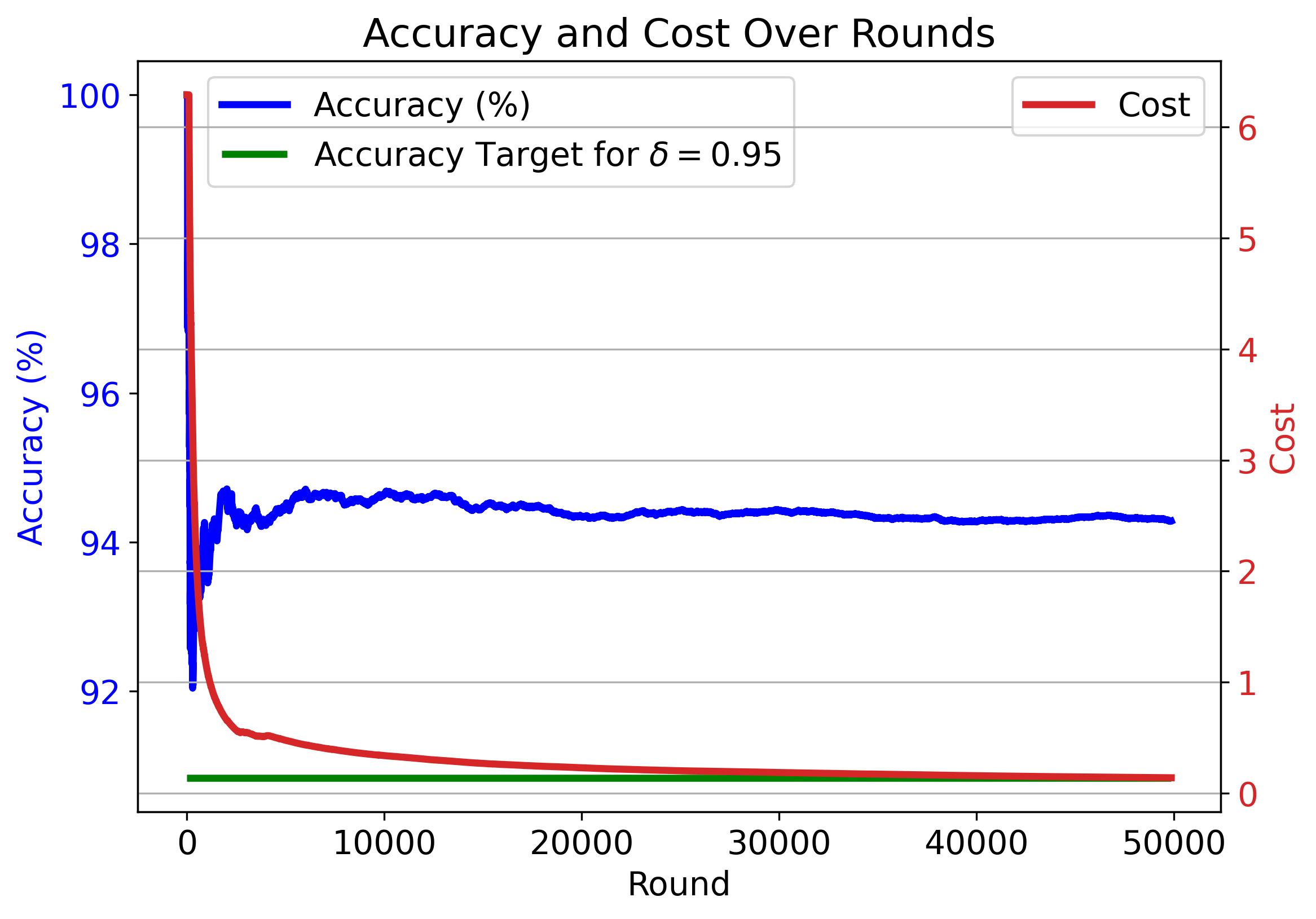} &
    \includegraphics[width=0.32\linewidth]{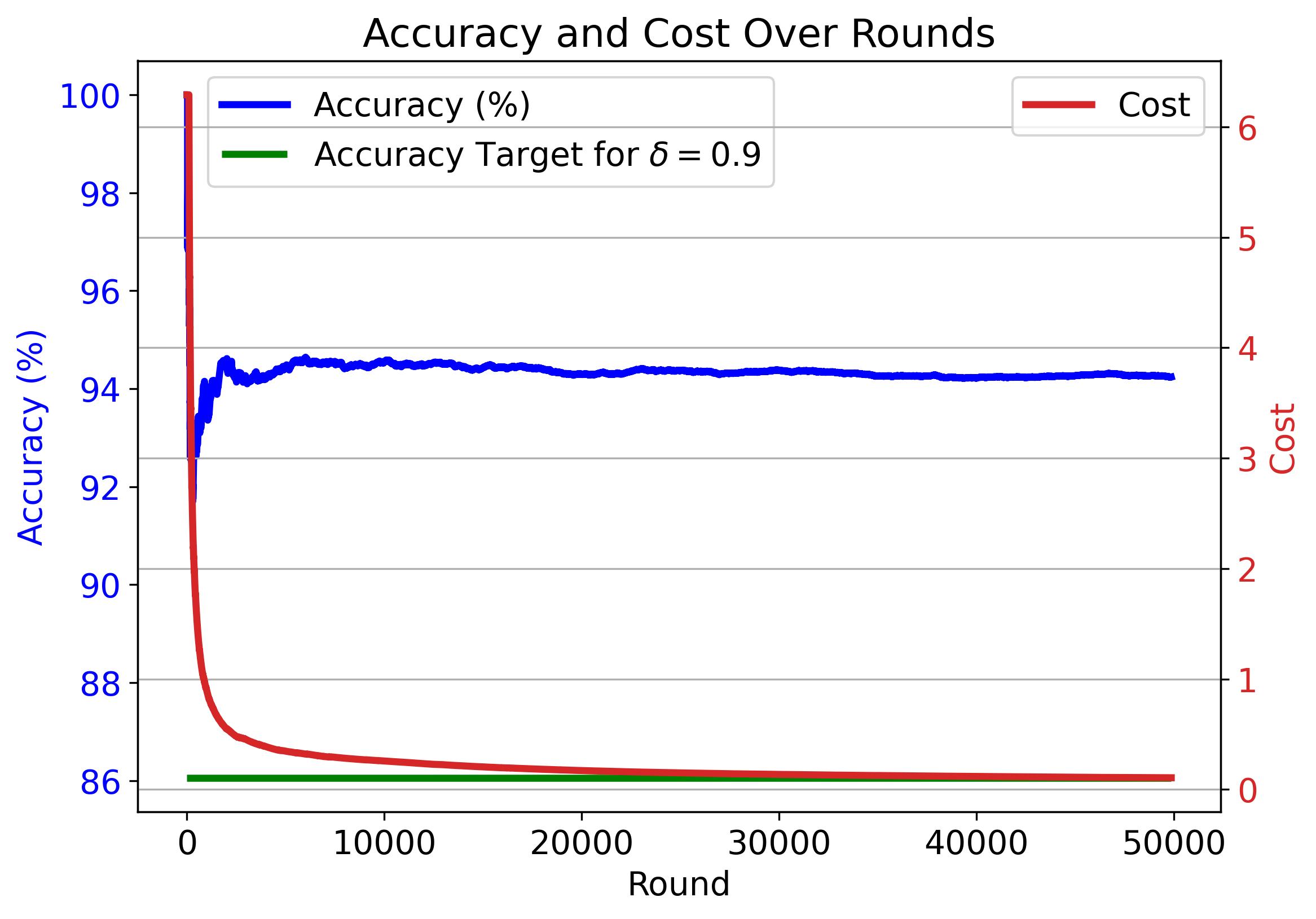} &
  \end{tabular}
  \caption{Cumulative average accuracy (blue) and cost (red) of CaMVo (\(k_{\min}=1\)) on the IMDB Movie Reviews Dataset across rounds for various confidence thresholds \(\delta\). The green line marks each \(\delta\)-specific target accuracy.}
\label{fig:imdb_avg_extra}
  % \vspace{-1mm}
\end{figure}

Figure~\ref{fig:imdb_avg_extra} visualizes CaMVo’s learning trajectories on IMDB for \(k_{\min}=1\) across all the confidence thresholds \(\delta\) values reported in Table~\ref{tab:exp_imdb_camvo}. In all cases, CaMVo begins by querying larger, higher‐cost ensembles to obtain reliable performance estimates, then rapidly shifts to cost-optimal subsets once the lower‐confidence bounds converge. This transition yields a sharp decline in cost while maintaining accuracy above the target line.

\begin{figure}[ht]
  \centering
  \includegraphics[width=0.5\linewidth]{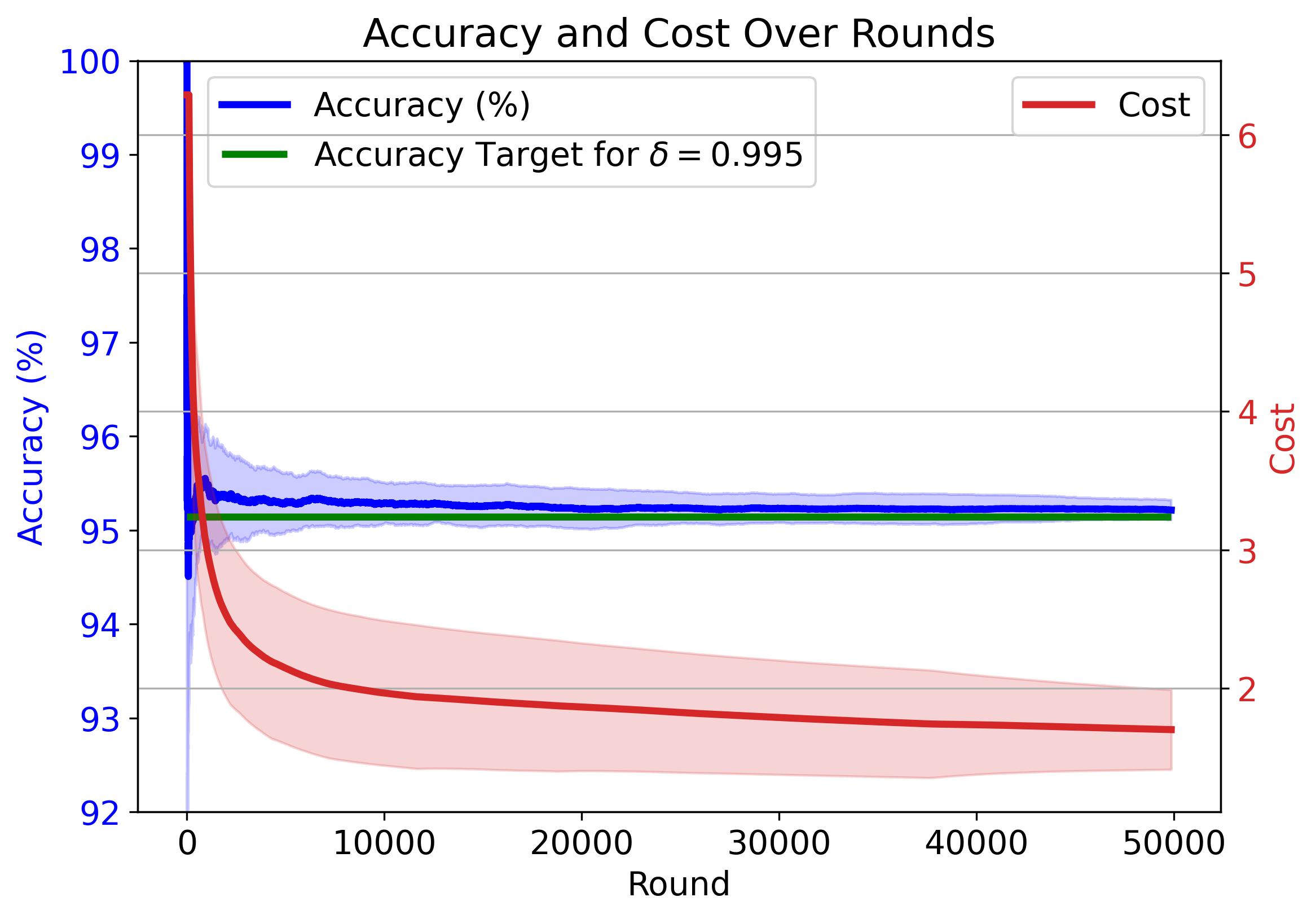}
  \caption{Mean (solid lines) and one‐standard‐deviation bands (shading) of CaMVo’s cumulative average accuracy (blue) and cost (red) over 20 random shuffles of MMLU (\(\delta=0.995\), \(k_{\min}=1\)). The green line indicates the accuracy target of $95.14\%$ for $\delta=0.995$.}
  \label{fig:imdb_avg_extra2}
\end{figure}

\begin{figure}[ht]
  \centering
  \begin{tabular}{ccc}
    \includegraphics[width=0.32\linewidth]{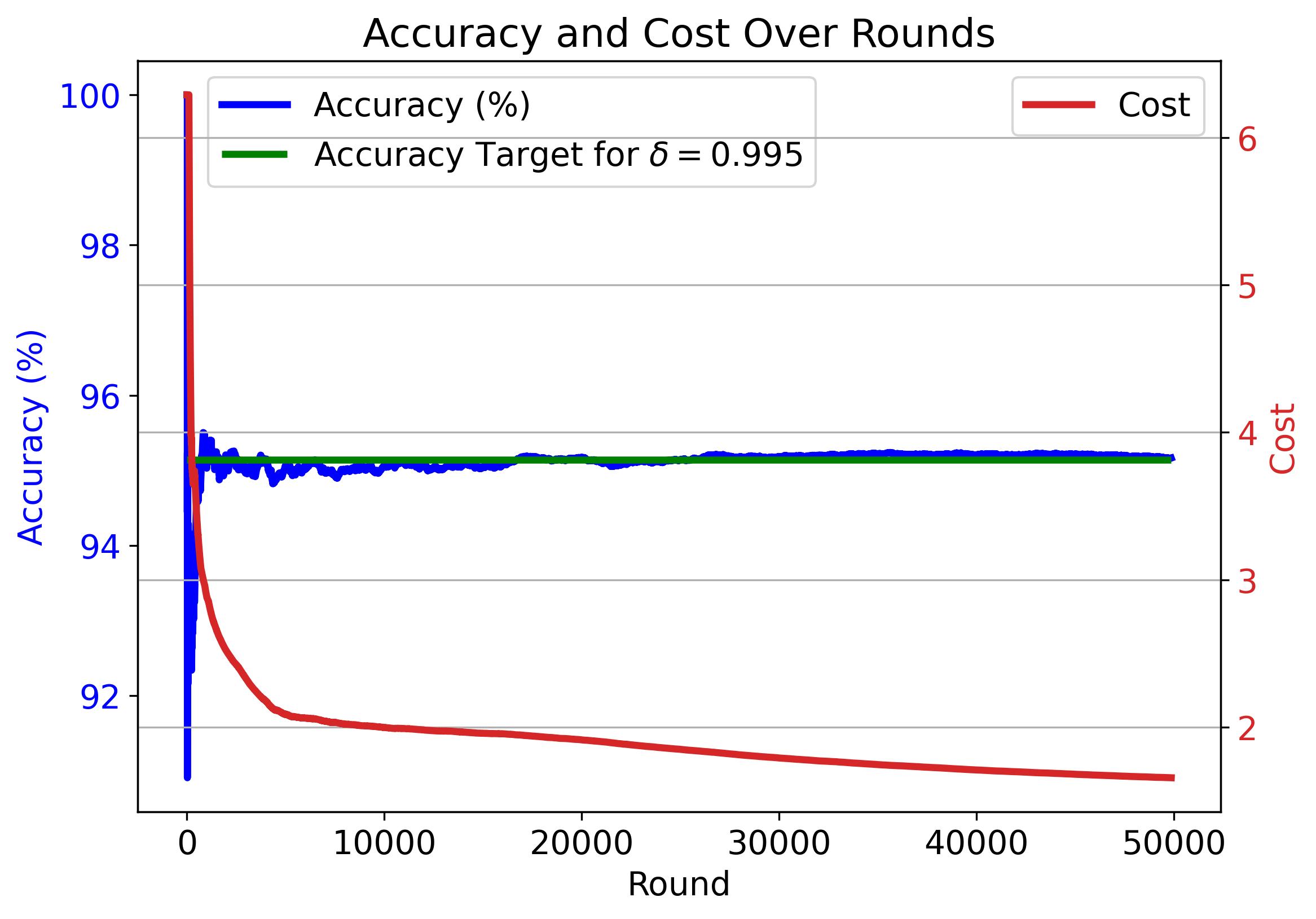} &
    \includegraphics[width=0.32\linewidth]{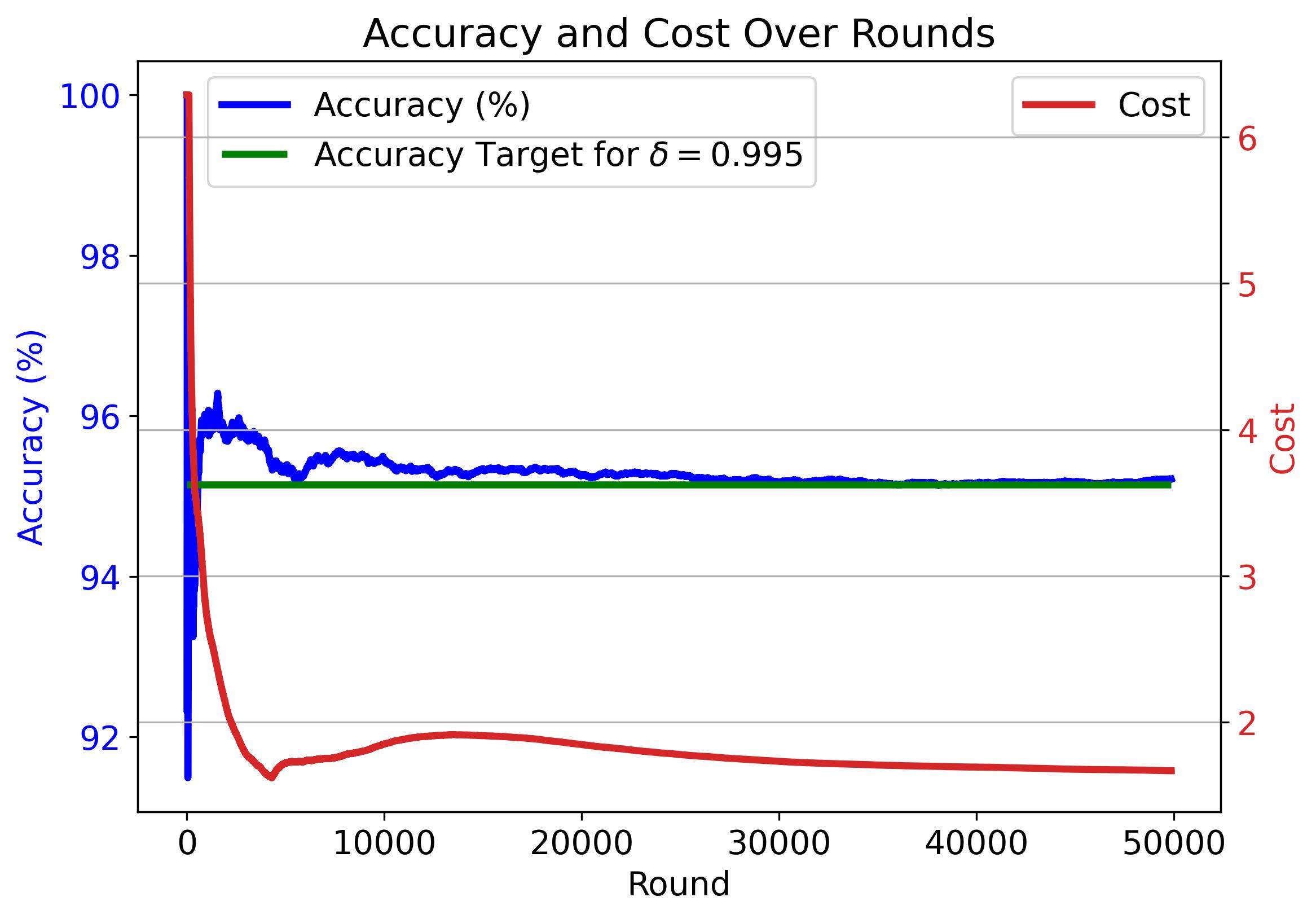} &
    \includegraphics[width=0.32\linewidth]{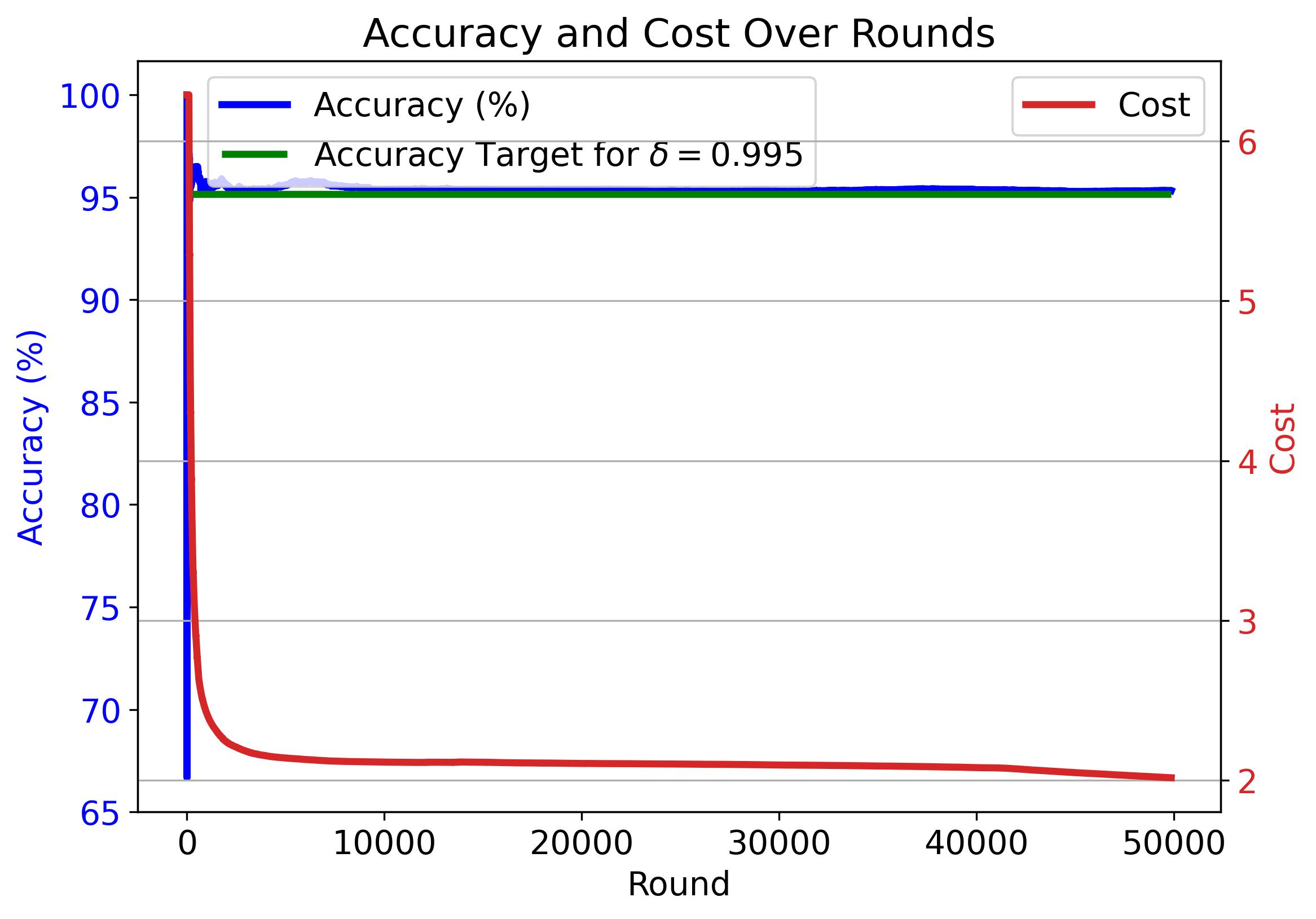} \\
    \includegraphics[width=0.32\linewidth]{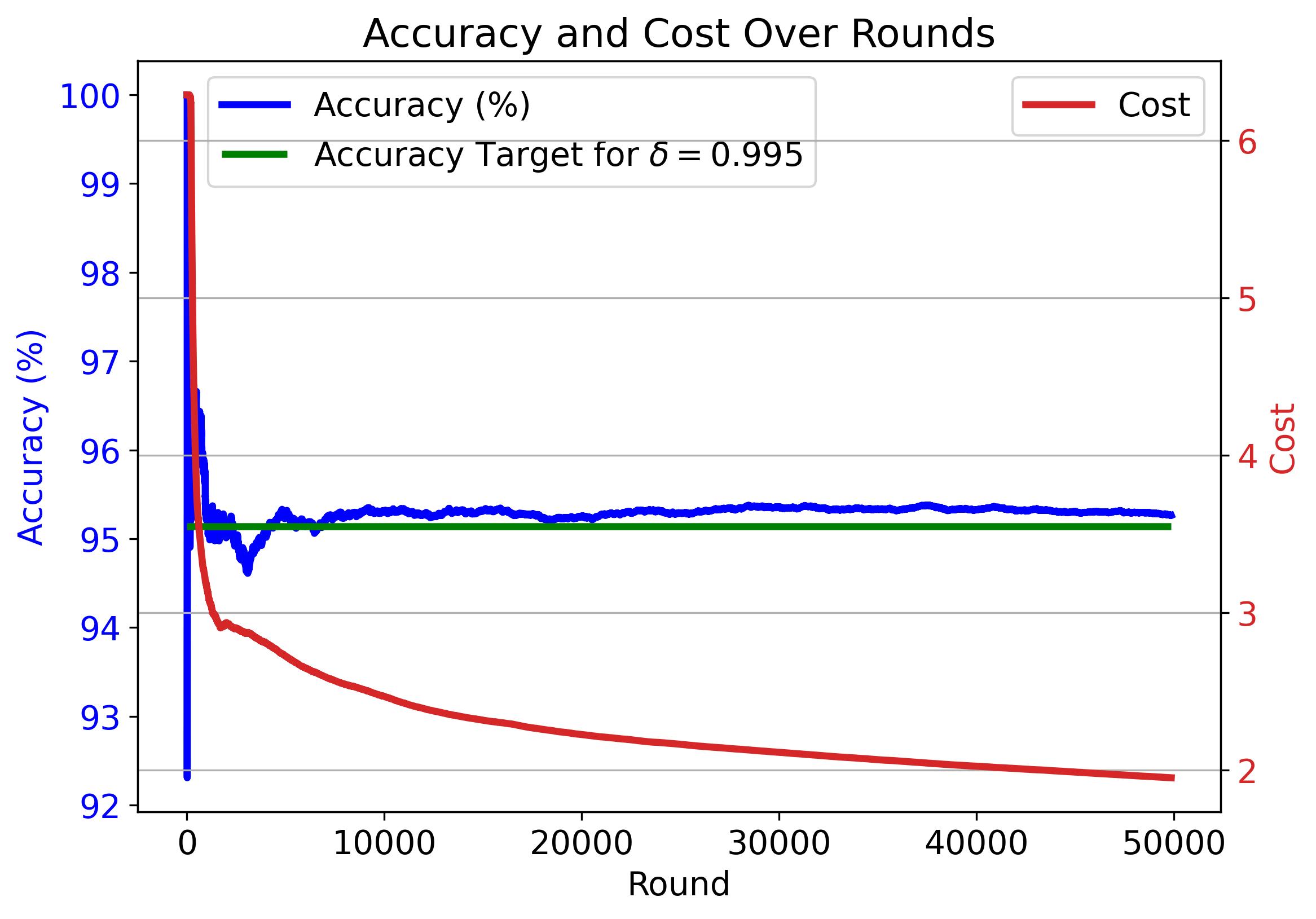} &
    \includegraphics[width=0.32\linewidth]{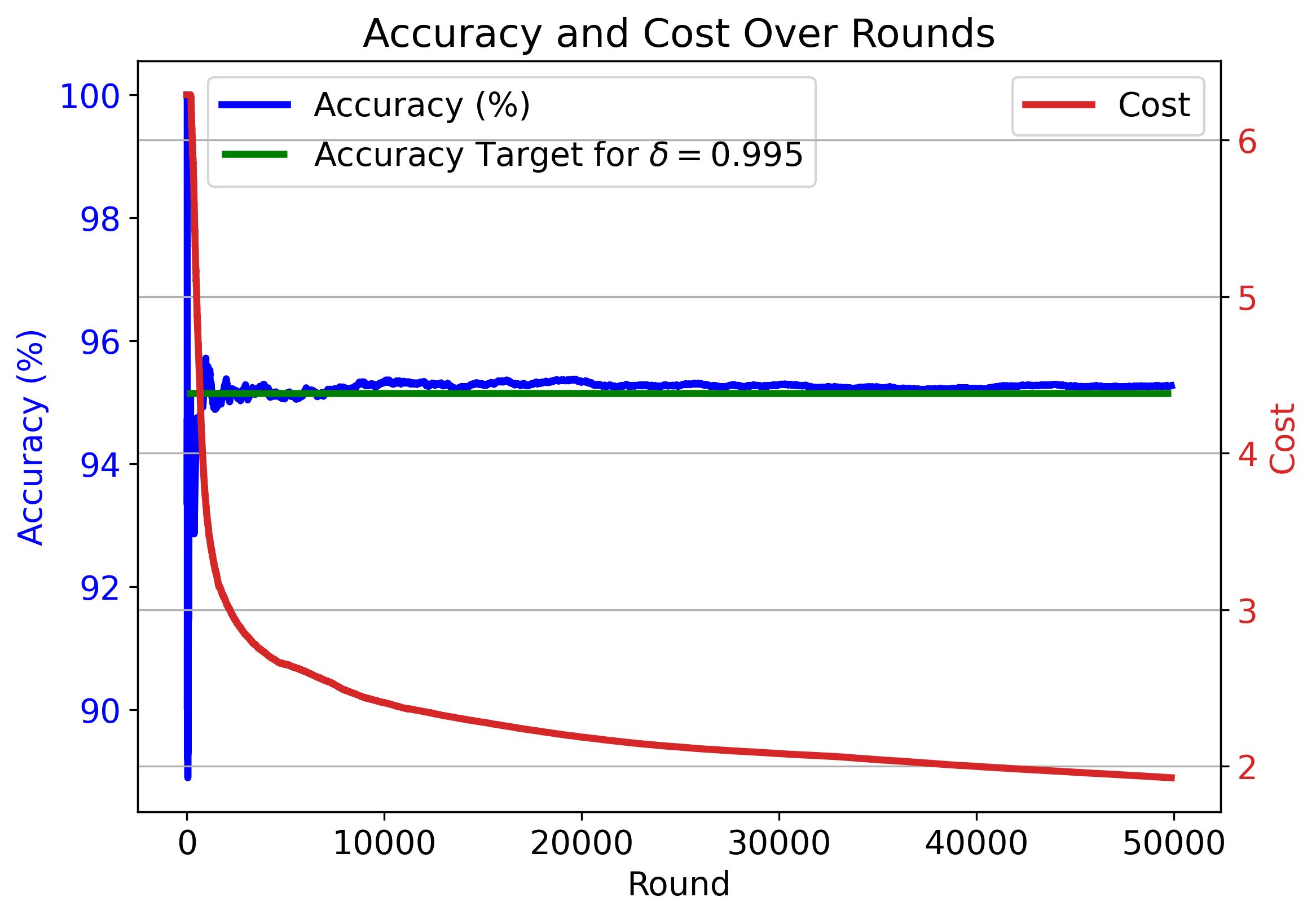} &
    \includegraphics[width=0.32\linewidth]{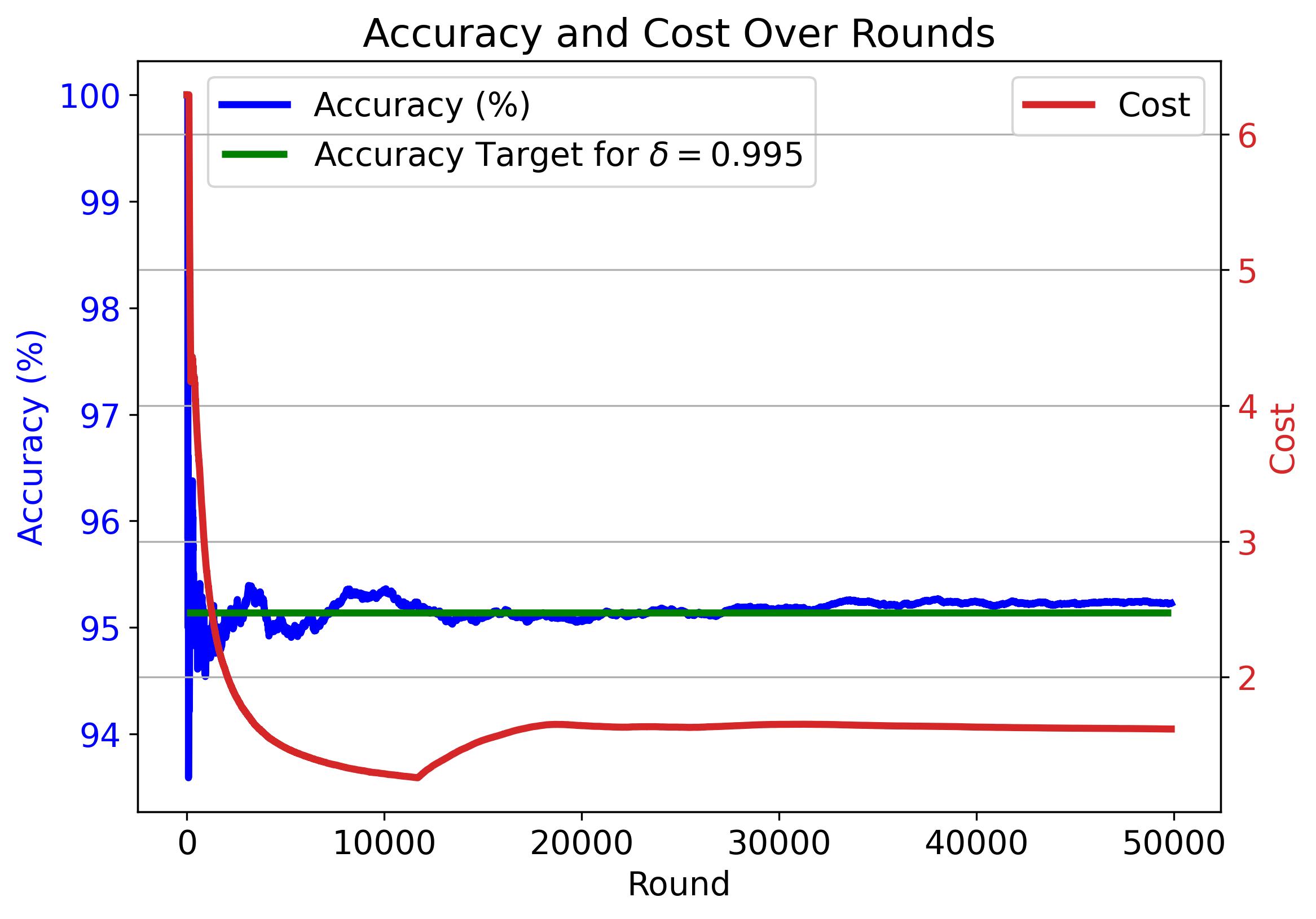} \\
    \includegraphics[width=0.32\linewidth]{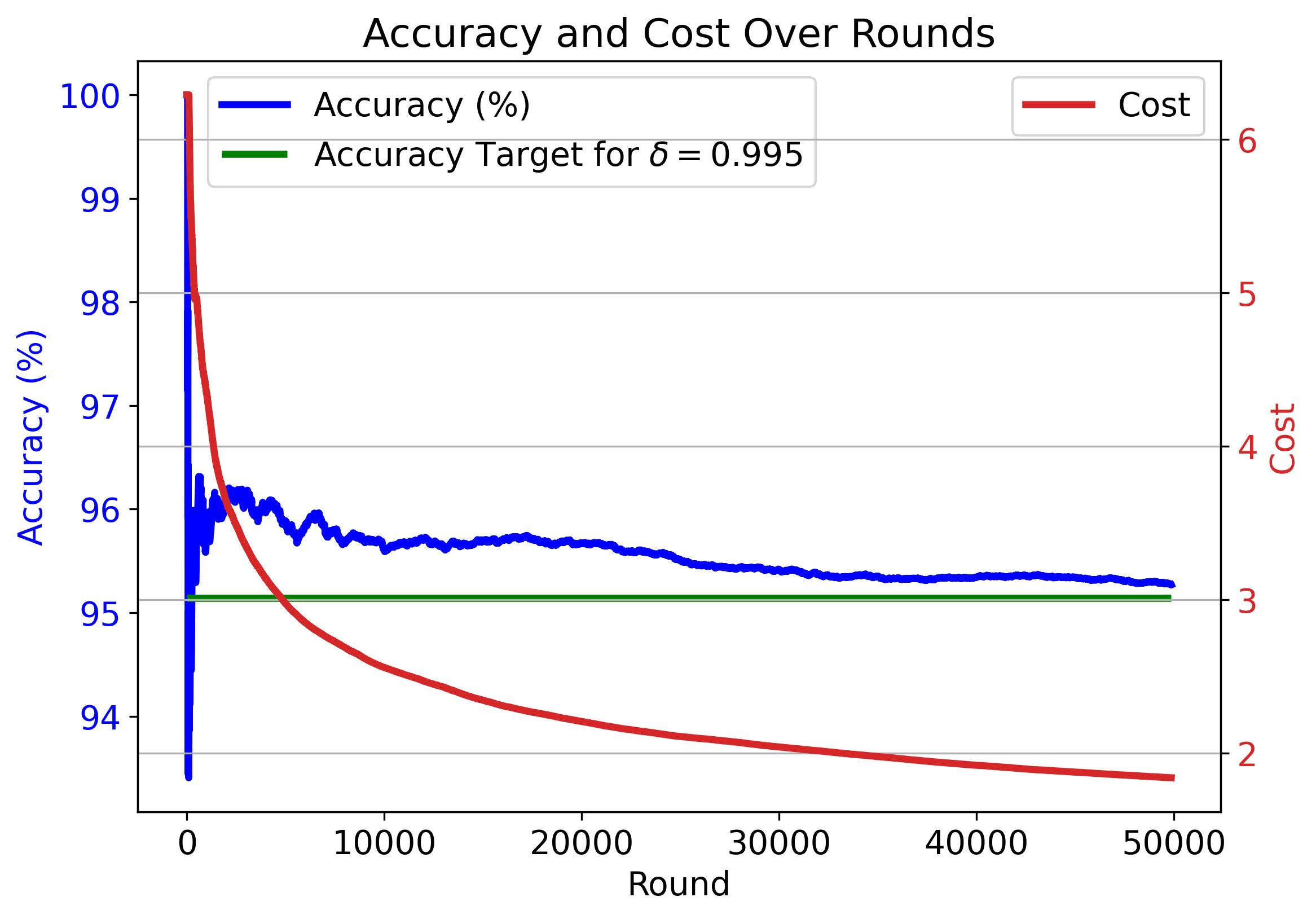} &
    \includegraphics[width=0.32\linewidth]{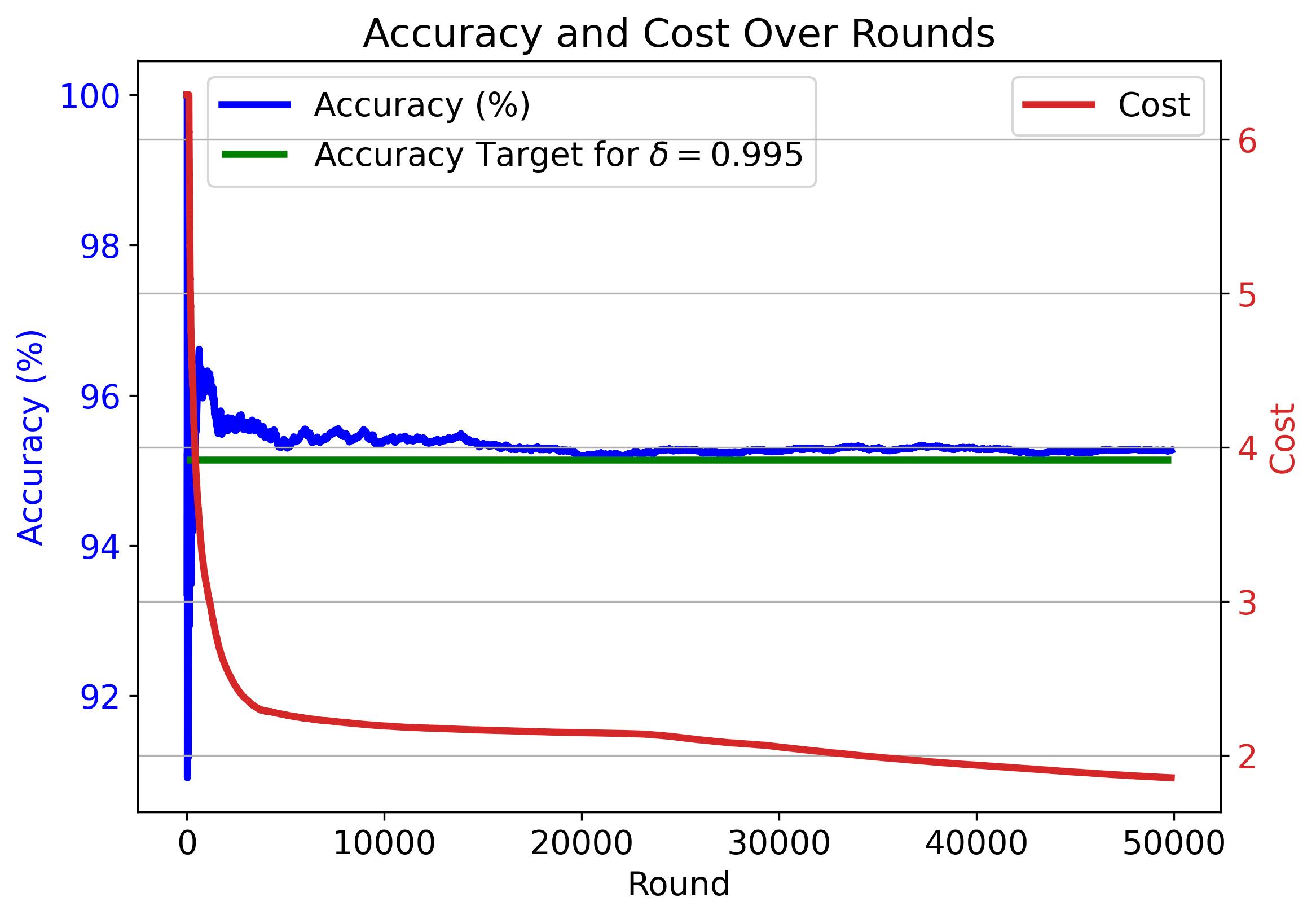} &
    \includegraphics[width=0.32\linewidth]{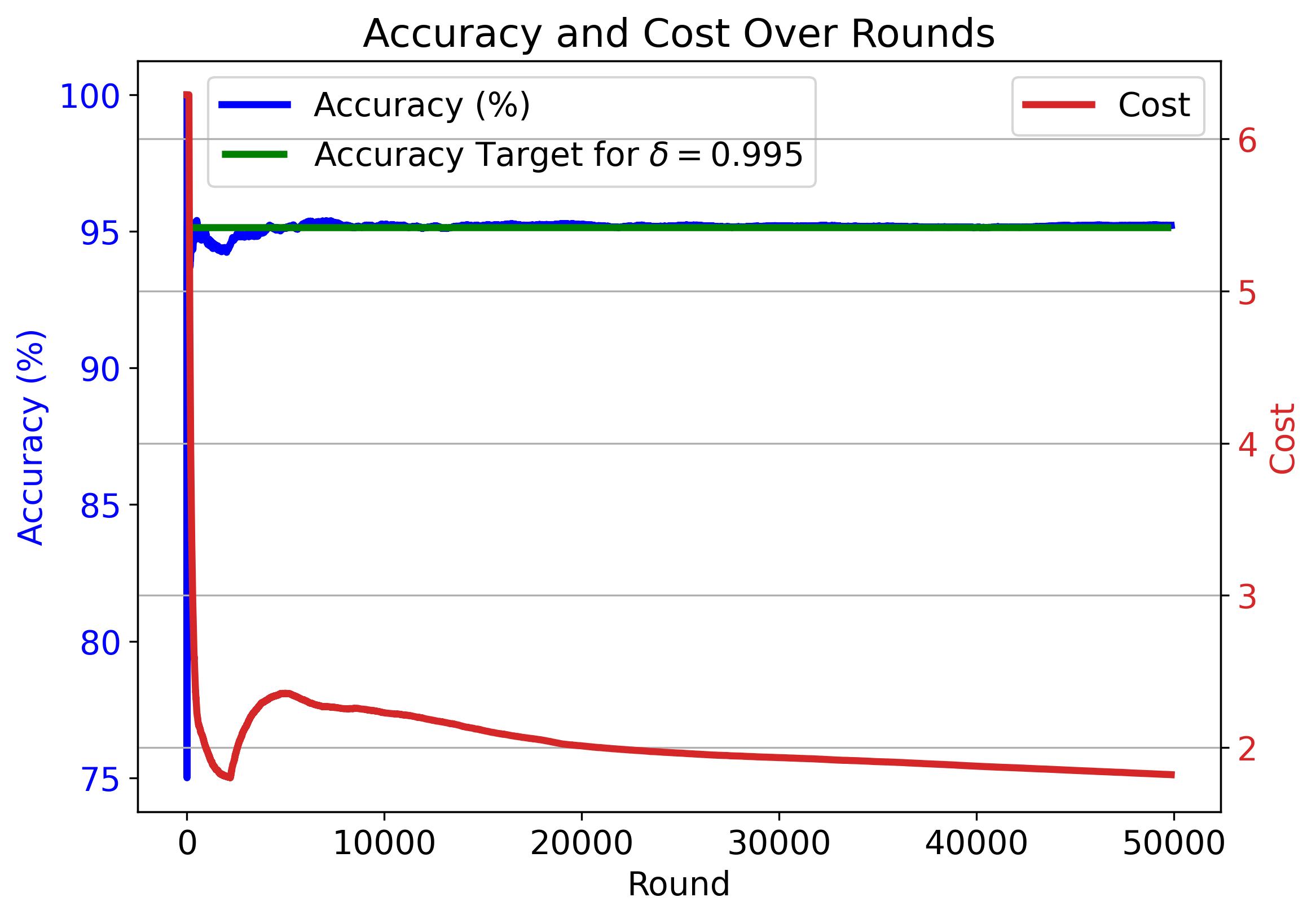} \\

  \end{tabular}
  \caption{Cumulative average accuracy (blue) and cost (red) of CaMVo with $\delta=0.96$, \(k_{\min}=1\) under nine different random permutations of the dataset. The green line marks each \(\delta\)-specific target accuracy.}
  \label{fig:imdb_avg_extra1}
\end{figure}

At the extreme threshold \(\delta=0.999\), CaMVo predominantly queries the full ensemble, resulting in a near‐linear cost profile until about round 25,000. For \(\delta\le0.98\), cost quickly converges to a stable minimum, reflecting identification of the least‐expensive subset that meets the target accuracy. Across all plots, CaMVo achieves or exceeds the respective accuracy target. For very high thresholds (\(\delta\ge0.995\)), final accuracy hovers just above the threshold, as expected; as \(\delta\) decreases, the accuracy surplus grows. Below \(\delta=0.96\), accuracy plateaus at approximately 94.06\%, corresponding to the performance of the single cheapest model (‘llama-3.1-8b’).  

Overall, these results demonstrate CaMVo’s ability to balance exploration and exploitation, swiftly discover cost‐effective subsets, and reliably satisfy the target accuracy requirements.

% To evaluate CaMVo’s robustness to input ordering, Figure~\ref{fig:imdb_avg_extra2} shows the mean cumulative average accuracy and cost trajectories (solid lines) for \(\delta=0.995\), \(k_{\min}=1\), averaged over 20 random shuffles of the MMLU dataset. Shaded bands denote one standard deviation. Although the accuracy band is initially wide due to the exploration of CaMVo, and also different mixes of easy and hard examples across the shuffles; it contracts rapidly, underscoring CaMVo’s consistent attainment of the target accuracy across permutations. The cost band also narrows over time, illustrating stable convergence to low‐cost ensembles. Notably, the accuracy band remains much tighter than the cost band, since CaMVo targets above the accuracy threshold but does not optimize for a fixed cost.
% To evaluate CaMVo’s robustness to input ordering in more detail, Figure~\ref{fig:imdb_avg_extra1} overlays the mean cumulative average accuracy and cost plots of nine individual runs from these permutations. In all these runs, CaMVo reliably reaches an average accuracy above the 95.14\% target while reducing per‐round cost below \$2.1, demonstrating that its exploration–exploitation balance is invariant to input ordering.

To assess CaMVo’s sensitivity to dataset ordering, Figure~\ref{fig:imdb_avg_extra2} plots the mean cumulative accuracy and cost curves (solid lines) for \(\delta=0.995\), \(k_{\min}=1\), averaged over 20 random shuffles. Shaded regions indicate one standard deviation. Although the accuracy band starts wide, which reflects both the initial exploration and also the varying mixes of 'easier' and 'harder' instances; this rapidly contracts over time, confirming CaMVo’s reliable attainment of the target accuracy across permutations. The cost band likewise narrows, demonstrating stable convergence to low‐cost ensembles. Notably, the accuracy variability remains much smaller than the cost variability, as CaMVo does not optimize toward a fixed cost.

Figure~\ref{fig:imdb_avg_extra1} further investigates ordering effects by overlaying nine individual runs from these permutations. In every run, CaMVo exceeds the 95.14\% accuracy target while driving per‐round cost below \$2.10, corroborating that its exploration–exploitation strategy, and the resulting cost–accuracy performance is effectively invariant to the input sequence.

\section{Experiments on the AG News Classification Dataset}
\label{append:exp_AG}

We further evaluate CaMVo on the AG News Classification Dataset \cite{zhang2015character}, which contains news articles gathered from over 2,000 online sources. Each article is categorized into one of four topics: {\em World}, {\em Sports}, {\em Business}, or {\em Science/Technology}. From the original training set of $120,000$ samples, we uniformly sample $50,000$ instances for our experiments. As before, we compare CaMVo against each individual LLM, a full‐ensemble \textit{Majority Vote}, and the \textit{Baseline Method} (Algorithm~\ref{alg:baseline}).

\paragraph{Models and setup.}  
We employ Anthropic’s Claude 3-7 Sonnet, and Claude 3-5 Haiku \cite{claude3sonnet}; OpenAI’s o3-mini, o1-mini, and GPT-4o-mini \cite{openai2024models}; and Meta’s LLaMA-3.3 and LLaMA-3.1 \cite{llama2024models}. All queries use temperature = 0.3 and top-\(p=1\), where applicable. We extract 384-dimensional contextual embeddings with \texttt{all-MiniLM-L6-v2} \cite{wang2020minilm} and approximate the confidence bound \(\delta_{\mathcal{A}}(\boldsymbol L,\boldsymbol\omega)\) via the Beta‐CDF, as in \S\ref{sec:exp_MMLU}. To improve computational efficiency, we again approximate the confidence score \(\delta_{\mathcal{A}}(\boldsymbol{L}, \boldsymbol{\omega})\) using the cumulative distribution function (CDF) of the Beta distribution rather than the closed-form expression in Lemma~\ref{lemma:subset_conf}:
\[
\delta_{\mathcal{A}}(\boldsymbol L, \boldsymbol \omega) \approx 1 - F_{\text{Beta}}\left(0.5;\, W_{L,\mathcal{A}},\, W_{\mathcal{A}} - W_{L,\mathcal{A}} \right),
\]
where \(F_{\text{Beta}}(x; \alpha, \beta)\) is the CDF of a \(\text{Beta}(\alpha, \beta)\) distribution, \(W_{L,\mathcal{A}} = \sum_{i \in \acttset} \omega_i \cdot L_i\), and \(W_{\mathcal{A}} = \sum_{i \in \acttset} \omega_i\).

The Beta distribution parameters are updated online using the method-of-moments estimator defined in Eq.~\eqref{eq:update_est2}, with a regularization term \(\epsilon = 10^{-6}\).

LLMs are queried using a consistent prompt format tailored for categorical output. The system instruction specifies the possible categories that the news article can be classified into, and the expected output format and behavior, to ensure that the model returns a single category. To form each instance for the User prompt of the LLM, we concatenate the article’s title and description provided in the dataset. The standard query format is shown below:

\begin{tcolorbox}[colback=gray!5!white, colframe=black!75!black, title=Query Format for AG News Classification Dataset]
\textbf{System:} Classify the following news article as \texttt{WORLD}, \texttt{SPORT}, \texttt{BUSINESS}, or \texttt{SCIENCE/TECHNOLOGY}. Respond with only the chosen category. If the article is ambiguous, select the closest matching category. \\
\textbf{User:} \texttt{<Title>} :  \texttt{<Description>}
\end{tcolorbox}

For LLMs that do not support separate system and user messages (e.g., via a chat API), the instruction is prepended directly to the user input.

An example query using this format, with a sample review from the AG News Classification Dataset, is provided below:

\begin{tcolorbox}[colback=gray!5!white, colframe=black!75!black, title=Example Query for AG News Classification Dataset]
\textbf{System:} Classify the following news article as \texttt{WORLD}, \texttt{SPORT}, \texttt{BUSINESS}, or \texttt{SCIENCE/TECHNOLOGY}. Respond with only the chosen category. If the article is ambiguous, select the closest matching category. \\
\textbf{User:} Celtic beat Dunfermline 2-0: Celtic regained the top spot in the Scottish Premier League after a 2-0 victory away to Dunfermline. The result leaves Celtic top-of-the-table with 44 points from 18 games, a single point clear of arch-rivals Glasgow Rangers. \\
Sentiment: \\
\textbf{LLM:} \texttt{SPORT}
\end{tcolorbox}

We apply a single random permutation to the $50,000$ sampled data instances from the dataset and maintain this identical ordering across all methods to ensure a fair and consistent comparison.

\paragraph{Results.}  
Table~\ref{tab:exp_agnews_baseline} (Right) reports the accuracy and cost (in dollars per million input tokens) of each LLM and the two baselines. Once again, the baseline underperforms the best individual model (85.68\% vs.\ 87.16\%) despite incurring a substantially higher cost. This performance gap may stem from the inherent ambiguity of the dataset, as certain news articles may belong to multiple categories, making consensus through majority voting more difficult to achieve.

Table~\ref{tab:exp_agnews_camvo} presents CaMVo’s accuracy–cost trade‐off across various thresholds \(\delta\) and \(k_{\min}\in\{1,3\}\). CaMVo’s hyperparameters are \(\alpha=0.7\), \(\lambda_R=5\), and \(\lambda_L=1\); and the \emph{Target Accuracy} is computed similarly as \(\delta\times 85.68\%\).
Across all configurations, CaMVo meets or exceeds its target accuracy. Further, CaMVo achieves less than half the cost (when \( \delta = 0.99 \) and \( k_{\min} = 1 \)) at a slightly lower accuracy of \( 86.18\% \) compared to the baseline, confirming its practicality for large‐scale sentiment annotation without any pre‐training or ground-truth labels.

\begin{table}
% \normalsize
\centering 
% \small
\begin{tabular}{lcc}
\toprule
\textbf{LLM / Method} & \textbf{Accuracy (\%)} & \textbf{Cost} \\
\midrule
o3-mini                & 87.16                  & 1.10 \\
llama-3.3-70b                & 86.35                  & 0.59 \\
o1-mini       & 85.77                  & 1.10 \\
claude-3-7-sonnet            & 85.27                  & 3.00 \\
claude-3-5-haiku                & 84.71                  & 0.80 \\
gpt-4o-mini   & 82.63                  & 0.15 \\
llama-3.1-8b          & 77.85                  & 0.05 \\
\midrule
Majority Vote          &  85.55                  & 6.79 \\
Baseline Method        & 85.68                  & 6.79 \\
\bottomrule
\end{tabular}
\caption{Accuracy and cost of individual LLMs and baseline ensemble methods on the AG News Classification Dataset.}
% \vspace{-2mm}
\label{tab:exp_agnews_baseline}
\end{table}

\begin{table}
\centering
\begin{tabular}{c c c c c c}
\toprule
\textbf{CaMVo $\delta$} & \makecell{\textbf{Target}\\ \textbf{Acc. (\%)}} & \makecell{\textbf{Acc. (\%)}\\ \boldmath{$k_{\min}=1$}} & \makecell{\textbf{Cost}\\ \boldmath{$k_{\min}=1$}} & \makecell{\textbf{Acc. (\%)}\\ \boldmath{$k_{\min}=3$}} & \makecell{\textbf{Cost}\\ \boldmath{$k_{\min}=3$}} \\
\midrule
0.999 & 85.59 & 85.98 & 6.82 & 85.98 & 6.82 \\
0.998 & 85.51 & 85.98 & 6.82 & 85.98 & 6.82 \\
0.995 & 85.25 & 86.36 & 5.24 & 86.36 & 5.24 \\
0.99 & 84.82 & 86.18 & 2.98 & 86.18 & 2.98 \\
0.98 & 83.97 & 84.72 & 1.39 & 84.72 & 1.39 \\
0.96 & 82.25 & 83.38 & 0.52 & 83.85  & 0.99  \\ 
0.95 & 81.40 & 83.18 & 0.40 & 83.77 & 0.96 \\
0.9 & 77.11 & 80.66 & 0.29 & 83.75 & 0.88 \\
0.85 & 72.83 & 79.62 & 0.26 & 83.74 & 0.87 \\
0.80 & 68.54 & 79.34 & 0.23 & 83.73  & 0.87  \\
\bottomrule
\end{tabular}
% \caption{Accuracy and cost of CaMVo on the MMLU dataset under varying confidence thresholds $\delta$ and minimum vote counts $k_{\min} \in \{1, 3\}$. Note that results the cost of the baseline method is $9.14$ for comparison. }
\caption{Accuracy and cost of CaMVo on the AG News Classification Dataset under varying confidence thresholds $\delta$ and $k_{\min} \in \{1, 3\}$. For reference, the cost of the baseline method is \$6.79 per million tokens.}
\vspace{-3mm}
\label{tab:exp_agnews_camvo}
\end{table}

% first =  [0.9990000000000001, 0.998, 0.995, 0.99, 0.98, 0.96, 0.95, 0.9, 0.85, 0.8]
% second =  [85.59, 85.51, 85.25, 84.82, 83.97, 82.25, 81.4, 77.11, 72.83, 68.54]
% third =  [85.98, 85.98, 86.36, 86.18, 84.72, 83.38, 83.18, 80.66, 79.62, 79.34]
% fourth =  [6.82, 6.82, 5.24, 2.98, 1.39, 0.52, 0.4, 0.29, 0.26, 0.23]
% fifth =  [85.98, 85.98, 86.36, 86.18, 84.72, 83.85, 83.77, 83.75, 83.74, 83.73]
% sixth =  [6.82, 6.82, 5.24, 2.98, 1.39, 0.99, 0.96, 0.88, 0.87, 0.87]

\begin{figure}[ht]
  \centering
\includegraphics[width=0.47\linewidth]{scatter5.jpg}
   \caption{ Cost–accuracy trade‐off for AG News Classification Dataset: gray dots show every LLM subset via weighted majority voting, yellow dots trace their Pareto‐optimal frontier, blue markers are CaMVo at $k_{\min}=1$, green markers at $k_{\min}=3$, cyan markers denote the individual single LLMs, and the red marker denotes the Baseline Method. 
   % (Right) Empirical average accuracy and cost of CaMVo on the AG News Classification Dataset with $\delta=0.998$, $k_{\min}=1$ over rounds.
   }
  \label{fig:agnews_scatter}
\end{figure}

% Figure~\ref{fig:imdb_scatter} illustrates the cost–accuracy trade-off of {\em All Subsets} (gray) versus CaMVo (blue and green). Each gray and yellow point corresponds to a subset of LLMs used to label the dataset via weighted majority voting with ground-truth accuracies. Note that all possible subset combinations are included. The yellow curve traces the Pareto-optimal frontier of these subsets. CaMVo results are shown for varying $\delta$: blue markers indicate $k_{\min}=1$, and green markers $k_{\min}=3$. The red marker indicates the baseline method. It can be seen that CaMVo’s results closely track the Pareto front of all subset combinations when cost is low ($<1$), however, CaMVo performs poorly in higher cost regions. This highlights the drawback of CaMVo, when majority vote does not perform well, CaMVo does not perform well either. 

Figure~\ref{fig:agnews_scatter} presents the analogous comparison of Figure~\ref{fig:mmlu_scatter} (Left) on the AG News Classification Dataset. As before, gray points and the yellow Pareto-frontier points show all possible subset combinations, cyan markers show individual single LLMs, while blue and green markers plot CaMVo at \(k_{\min}=1\) and \(3\), respectively. The red marker denotes the Baseline Method. Again, CaMVo closely matches the Pareto front in the low-cost regime (cost $<0.8$), but lags behind in higher-cost regions.

% Figure~\ref{fig:imdb_scatter} (Right) shows CaMVo’s cumulative average accuracy (blue) and cost (red) for \(\delta=0.995\), \(k_{\min}=1\); the green horizontal line indicates the target accuracy. Again, CaMVo explores larger, more expensive subsets in early rounds, and rapidly shifts to smaller, cheaper subsets that still satisfy the accuracy threshold as LCB estimates converge. This again illustrates CaMVo’s ability to quickly identify and exploit the most cost‐effective ensembles without sacrificing labeling quality. Additional plots with different parameters are provided in Appendix \ref{append:exp_IMDB}.

\begin{figure}[ht]
  \centering
  \begin{tabular}{ccc}
    \includegraphics[width=0.32\linewidth]{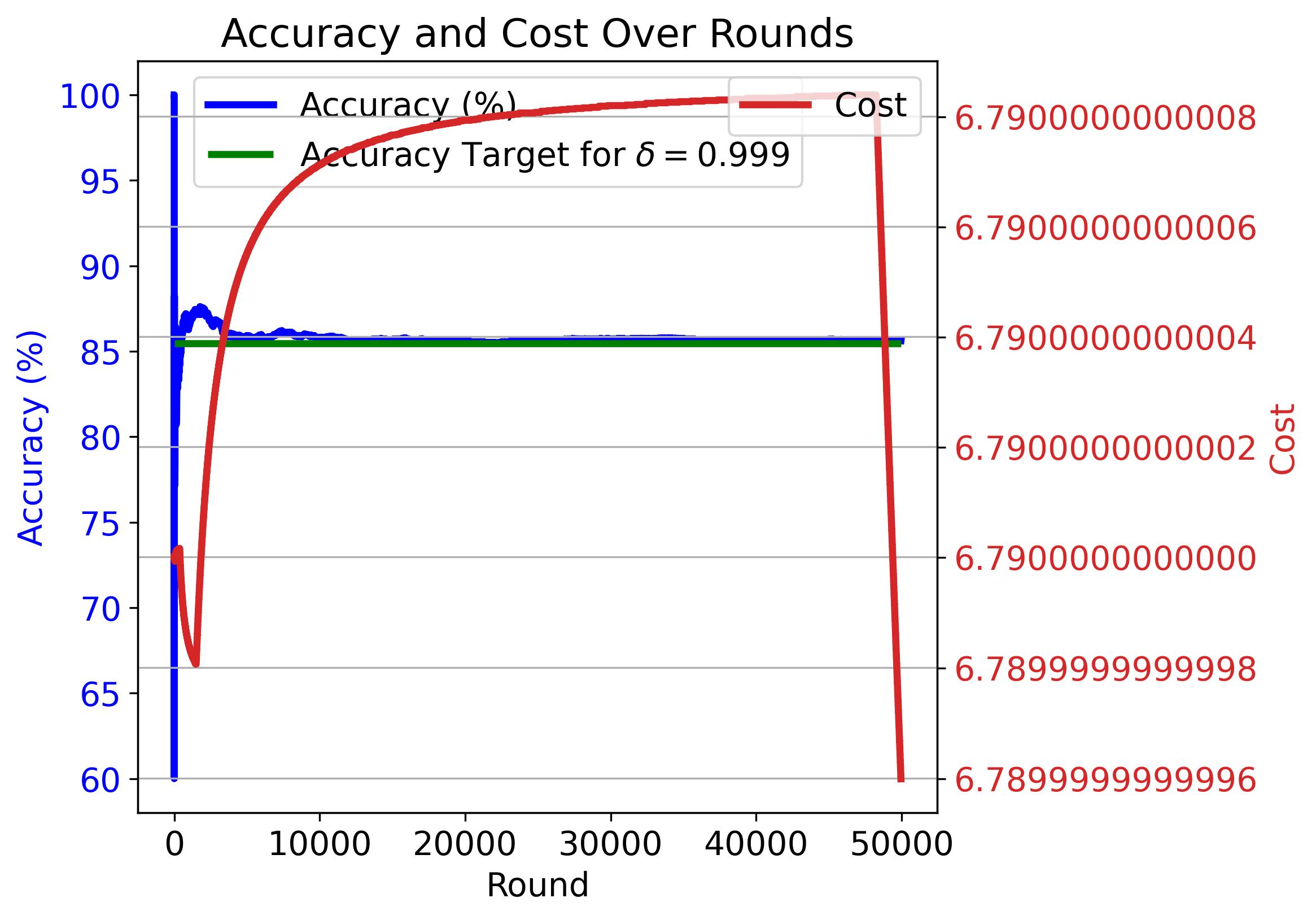} &
    \includegraphics[width=0.32\linewidth]{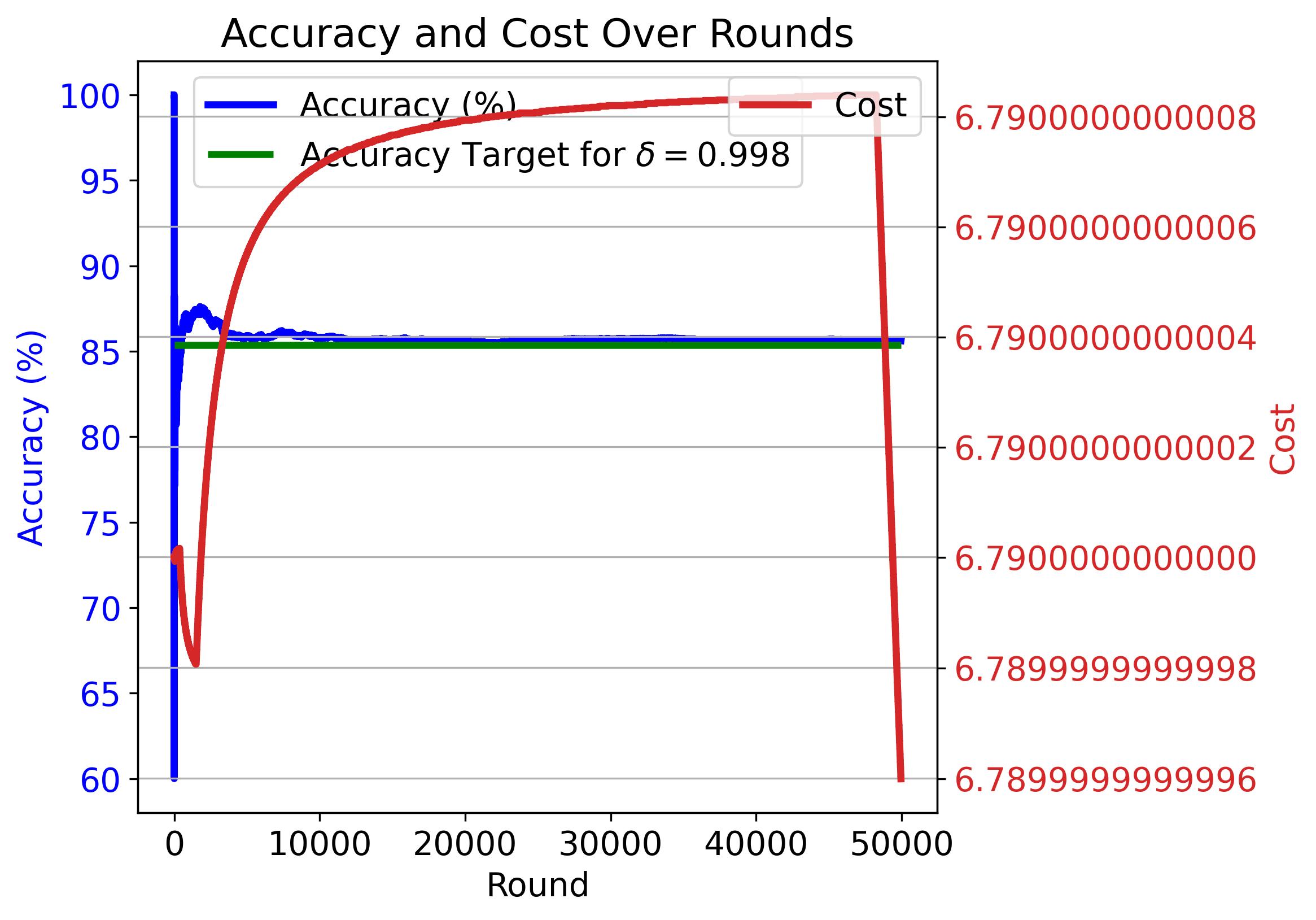} &
    \includegraphics[width=0.32\linewidth]{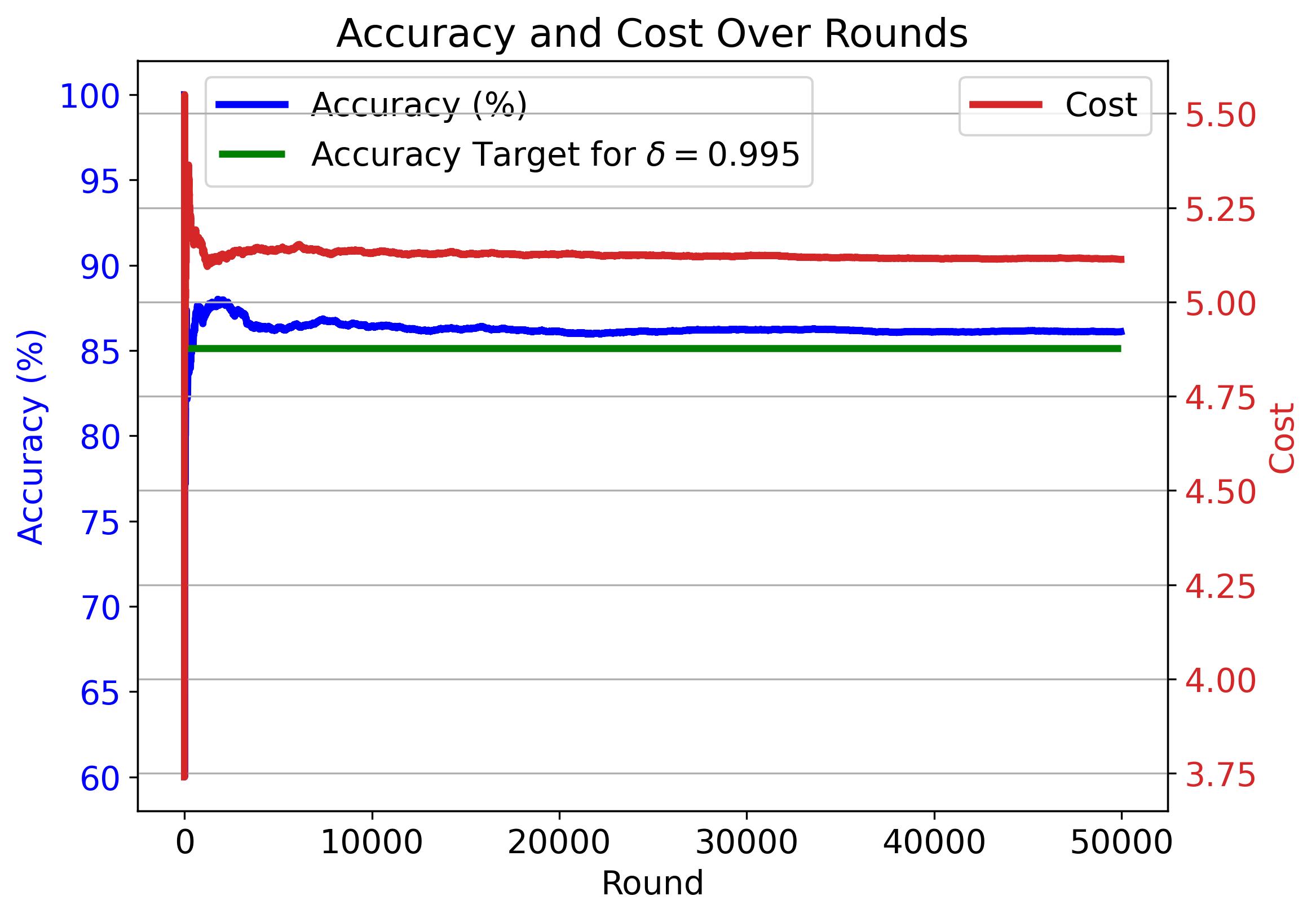} \\
    \includegraphics[width=0.32\linewidth]{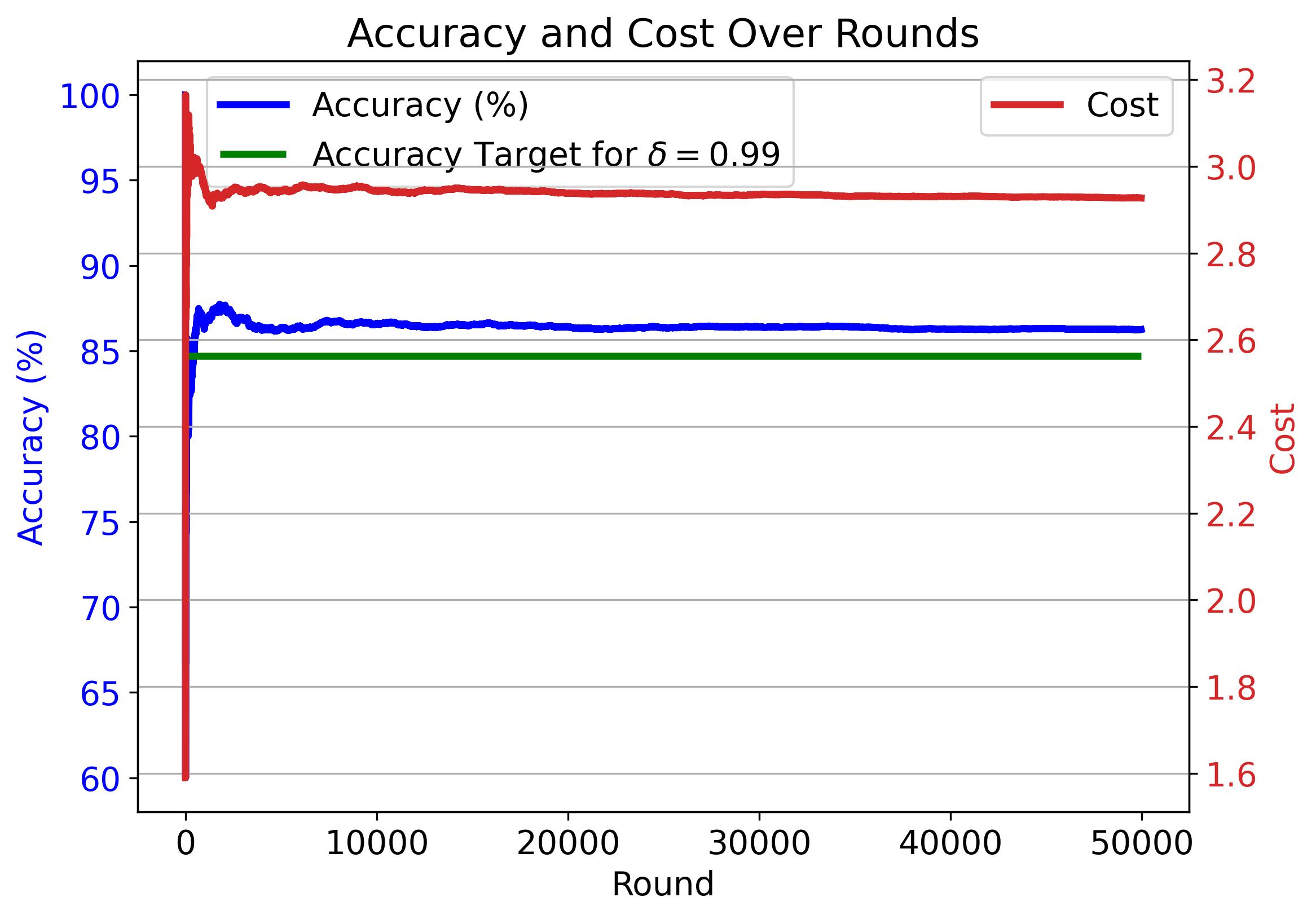} &
    \includegraphics[width=0.32\linewidth]{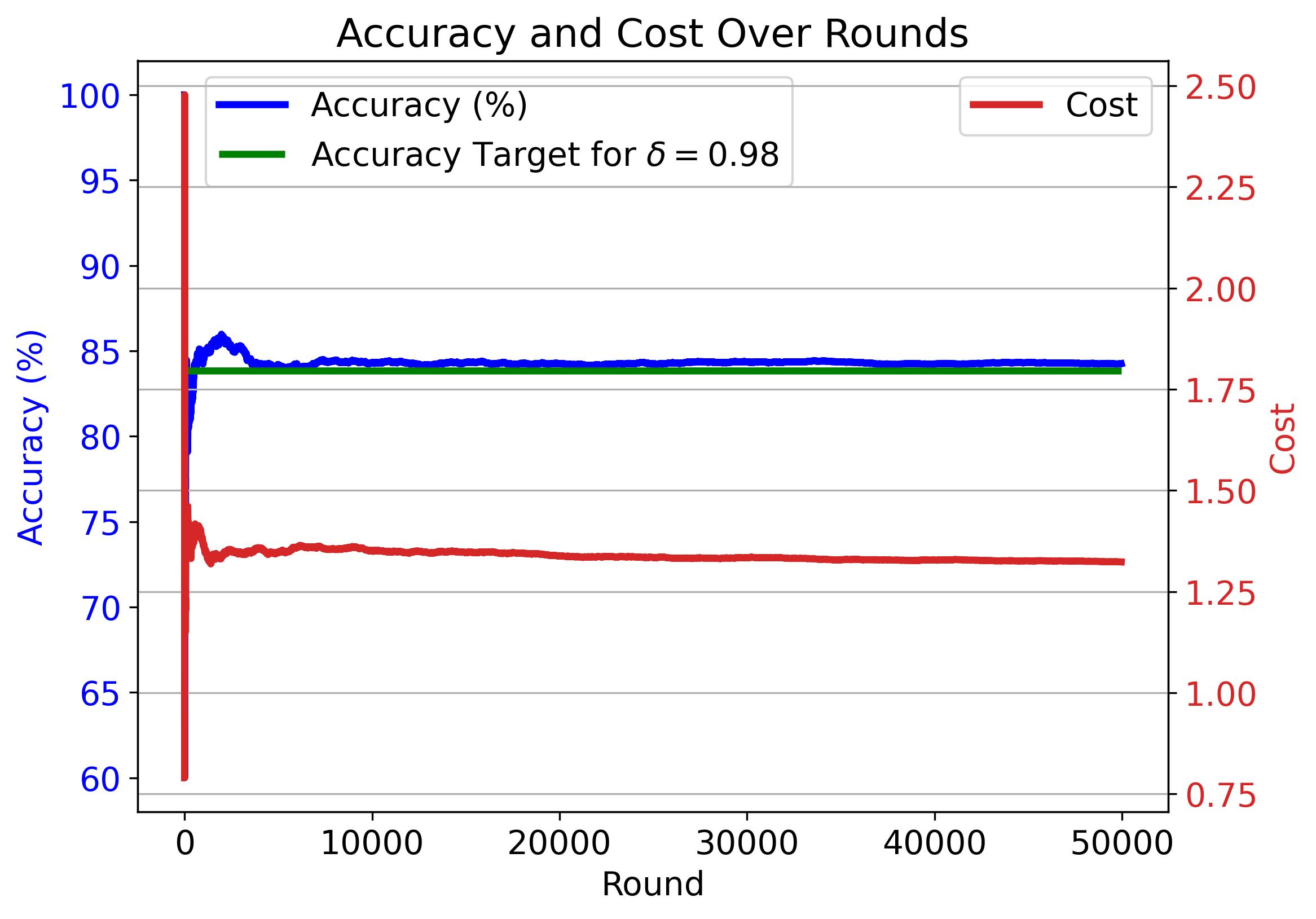} &
    \includegraphics[width=0.32\linewidth]{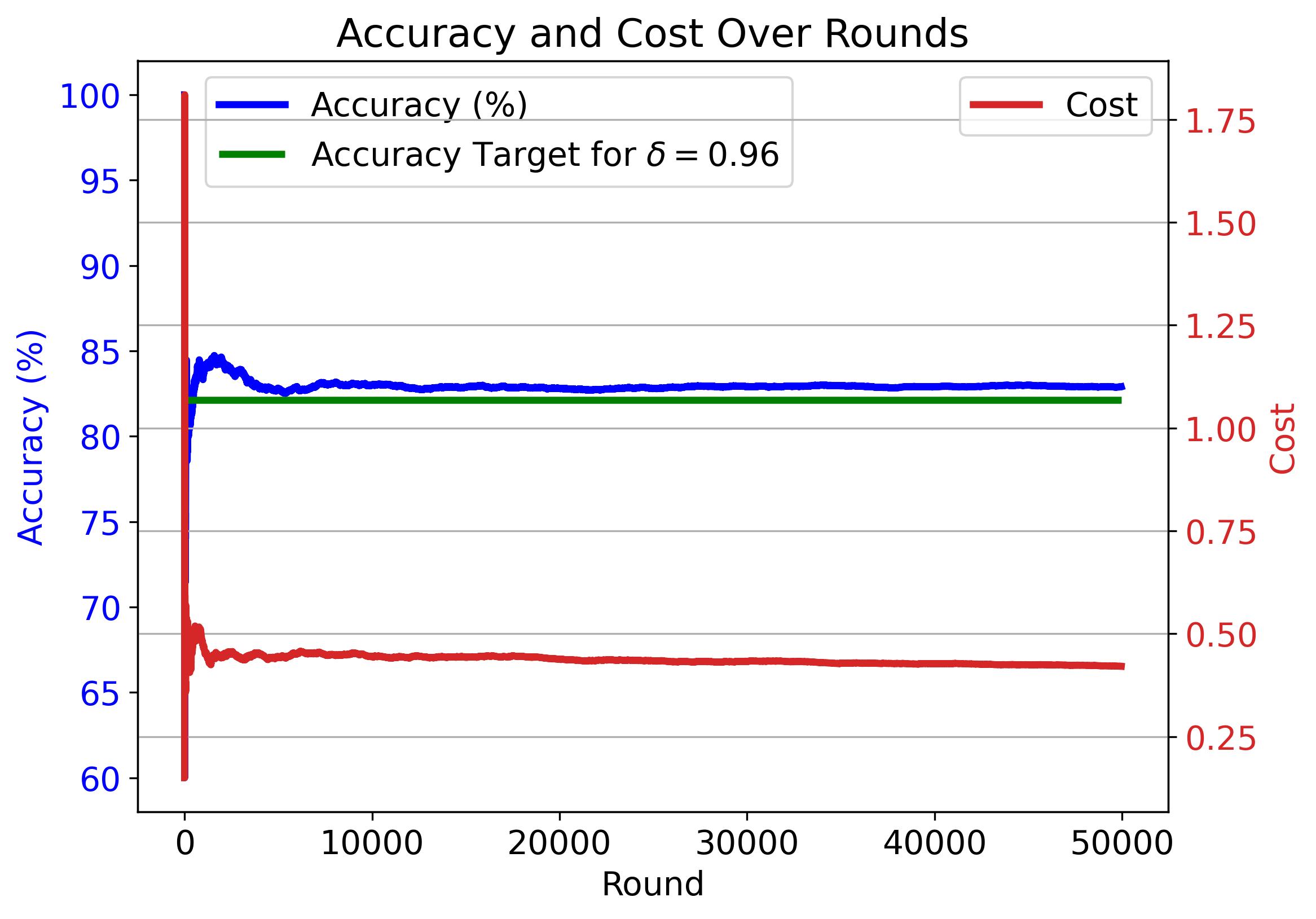} \\
    \includegraphics[width=0.32\linewidth]{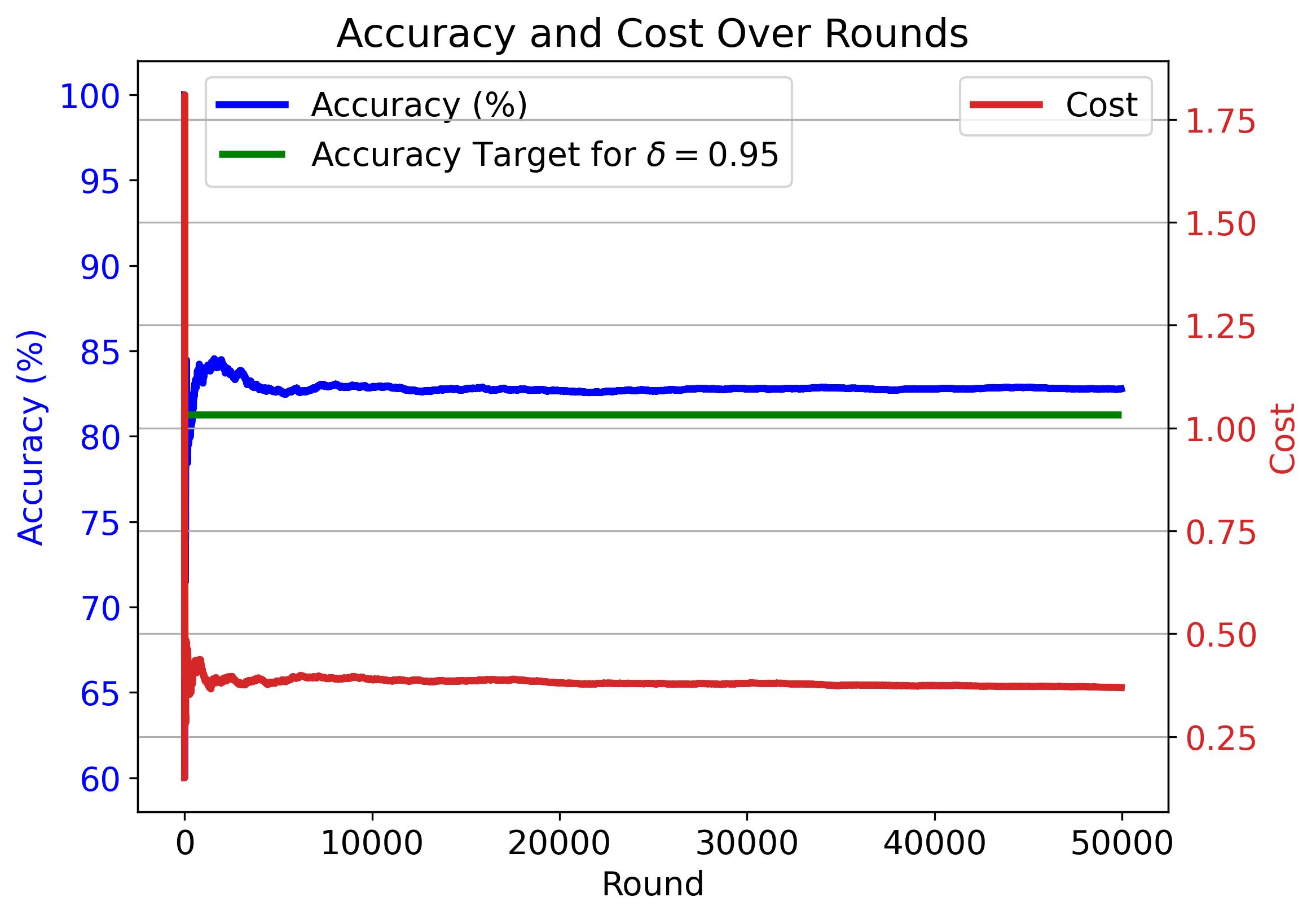} &
    \includegraphics[width=0.32\linewidth]{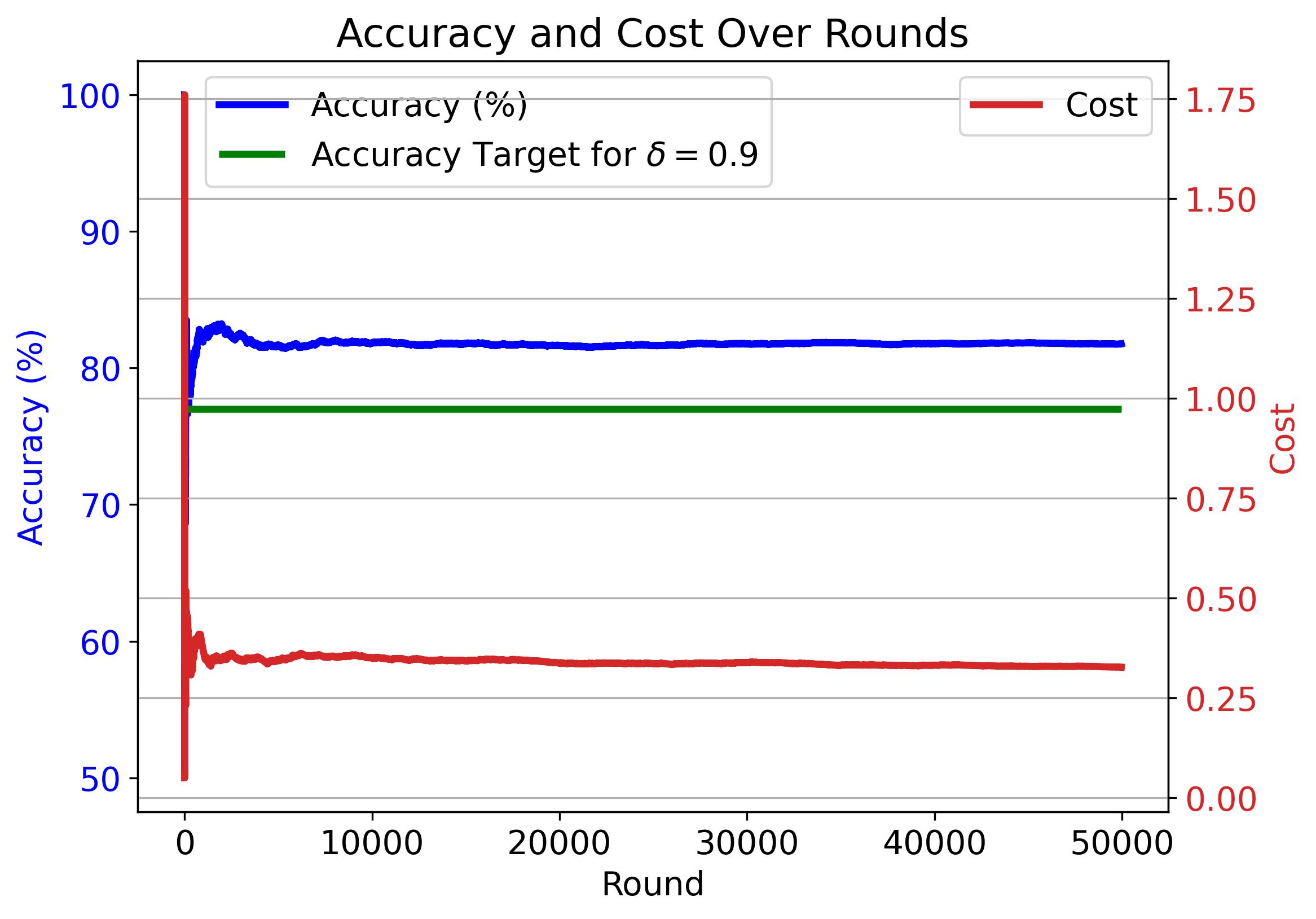} &
    \includegraphics[width=0.32\linewidth]{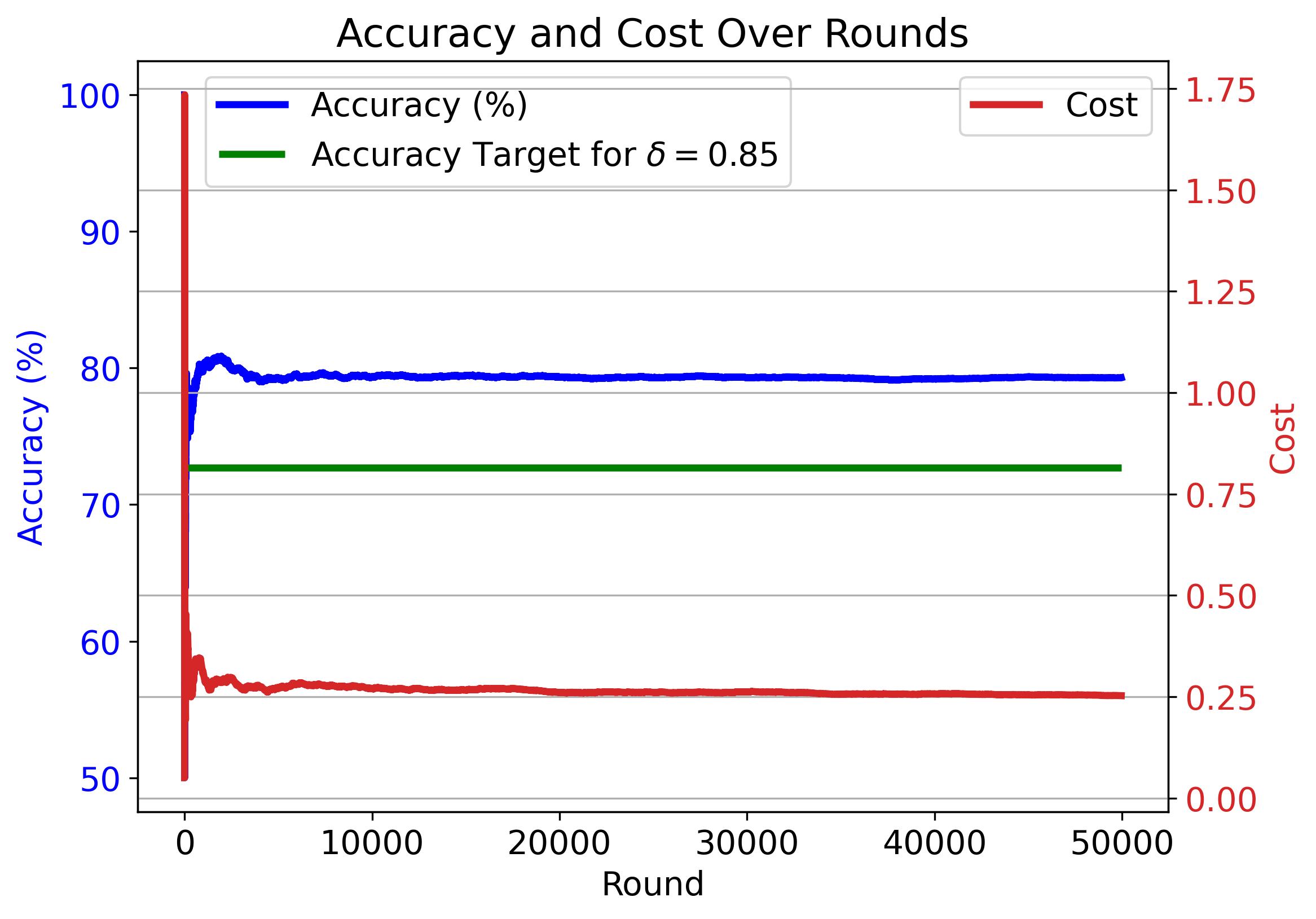} \\
 & \includegraphics[width=0.32\linewidth]{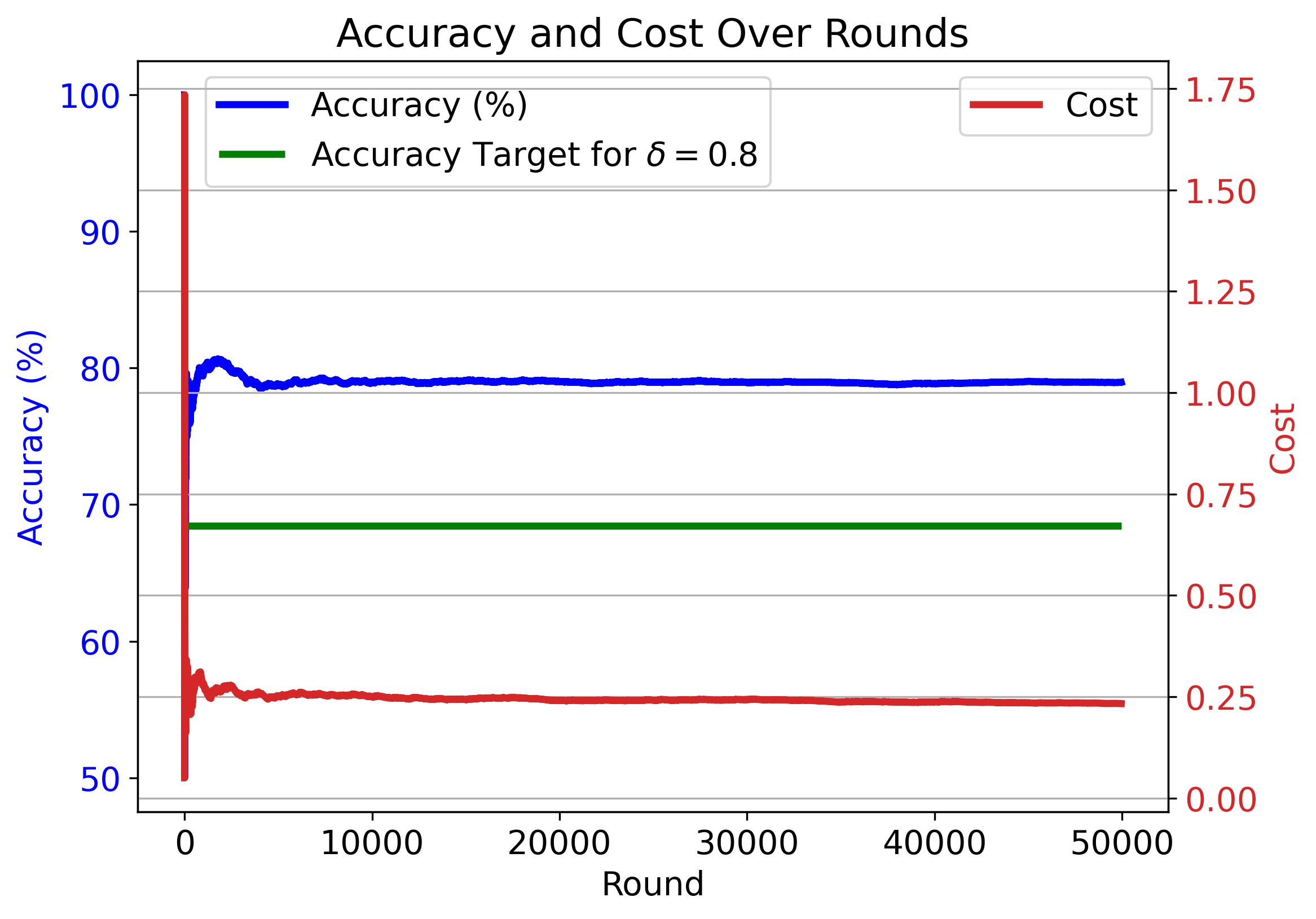}     \\
  \end{tabular}
  \caption{Cumulative average accuracy (blue) and cost (red) of CaMVo (\(k_{\min}=1\)) on the AG News Classification Dataset across rounds for various confidence thresholds \(\delta\). The green line marks each \(\delta\)-specific target accuracy.}
\label{fig:agnews_avg_extra}
  % \vspace{-1mm}
\end{figure}

Figure~\ref{fig:agnews_avg_extra} visualizes CaMVo’s learning trajectories on the AG News Classification Dataset for \(k_{\min}=1\) across all the confidence thresholds \(\delta\) values reported in Table~\ref{tab:exp_imdb_camvo}. Similar to experiments on the other datasets, CaMVo begins by querying larger, higher‐cost ensembles to obtain reliable performance estimates, then rapidly shifts to cost-optimal subsets once the lower‐confidence bounds converge.

\begin{figure}[ht]
  \centering
  \includegraphics[width=0.45\linewidth]{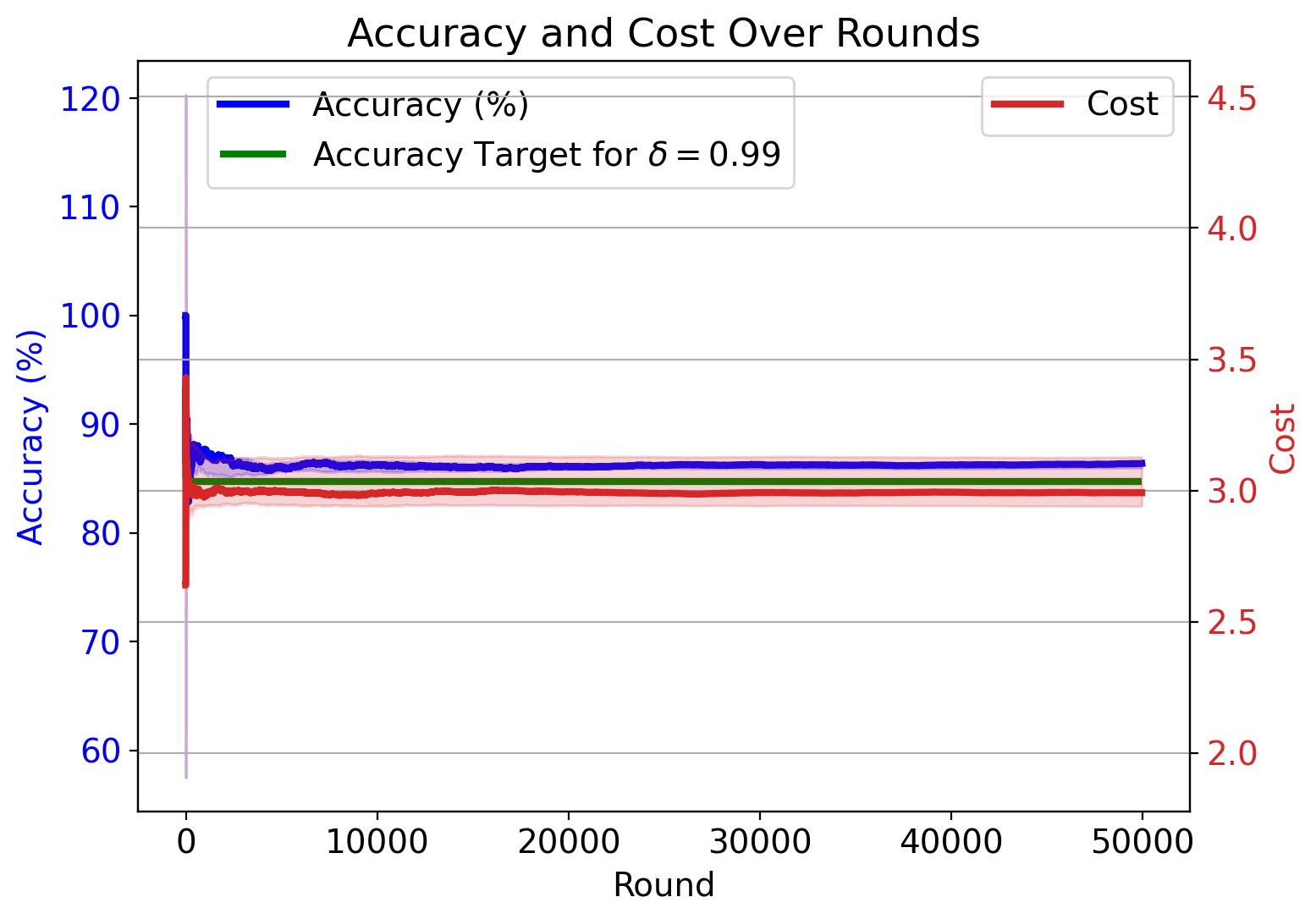}
  \caption{Mean (solid lines) and one‐standard‐deviation bands (shading) of CaMVo’s cumulative average accuracy (blue) and cost (red) over 20 random shuffles using the AG News Classification Dataset (\(\delta=0.99\), \(k_{\min}=1\)). The green line indicates the accuracy target of $95.14\%$ for $\delta=0.995$.}
  \label{fig:agnews_avg_extra2}
\end{figure}

\begin{figure}[ht]
  \centering
  \begin{tabular}{ccc}
    \includegraphics[width=0.32\linewidth]{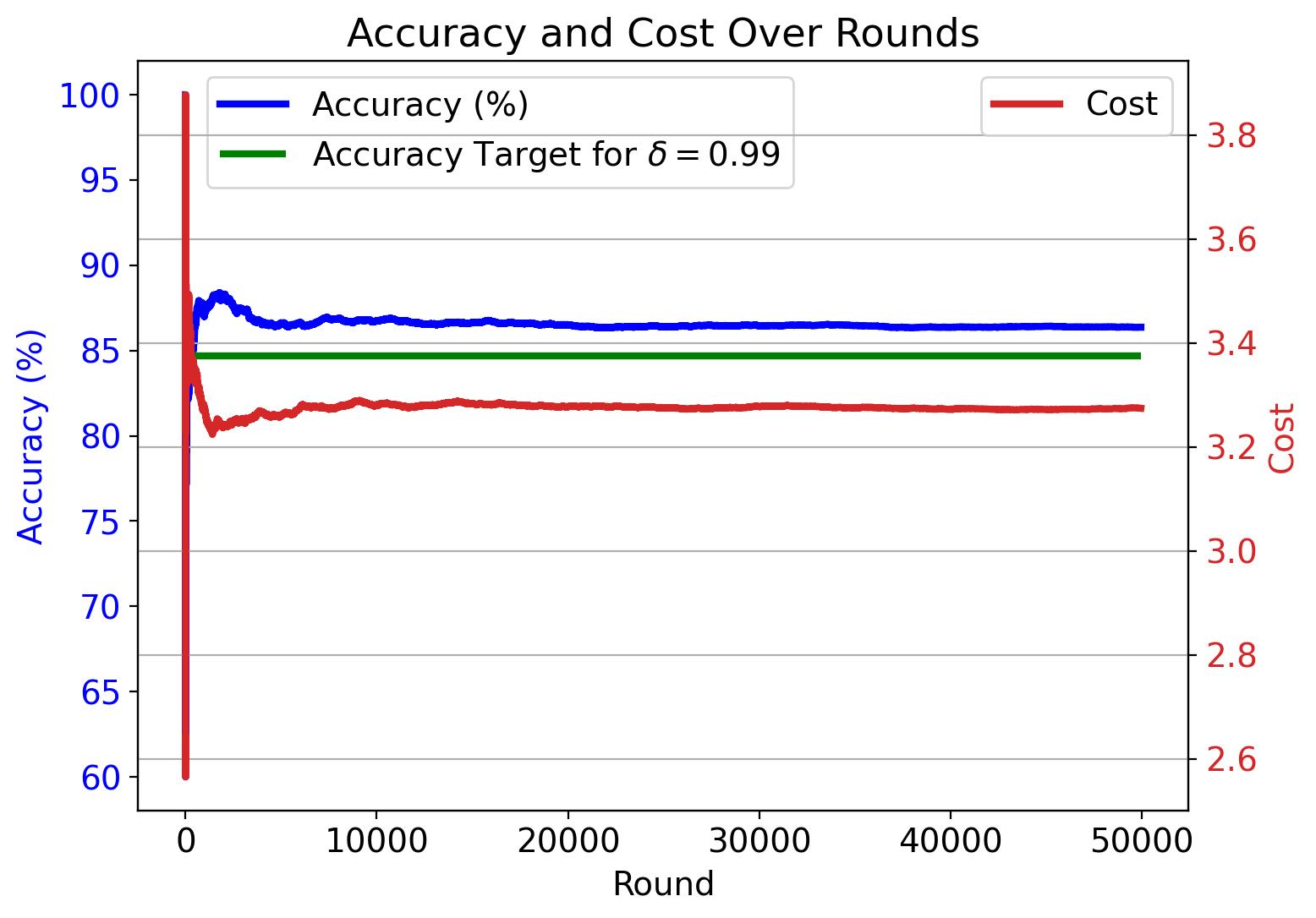} &
    \includegraphics[width=0.32\linewidth]{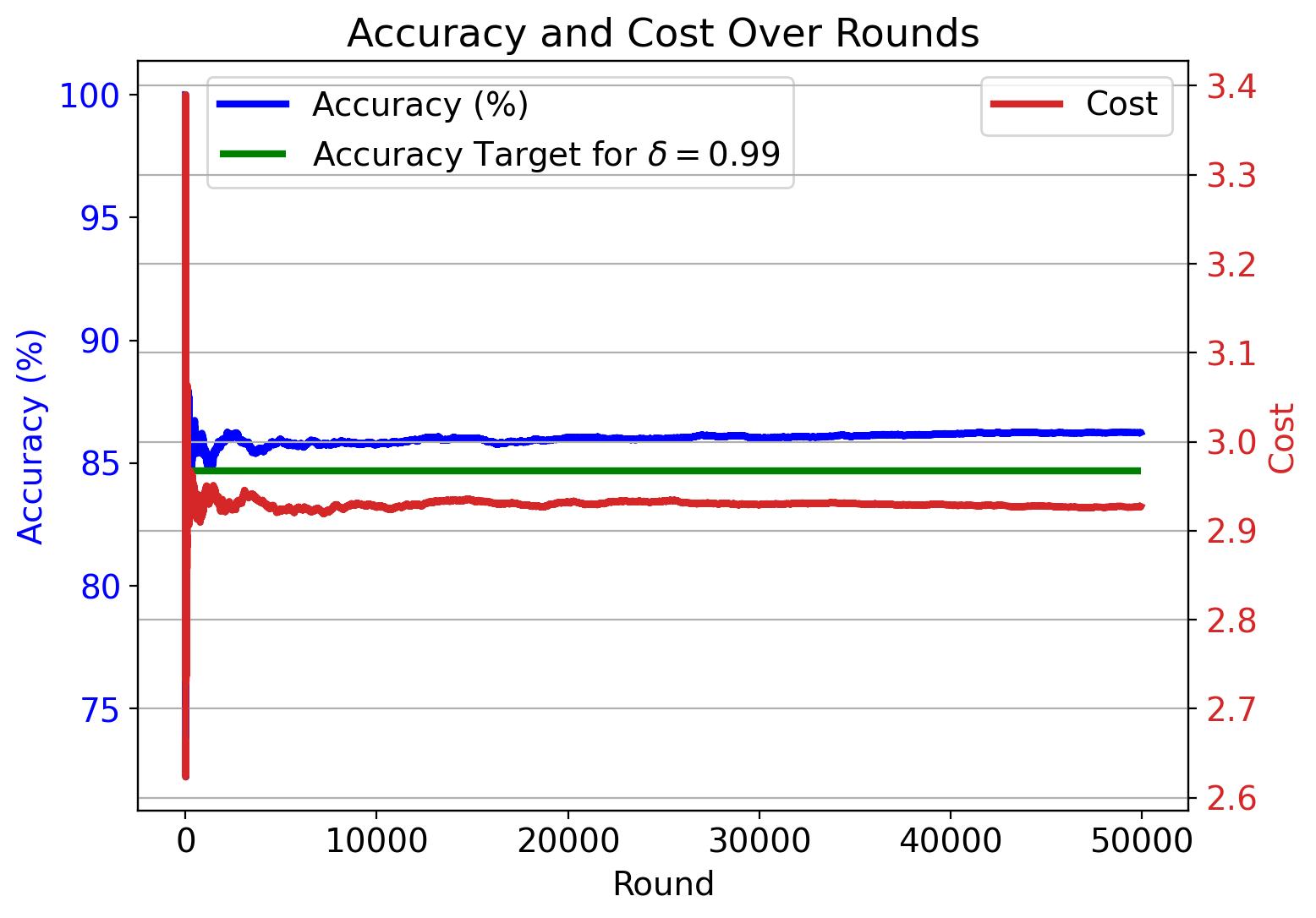} &
    \includegraphics[width=0.32\linewidth]{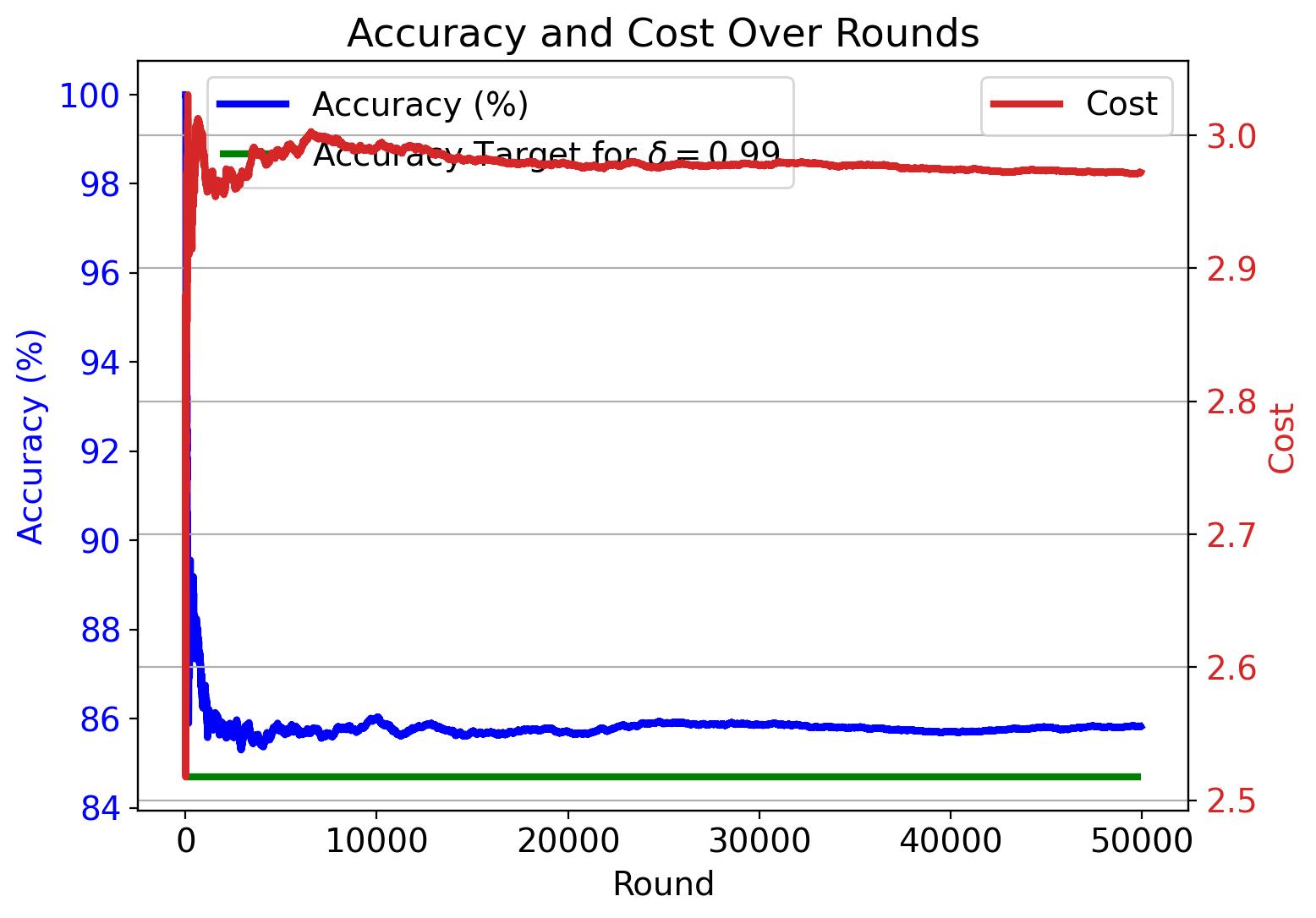} \\
    \includegraphics[width=0.32\linewidth]{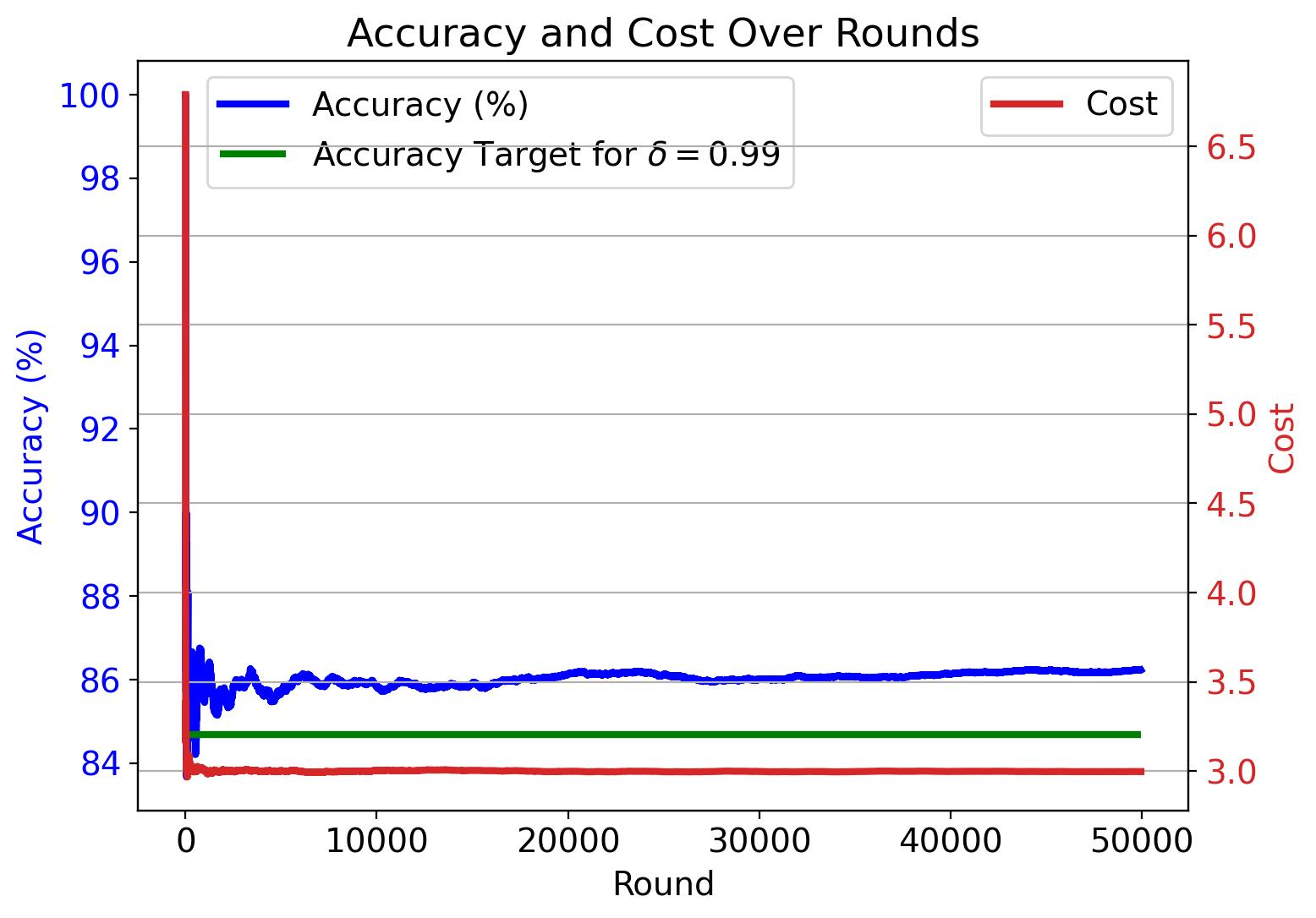} &
    \includegraphics[width=0.32\linewidth]{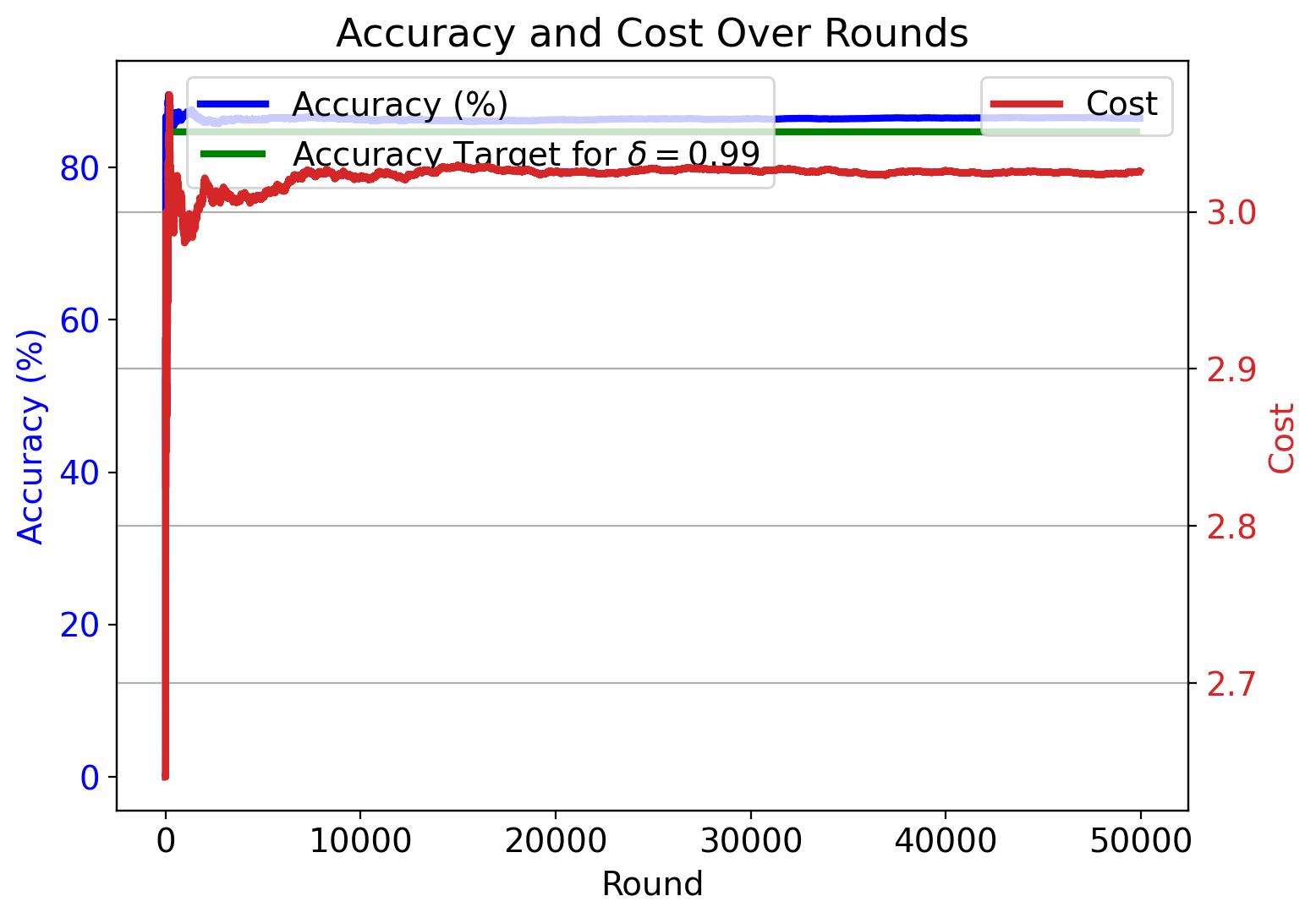} &
    \includegraphics[width=0.32\linewidth]{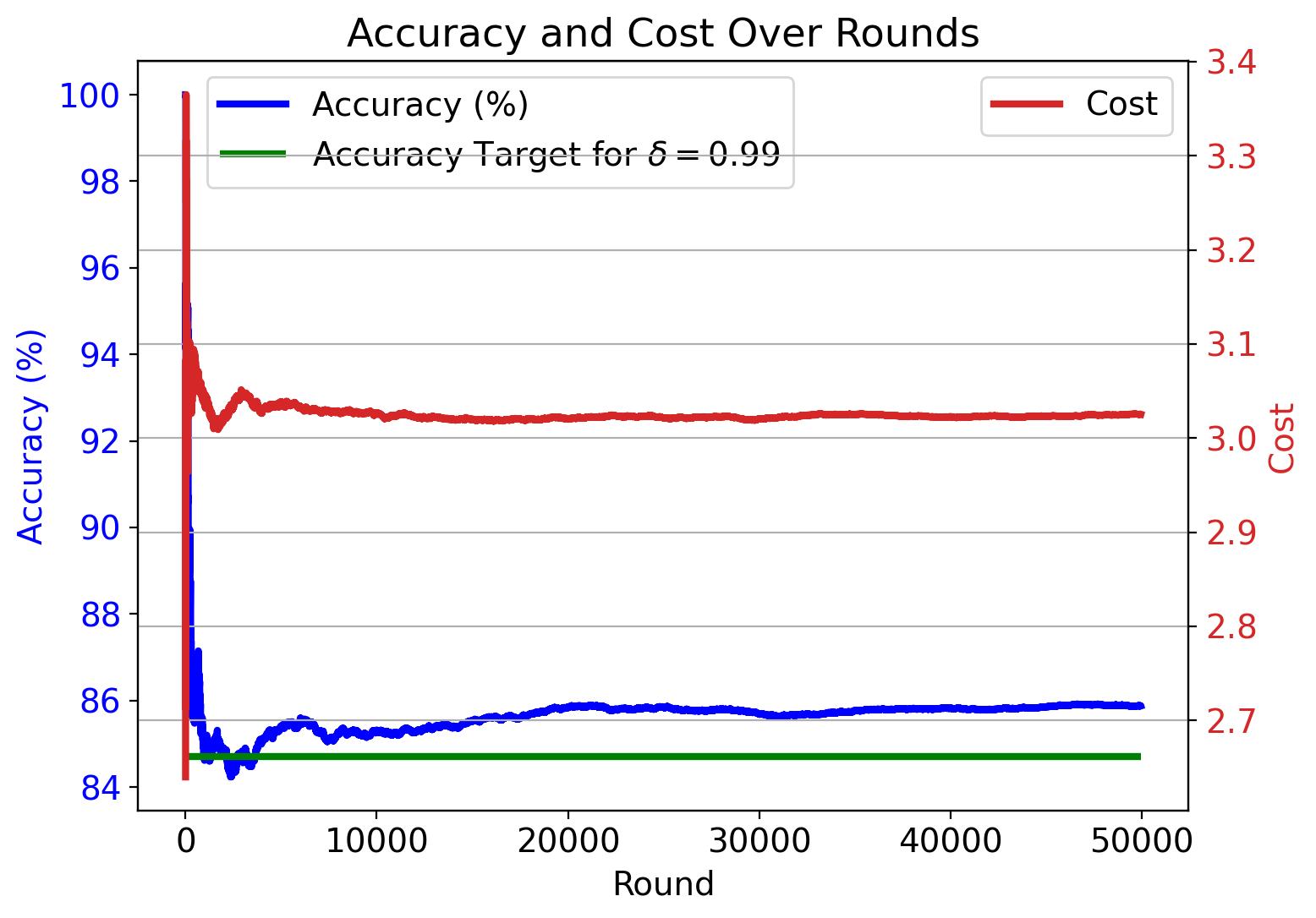} \\
    \includegraphics[width=0.32\linewidth]{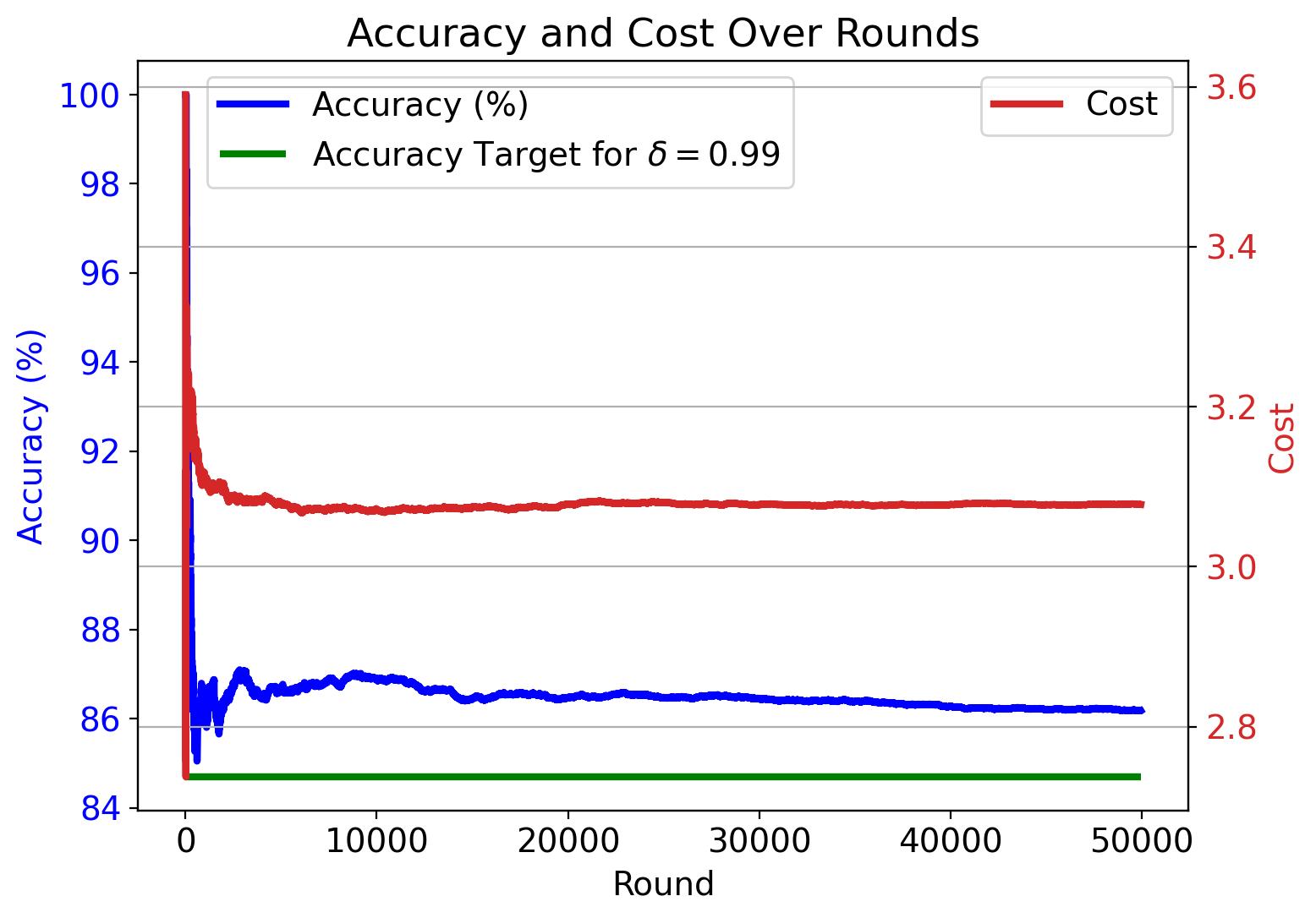} &
    \includegraphics[width=0.32\linewidth]{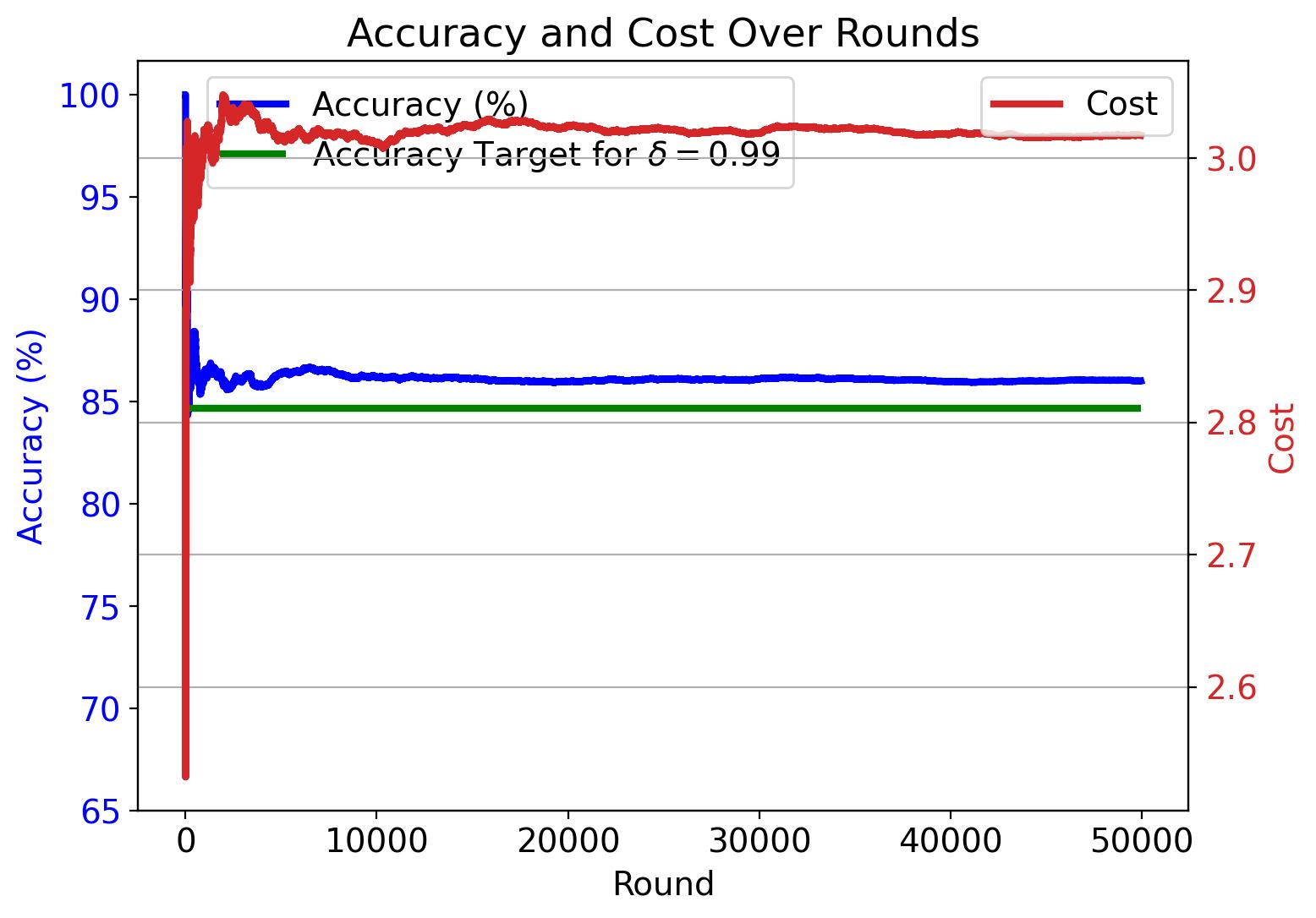} &
    \includegraphics[width=0.32\linewidth]{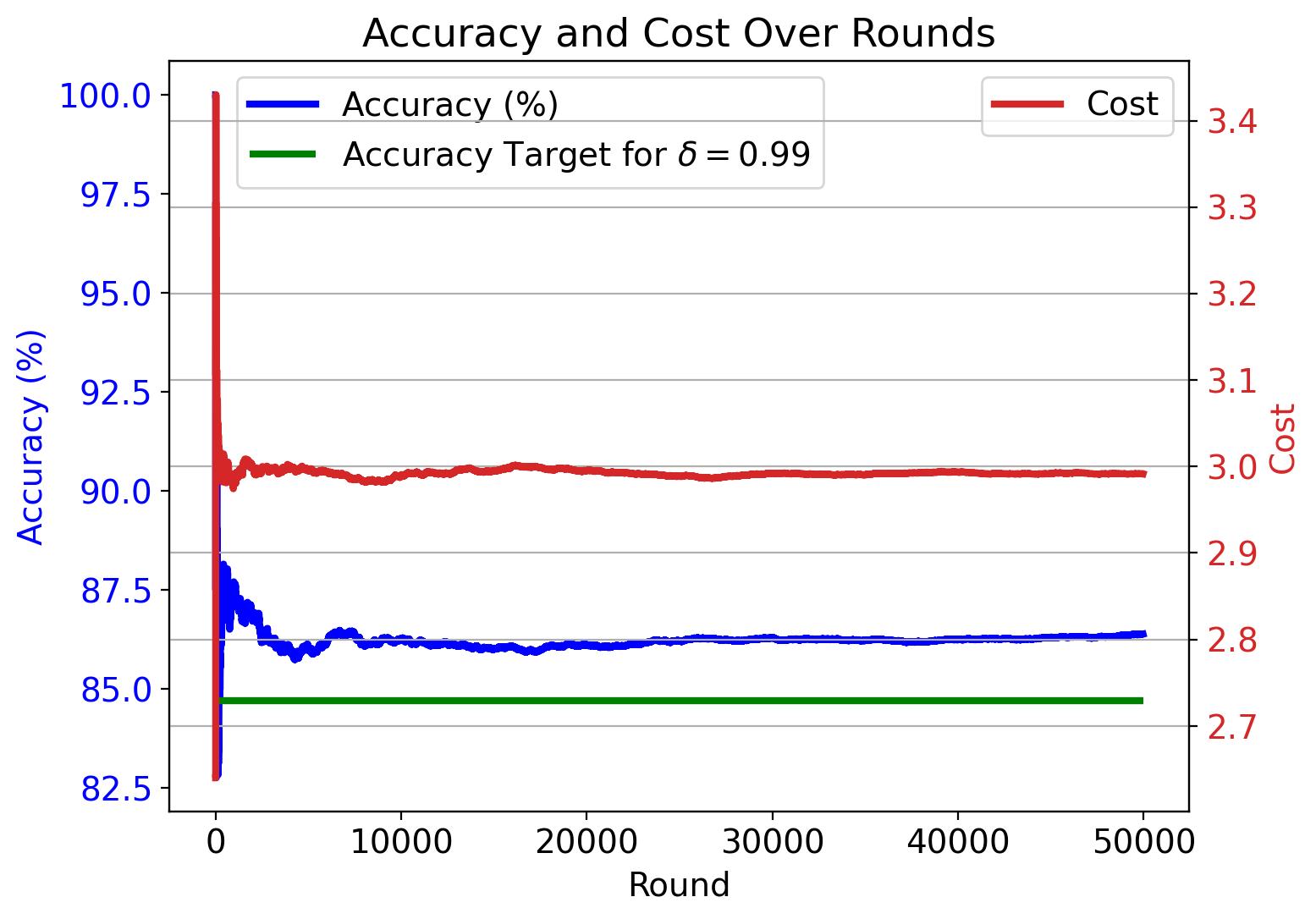} \\

  \end{tabular}
  \caption{Cumulative average accuracy (blue) and cost (red) of CaMVo with $\delta=0.99$, \(k_{\min}=1\) under nine different random permutations of the dataset. The green line marks each \(\delta\)-specific target accuracy.}
  \label{fig:agnews_avg_extra1}
\end{figure}

Further, to assess CaMVo’s sensitivity to dataset ordering, Figure~\ref{fig:agnews_avg_extra2} plots the mean cumulative accuracy and cost curves (solid lines) for \(\delta=0.99\), \(k_{\min}=1\), averaged over 20 random shuffles. Shaded regions indicate one standard deviation. Similar to the experiments in other datasets, although the accuracy band starts wide, which reflects both the initial exploration and also the varying mixes of 'easier' and 'harder' instances; this contracts over time, confirming CaMVo’s reliable attainment of the target accuracy across permutations. The cost band likewise narrows, demonstrating stable convergence to low‐cost ensembles. 
% Notably, the accuracy variability remains much smaller than the cost variability, as CaMVo does not optimize toward a fixed cost.

Figure~\ref{fig:agnews_avg_extra1} further investigates ordering effects by overlaying nine individual runs from these permutations. In every run, CaMVo exceeds the 84.82\% accuracy target while driving per‐round cost below \$3.3, corroborating that its exploration–exploitation strategy, and the resulting cost–accuracy performance is effectively invariant to the input sequence.

% Figure~\ref{fig:agnews_scatter} (Right) plots CaMVo’s cumulative average accuracy (blue) and cost (red) on the AG News Classification Dataset with \(\delta=0.998\) and \(k_{\min}=1\); the green line marks the target accuracy. As before, early rounds involve querying larger, costlier ensembles to robustly explore each model’s performance. Once the lower‐confidence bounds stabilize, CaMVo transitions to lower‐cost subsets that still meet the accuracy requirement. 

\section{Extension of CaMVo to Correlated LLM Outputs}
\label{append:exp_correlation}

In the original CaMVo algorithm, we assume that LLM outputs are independent. To relax this assumption, we introduce the Correlated CaMVo (CCaMVo), an extension that models dependencies between LLM outputs while maintaining the practicality of the original method. Unlike CaMVo, which relies on Lemma~\ref{lemma:subset_conf} for confidence estimation, CCaMVo estimates subset confidences via Monte Carlo simulation over correlated LLM correctness samples.

CCaMVo differs from CaMVo by incorporating three additional components. The first is an online correlation estimator (Algorithm~\ref{alg:append_corr_estimator}), which incrementally estimates pairwise correlations between LLM accuracies as new rounds of predictions arrive. Let $\mu_i$ and $\sigma_i$ denote the empirical mean and standard deviation of LLM $l_i$’s historical accuracy, respectively, and let $C$ denote the estimated correlation matrix across LLMs. For each LLM, the algorithm maintains running counts $n_i$, the number of times LLM
$l_i$ has been selected up to that round. Further, for each pair $(i,j)$ of LLMs, $N_{ij}$, the co-occurrence counts that designate the number of times LLMs $l_i$ and $l_j$ have been selected at the same round; and  $M_{ij}$, the accumulated deviation statistics, are maintained. At each round, it updates the univariate statistics $(\mu_i, \sigma_i)$ for selected LLMs and adjusts the terms $M_{ij}$ using Welford’s numerically stable online formulas. Pairwise correlations are computed as $\rho_{ij} = \mathrm{cov}_{ij} / (\sigma_i \sigma_j)$ where $\mathrm{cov}_{ij} = M_{ij} / (N_{ij} - 1)$ and clipped to $[-1, 1]$ for numerical stability.

The second component (Algorithm~\ref{alg:append_sample_correlated}) uses these estimated correlations to generate correlated LLM correctness samples via a Gaussian copula. Specifically, we first project the estimated correlation matrix $C_{\text{est}}$ to a positive semi-definite matrix $C_{\text{psd}}$ and draw $n$ samples $Z_1, \dots, Z_n \sim \mathcal{N}(\mathbf{0}_K, C_{\text{psd}})$. Applying the standard normal CDF elementwise yields uniform samples $U_{ij} = \Phi(Z_{ij})$, which preserve the dependence structure of $C_{\text{psd}}$. Binary correctness samples are then obtained by thresholding each variable as $X_{ij} = \ind{U_{ij} < \mu_j}$, ensuring marginal correctness probabilities of $\mu_j$. While the binary correlations differ slightly from $C_{\text{psd}}$ due to thresholding, this approximation offers a practical trade-off between accuracy and computational efficiency.

% After this, the second step is generating correlated LLM correctness samples using this estimated correlation matrix. The algorithm for this step is provided in Algorithm~\ref{alg:append_sample_correlated}. The procedure begins by first transforming the estimated correlation matrix $C_{\text{est}}$ into a symmetric, positive semi-definite matrix $C_{\text{psd}}$ to ensure numerical stability. Then, we generate $n$ samples $Z_1, \ldots, Z_n \in \mathbb{R}^K$ from the $K$-dimensional multivariate Gaussian distribution $Z \sim \mathcal{N}(\mathbf{0}_K, C_{\text{psd}})$. Applying the standard normal CDF $\Phi$ element-wise yields $U_{ij} = \Phi(Z_{ij})$, where each column of $U$ is distributed uniformly on $[0,1]$. This Gaussian copula construction preserves the dependence structure among columns.
% To obtain binary outcomes, we threshold according to the desired marginals: for each variable $j$, define $X_{ij} = \ind{U_{ij} < \mu_j}$. Since $U_{ij} \sim \mathrm{Uniform}(0,1)$, we have $\Pr(X_{ij} = 1) = \mu_j$, ensuring that the sample means match the target marginals. However, the correlations of the binary samples will generally differ from $C_{\text{psd}}$, since this thresholding alters the Gaussian correlations. One can adjust for this by solving for the latent Gaussian correlations that reproduce the target binary correlations, but this adds significant computational cost. For efficiency, we do not perform this correction.

Finally, CCaMVo estimates the confidence of each LLM subset by aggregating majority-vote correctness outcomes across the generated samples, averaging over simulated draws to approximate subset confidence. The remaining components of CCaMVo follow the same structure as CaMVo, preserving its efficiency while enabling robust confidence estimation under correlated LLM outputs.

% Finally, the last step is generating estimated confidence values for each subset of LLMs. For this end, for each subset, we use the generated samples from the second step and generate majority voting outcomes (correct output/incorrect output) by using the voting weights of LLMs. Averaging over all these generated outcomes yields the estimated confidence for a subset. CCaMVo utilizes this as the confidence estimate, and the remaining steps in CCaMVo are the same as CaMVo. 

We evaluated CCaMVo using the same experimental setup as CaMVo. On the MMLU dataset, Table~\ref{tab:append_exp_mmlu_corr1} compares CaMVo and CCaMVo for $k_{\min}=1$ across varying confidence thresholds $\delta$, while Table~\ref{tab:append_exp_mmlu_corr2} shows the results for $k_{\min}=3$. Similarly, for the IMDB dataset, Table~\ref{tab:append_exp_imdb_corr1} reports results for $k_{\min}=1$, and Table~\ref{tab:append_exp_imdb_corr2} presents the corresponding results for $k_{\min}=3$. For the AG News Classification Dataset, Table~\ref{tab:append_exp_agnews_corr1} reports results for $k_{\min}=1$, and Table~\ref{tab:append_exp_agnews_corr2} presents the corresponding results for $k_{\min}=3$.

\begin{algorithm}[t]
\caption{OnlineCorrelationEstimator}
\label{alg:append_corr_estimator}
\begin{algorithmic}[1]
\Require Number of variables $K$
\State Initialize $n_k \gets 0$, $\mu_k \gets 0$, $\sigma_k \gets 0$ for all $k = 1, \dots, K$
\State Initialize $M_{ij} \gets 0$, $N_{ij} \gets 0$, and correlation matrix $C_{ij} \gets \mathbb{I}[i=j]$

\Function{Update}{$\bf{r}, \mathcal{A} $}

    \Comment{--- Step 1: Update univariate statistics ---}
    \ForAll{$i \in \mathcal{A}$}
        \State $\text{old\_}\mu_i \gets \mu_i$
        \State $n_i \gets n_i + 1$
        \State $\delta \gets r_i - \text{old\_}\mu_i$
        \State $\mu_i \gets \text{old\_}\mu_i + \delta / n_i$
        \State $M_{ii} \gets M_{ii} + (r_i - \mu_i)(r_i - \text{old\_}\mu_i)$
        \If{$n_i > 1$}
            \State $\sigma_i \gets \sqrt{ M_{ii} / (n_i - 1) }$
        \Else
            \State $\sigma_i \gets 0$
        \EndIf
    \EndFor

    \Comment{--- Step 2: Update pairwise covariance and correlation ---}
    \ForAll{unordered pairs $(i,j)$ in $\mathcal{A}$}
        \State $\delta_i \gets r_i - \mu_i$, \quad $\delta_j \gets r_j - \mu_j$
        \State $M_{ij} \gets M_{ij} + \delta_i \, \delta_j$
        \State $M_{ji} \gets M_{ij}$
        \State $N_{ij} \gets N_{ij} + 1$, \quad $N_{ji} \gets N_{ij}$
        \If{$N_{ij} > 1$}
            \State $\mathrm{cov}_{ij} \gets M_{ij} / (N_{ij} - 1)$
            \State $\rho_{ij} \gets \mathrm{clip}\!\left( \mathrm{cov}_{ij} / (\sigma_i \, \sigma_j) ,\, -1,\, 1 \right)$
            \State $C_{ij} \gets C_{ji} \gets \rho_{ij}$          
        \EndIf
    \EndFor
\EndFunction

\Function{GetCorrelationMatrix}{}
    \State \Return $C$
\EndFunction
\end{algorithmic}
\end{algorithm}

\begin{algorithm}
\caption{Sampling Correlated Binary Outcomes via Gaussian Copula}
\label{alg:append_sample_correlated}
\begin{algorithmic}[1]
\Require Number of samples $n$, marginal probabilities $\mu \in \mathbb{R}^K$, estimated correlation matrix $C_{\text{est}} \in \mathbb{R}^{K \times K}$
\State $C_{\text{psd}} \gets \textsc{MakePSD}(C_{\text{est}})$ \Comment{Project to nearest PSD matrix}
\State $Z \gets \text{Sample } n \text{ vectors from } \mathcal{N}(\mathbf{0}_K, C_{\text{psd}})$
\State $U \gets \Phi(Z)$ \Comment{Apply standard normal CDF elementwise}
\For{$i = 1, \dots, n$}
    \For{$j = 1, \dots, K$}
        \State $X_{ij} \gets \ind{U_{ij} < \mu_j}$
    \EndFor
\EndFor
\State \Return $X$
\end{algorithmic}
\end{algorithm}

% first =  [0.7, 0.75, 0.8, 0.85, 0.9, 0.95, 0.955, 0.96, 0.965, 0.97, 0.975, 0.98, 0.99]
% third =  [0.13824074074069614,0.13844017094012634, 0.6036239316238808, 0.6696474358973542, 0.7569102564101309, 2.3234636752132882, 2.576965099714776, 3.0672628205127586, 4.876150997152252, 7.104096866097224, 8.138696581196811, 8.72389529914616, 9.139349002850981]
% fourth =  [68.2977207977208, 68.2977207977208, 80.08547008547009, 81.7165242165242, 82.2934472934473, 86.33190883190883, 86.83760683760684, 86.95868945868945, 88.06980056980058, 88.25498575498575, 88.41168091168092, 88.53276353276354, 88.54700854700855]

\begin{table}[h]
\centering
\begin{tabular}{c c c c c c}
\toprule
\textbf{ $\delta$} & \makecell{\textbf{Target}\\ \textbf{Acc. (\%)}} & \makecell{ \textbf{CaMVo} \\ \textbf{Acc. (\%)}} & \makecell{\textbf{CaMVo} \\ \textbf{Cost}} & \makecell{ \textbf{CCaMVo} \\ \textbf{Acc. (\%)}} & \makecell{\textbf{CCaMVo} \\ \textbf{Cost}} \\
\midrule
0.99   & 87.30 & 88.47 & 9.14 & 88.55 & 9.14 \\
0.98   & 86.42 & 88.59 & 8.57 & 88.53 & 8.72 \\
0.975  & 85.98 & 88.49 & 7.80 & 88.41 & 8.14 \\
0.97   & 85.53 & 88.35 & 6.67 & 88.25 & 7.10 \\
0.965  & 85.09 & 88.27 & 5.66 & 88.07 & 4.88 \\
0.96   & 84.65 & 87.98 & 4.74 & 86.96 & 3.07 \\
0.955  & 84.21 & 87.40 & 3.38 & 86.84 & 2.58 \\
0.95   & 83.77 & 86.82 & 2.76 & 86.33 & 2.32 \\
0.90   & 79.36 & 84.88 & 1.19 & 82.29 & 0.76 \\
0.85   & 74.95 & 84.41 & 1.03 & 81.72 & 0.67 \\
0.80   & 70.54 & 82.12 & 0.70 & 80.09 & 0.60 \\
0.75   & 66.14 & 68.80 & 0.16 & 68.30 & 0.14 \\
0.70   & 61.73 & 68.38 & 0.14 & 68.30 & 0.14 \\
\bottomrule
\end{tabular}
\caption{Accuracy and cost of CaMVo and Correlated CaMVo (CCaMVo) on the MMLU dataset under varying confidence thresholds $\delta$ and $k_{\min} =1$. For reference, the cost of the baseline method is \$9.14 per million tokens.}
\vspace{-3mm}
\label{tab:append_exp_mmlu_corr1}
\end{table}

\begin{table}[h]
\centering
\begin{tabular}{c c c c c c}
\toprule
\textbf{ $\delta$} & \makecell{\textbf{Target}\\ \textbf{Acc. (\%)}} & \makecell{ \textbf{CaMVo} \\ \textbf{Acc. (\%)}} & \makecell{\textbf{CaMVo} \\ \textbf{Cost}} & \makecell{ \textbf{CCaMVo} \\ \textbf{Acc. (\%)}} & \makecell{\textbf{CCaMVo} \\ \textbf{Cost}} \\
\midrule
0.99   & 87.30 & 88.47 & 9.14 & 88.55 & 9.14 \\
0.98   & 86.42 & 88.59 & 8.57 & 88.50 & 8.73 \\
0.975  & 85.98 & 88.49 & 7.80 & 88.43 & 8.16 \\
0.97   & 85.53 & 88.33 & 6.67 & 88.33 & 6.90 \\
0.965  & 85.09 & 88.27 & 5.66 & 88.29 & 5.21 \\
0.96   & 84.65 & 88.03 & 4.74 & 86.40 & 3.73 \\
0.955  & 84.21 & 87.01 & 3.36 & 87.08 & 2.69 \\
0.95   & 83.77 & 87.01 & 2.96 & 86.42 & 2.45 \\
0.90   & 79.36 & 84.80 & 1.81 & 84.17 & 1.77 \\
0.85   & 74.95 & 82.14 & 1.58 & 81.45 & 1.52 \\
0.80   & 70.54 & 81.32 & 1.51 & 81.23 & 1.50 \\
0.75   & 66.14 & 81.24 & 1.50 & 81.18 & 1.50 \\
0.70   & 61.73 & 81.22 & 1.50 & 81.17 & 1.50 \\
\bottomrule
\end{tabular}
\caption{Accuracy and cost of CaMVo and Correlated CaMVo (CCaMVo) on the MMLU dataset under varying confidence thresholds $\delta$ and $k_{\min} =3$. For reference, the cost of the baseline method is \$9.14 per million tokens.}
\vspace{-3mm}
\label{tab:append_exp_mmlu_corr2}
\end{table}

%%Results for CCaMVo, kmin=3
% first =  [0.7, 0.75, 0.8, 0.85, 0.9, 0.95, 0.955, 0.96, 0.965, 0.97, 0.975, 0.98, 0.99]
% third =  [1.4992072649572543, 1.499634615384604, 1.5032058404558226, 1.5188689458688989, 1.7729836182333583, 2.4466602564098947, 2.689126068375804, 3.7269116809118934, 5.205024216525075, 6.902100427350781, 8.161230056980266, 8.732859686610526, 9.139349002850981]
% fourth =  [81.16809116809117, 81.17521367521367, 81.22507122507122, 81.44586894586895, 84.16666666666667, 86.41737891737891, 87.07977207977208, 86.4031339031339, 88.29059829059828, 88.33333333333333, 88.42592592592592, 88.497150997151, 88.54700854700855]

\begin{table}[h]
\centering
\begin{tabular}{c c c c c c}
\toprule
\textbf{ $\delta$} & \makecell{\textbf{Target}\\ \textbf{Acc. (\%)}} & \makecell{ \textbf{CaMVo} \\ \textbf{Acc. (\%)}} & \makecell{\textbf{CaMVo} \\ \textbf{Cost}} & \makecell{ \textbf{CCaMVo} \\ \textbf{Acc. (\%)}} & \makecell{\textbf{CCaMVo} \\ \textbf{Cost}} \\
\midrule
0.999 & 95.52 & 95.59 & 6.15 & 95.43 & 3.61 \\
0.998 & 95.43 & 95.43 & 4.03 & 95.28 & 2.29 \\
0.997 & 95.33 & 95.45 & 2.83 & 95.10 & 1.32 \\
0.995 & 95.14 & 95.25 & 2.06 & 95.09 & 0.99 \\
0.99  & 94.66 & 95.10 & 1.09 & 95.09 & 0.89 \\
0.985 & 94.20 & 94.69 & 0.34 & 95.05 & 0.82 \\
0.98  & 93.71 & 94.69 & 0.31 & 94.78 & 0.38 \\
0.97  & 92.75 & 94.56 & 0.22 & 94.68 & 0.28 \\
0.96  & 91.80 & 94.21 & 0.13 & 94.18 & 0.20 \\
0.95  & 90.84 & 94.28 & 0.14 & 94.61 & 0.26 \\
0.9   & 86.06 & 94.24 & 0.10 & 94.11 & 0.15 \\
\bottomrule
\end{tabular}
\caption{Accuracy and cost of CaMVo and Correlated CaMVo (CCaMVo) on the IMDB dataset under varying confidence thresholds $\delta$ and $k_{\min} =1$. For reference, the cost of the baseline method is \$6.29 per million tokens.
}
\label{tab:append_exp_imdb_corr1}
\end{table}

\begin{table}[h]
\centering
\begin{tabular}{c c c c c c}
\toprule
\textbf{ $\delta$} & \makecell{\textbf{Target}\\ \textbf{Acc. (\%)}} & \makecell{ \textbf{CaMVo} \\ \textbf{Acc. (\%)}} & \makecell{\textbf{CaMVo} \\ \textbf{Cost}} & \makecell{ \textbf{CCaMVo} \\ \textbf{Acc. (\%)}} & \makecell{\textbf{CCaMVo} \\ \textbf{Cost}} \\
\midrule
0.999 & 95.52 & 95.59 & 6.15 & 95.46 & 3.61 \\
0.998 & 95.43 & 95.43 & 4.03 & 95.29 & 2.21 \\
0.997 & 95.33 & 95.45 & 2.83 & 95.10 & 1.21 \\
0.995 & 95.14 & 95.25 & 2.06 & 95.10 & 0.99 \\
0.99  & 94.66 & 95.12 & 0.99 & 95.07 & 0.88 \\
0.985 & 94.20 & 95.06 & 0.84 & 95.07 & 0.85 \\
0.98  & 93.71 & 95.07 & 0.83 & 95.08 & 0.83 \\
0.97  & 92.75 & 95.07 & 0.82 & 95.07 & 0.83 \\
0.96  & 91.80 & 95.06 & 0.81 & 95.07 & 0.82 \\
0.95  & 90.84 & 95.07 & 0.81 & 95.06 & 0.82 \\
0.9   & 86.06 & 95.06 & 0.81 & 95.06 & 0.81 \\
\bottomrule
\end{tabular}
\caption{Accuracy and cost of CaMVo and Correlated CaMVo (CCaMVo) on the IMDB dataset under varying confidence thresholds $\delta$ and $k_{\min} =3$. For reference, the cost of the baseline method is \$6.29 per million tokens.
}
\label{tab:append_exp_imdb_corr2}
\end{table}

\begin{table}[h]
\centering
\begin{tabular}{c c c c c c}
\toprule
\textbf{ $\delta$} & \makecell{\textbf{Target}\\ \textbf{Acc. (\%)}} & \makecell{ \textbf{CaMVo} \\ \textbf{Acc. (\%)}} & \makecell{\textbf{CaMVo} \\ \textbf{Cost}} & \makecell{ \textbf{CCaMVo} \\ \textbf{Acc. (\%)}} & \makecell{\textbf{CCaMVo} \\ \textbf{Cost}} \\
\midrule
0.999 & 85.59 & 85.98 & 6.82 & 85.98 & 6.82 \\
0.998 & 85.51 & 85.98 & 6.82 & 85.98 & 6.81 \\
0.995 & 85.25 & 86.36 & 5.24 & 86.10 & 5.12 \\
0.99  & 84.82 & 86.18 & 2.98 & 85.82 & 2.75 \\
0.98  & 83.97 & 84.72 & 1.39 & 83.94 & 1.13 \\
0.96  & 82.25 & 83.38 & 0.52 & 83.3 & 0.51 \\ 
0.95  & 81.40 & 83.18 & 0.40 & 83.18 & 0.44 \\
0.9   & 77.11 & 80.66 & 0.29 & 81.5 & 0.33 \\
0.85 & 72.83 & 79.62 & 0.26 & 79.66 & 0.26 \\
0.80 & 68.54 & 79.34 & 0.23 & 79.26  & 0.24  \\
\bottomrule
% \midrule
% 0.999 & 85.59 & 85.98 & 6.82 & 85.98 & 6.82 \\
% 0.998 & 85.51 & 85.98 & 6.82 & 85.98 & 6.82 \\
% 0.995 & 85.25 & 86.36 & 5.24 & 86.36 & 5.24 \\
% 0.99 & 84.82 & 86.18 & 2.98 & 86.18 & 2.98 \\
% 0.98 & 83.97 & 84.72 & 1.39 & 84.72 & 1.39 \\
% 0.96 & 82.25 & 83.38 & 0.52 & 83.85  & 0.99  \\ 
% 0.95 & 81.40 & 83.18 & 0.40 & 83.77 & 0.96 \\
% 0.9 & 77.11 & 80.66 & 0.29 & 83.75 & 0.88 \\
% 0.85 & 72.83 & 79.62 & 0.26 & 83.74 & 0.87 \\
% 0.80 & 68.54 & 79.34 & 0.23 & 83.73  & 0.87  \\
% \bottomrule
\end{tabular}
\caption{Accuracy and cost of CaMVo and Correlated CaMVo (CCaMVo) on the AG News Classification Dataset under varying confidence thresholds $\delta$ and $k_{\min} =1$. For reference, the cost of the baseline method is \$6.79 per million tokens.
}
\label{tab:append_exp_agnews_corr1}
\end{table}

\begin{table}
\centering
\begin{tabular}{c c c c c c}
\toprule
\textbf{ $\delta$} & \makecell{\textbf{Target}\\ \textbf{Acc. (\%)}} & \makecell{ \textbf{CaMVo} \\ \textbf{Acc. (\%)}} & \makecell{\textbf{CaMVo} \\ \textbf{Cost}} & \makecell{ \textbf{CCaMVo} \\ \textbf{Acc. (\%)}} & \makecell{\textbf{CCaMVo} \\ \textbf{Cost}} \\
\midrule
0.999 & 85.59 & 85.98 & 6.82 & 85.98 & 6.82 \\
0.998 & 85.51 & 85.98 & 6.82 & 85.98 & 6.81 \\
0.995 & 85.25 & 86.36 & 5.24 & 86.14 & 5.08 \\
0.99  & 84.82 & 86.18 & 2.98 & 85.67 & 2.29 \\
0.98  & 83.97 & 84.72 & 1.39 & 83.94 & 1.10 \\
0.96  & 82.25 & 83.85 & 0.99 & 83.89 & 1.01 \\
0.95  & 81.40 & 83.77 & 0.96 & 83.75 & 0.93 \\
0.9   & 77.11 & 83.75 & 0.88 & 83.75 & 0.89 \\
0.85 & 72.83 & 83.74 & 0.87 & 83.72 & 0.87 \\
0.80 & 68.54 & 83.73 & 0.87 & 83.72 & 0.87  \\
\bottomrule
\end{tabular}
\caption{Accuracy and cost of CaMVo and Correlated CaMVo (CCaMVo) on the AG News Classification Dataset under varying confidence thresholds $\delta$ and $k_{\min} =3$. For reference, the cost of the baseline method is \$6.79 per million tokens.
}
\label{tab:append_exp_agnews_corr2}
\end{table}

These results indicate that incorporating the copula-based simulation to model the correlation does not significantly improve performance on MMLU and yields a modest cost improvement on IMDB, where LLM outputs exhibit higher correlation. We attribute this to two factors: (1) LinUCB’s use of input embedding vectors to predict LLM confidences already implicitly captures correlations among LLM outputs conditioned on the input context, and (2) estimation errors in the correlation matrix reduce the potential gains from explicitly modeling correlations.

\section{Experiments on Sensitivity to Correlated LLM Outputs }
\label{append:exp_sensitivity}

We conduct a sensitivity study to evaluate the impact of correlation among LLM outputs. 
Given a target mean accuracy vector $\mu = (\mu_1, \dots, \mu_K)$ and a target covariance matrix $C \in \mathbb{R}^{K \times K}$, 
we generate synthetic LLM labels using a Gaussian copula–based procedure (Algorithm~\ref{alg:synthetic_generation}). This approach ensures that the outputs of different LLMs are both \emph{context-dependent} and \emph{correlated} according to the specified $C$.
By varying $C$, we analyze how increasing inter-model correlation affects the performance of CaMVo.

In this experiment setting, we denote the number of LLMs by $K$, the context dimension by $d$, and the number of rounds by $T$. 
Each LLM $l_i$ is associated with a target mean accuracy $\mu_i \in (0,1)$. 
The context correlation parameter $\alpha \in [0,1]$ controls the degree of shared dependence among LLMs, 
and $\sigma > 0$ denotes the scale of latent Gaussian noise.

We first transform the target reward correlation matrix $C$ into a valid latent Gaussian correlation matrix $R = [\rho_{ij}]$  that can reproduce the desired binary LLM outputs after thresholding. Specifically, for each pair of LLMs $(i,j)$, we find a latent correlation coefficient $\rho_{ij}$ such that
    \[
\text{Corr}\big[\mathbb{I}\{Z_i > \Phi^{-1}(1 - \mu_i)\}, \mathbb{I}\{Z_j > \Phi^{-1}(1 - \mu_j)\}\big] = C_{ij},
    \]
    where $(Z_i, Z_j) \sim \mathcal{N}(0, \Sigma(\rho_{ij}))$ and $\Phi^{-1}$ is the inverse standard normal CDF. The resulting matrix $R = [\rho_{ij}]$ is then projected onto the nearest positive semi-definite matrix to ensure numerical stability. This step guarantees that the generated binary outputs exhibit empirical correlations close to $C$.

Each LLM $l_i$ is associated with a context-dependent accuracy weight vector $\theta_i \in \mathbb{R}^d$ defined as
\[
\theta_i = \alpha \, \theta_{\text{shared}} + (1 - \alpha) \, \theta_{\text{unique},i},
\]
where $\theta_{\text{shared}} \sim \mathcal{N}(0, I_d)$ captures shared structure among LLMs, 
and $\theta_{\text{unique},i} \sim \mathcal{N}(0, I_d)$ introduces model-specific variation.

At each round $t = 1, \dots, T$, a context vector 
\[
x_t \sim \mathcal{N}(0, I_d)
\]
is independently sampled, representing the input context of the data instance. 
A correlated Gaussian noise vector $\epsilon_t \sim \mathcal{N}(0, R)$ is drawn jointly across all LLMs. 
The latent correctness score for each LLM $l_i$ is then computed as
\[
z_{t,i} = \theta_i^\top x_t + \sigma \, \epsilon_{t,i}.
\]
The observed binary correctness outcome is given by
\[
r_{t,i} = \mathbb{I}\{ z_{t,i} > \Phi^{-1}(1 - \mu_i) \}.
\]
 When $r_{t,i} = 1$, it means the LLM produces the correct label
 % ; otherwise, it outputs one of the incorrect labels chosen uniformly at random
 . This data generation process produces context–output pairs whose expected accuracies ar close to the target $\mu$, 
while the inter-model dependencies exhibit empirical correlations close to $C$. Note that because of the additive term $\theta_i$, which encodes correlations with the context, this method cannot exactly reproduce the target accuracy and covariance matrix values. Although more sophisticated approaches could more precisely match the specified targets, we adopt this formulation for its computational simplicity and because our experiments focus primarily on analyzing the effects of varying correlation strength rather than achieving exact target statistics.

\begin{algorithm}[t]
\caption{Gaussian Copula–based Correlated Data Generation }
\label{alg:synthetic_generation}
\begin{algorithmic}[1]
\Require Number of rounds $T$, number of LLMs $K$, context dimension $d$, 
target mean rewards $\{\mu_i\}_{i=1}^K$, target correlation matrix $C \in \mathbb{R}^{K \times K}$, 
context correlation parameter $\alpha$, noise scale $\sigma$
\Ensure Contexts $\{x_t\}_{t=1}^T$, binary rewards $\{r_{t,i}\}_{t,i}$

\Statex
\Function{GenerateData}{$T, K, d, \mu, C, \alpha, \sigma$}
    \State Compute latent Gaussian correlation matrix $R \leftarrow$ \Call{CalibrateCopula}{$\mu, C$}
    \State Draw $\theta_{\text{shared}} \sim \mathcal{N}(0, I_d)$
    \For{$a = 1, \dots, K$}
        \State Draw $\theta_{\text{unique}, a} \sim \mathcal{N}(0, I_d)$
        \State $\theta_a \leftarrow \alpha \, \theta_{\text{shared}} + (1 - \alpha) \, \theta_{\text{unique}, a}$
    \EndFor
    \For{$t = 1, \dots, T$}
        \State Sample context $x_t \sim \mathcal{N}(0, I_d)$
        \State Sample $\epsilon_t \sim \mathcal{N}(0, R)$
        \For{$a = 1, \dots, K$}
            \State $z_{t,a} \leftarrow \theta_a^\top x_t + \sigma \, \epsilon_{t,a}$
            \State $r_{t,a} \leftarrow \mathbb{I}\{ z_{t,a} > \Phi^{-1}(1 - \mu_a) \}$
        \EndFor
    \EndFor
    \State \Return $\{(x_t, r_t)\}_{t=1}^T$
\EndFunction

\Statex
\Function{CalibrateCopula}{$\mu, C$}
    \For{each pair $(i, j)$ with $i < j$}
        \State Find $\rho_{ij}$ such that
        \[
        \text{Corr}\big[\mathbb{I}\{Z_i > \Phi^{-1}(1 - \mu_i)\}, \mathbb{I}\{Z_j > \Phi^{-1}(1 - \mu_j)\}\big] = C_{ij}
        \]
        \Statex \hspace{1.5em} where $(Z_i, Z_j) \sim \mathcal{N}(0, \Sigma(\rho_{ij}))$
    \EndFor
    \State Construct $R = [\rho_{ij}]$ and project to nearest PSD matrix
    \State \Return $R$
\EndFunction
\end{algorithmic}
\end{algorithm}

\begin{table}[h]
\centering
\begin{tabular}{c c c c c c c c}
\toprule
 $\bf \delta$ &  $\bf \gamma$ & \makecell{ \textbf{Maj. vote} \\ \textbf{Acc. (\%)}} & \makecell{ \textbf{Target} \\ \textbf{Acc. (\%)}} & \makecell{ \textbf{CaMVo} \\ $\bf (k_{\min}=1)$ \\ \textbf{Acc. (\%)}} & \makecell{\textbf{CaMVo} \\  $\bf (k_{\min}=1)$ \\ \textbf{Cost}}  & \makecell{ \textbf{CaMVo} \\  $\bf (k_{\min}=3)$ \\ \textbf{Acc. (\%)}} & \makecell{\textbf{CaMVo} \\  $\bf (k_{\min}=3)$ \\ \textbf{Cost}}  
\\
\midrule
0.96 & 0.1 & 93.36 & 89.63 & 93.35 & 9.14 & 93.40 & 9.14 \\
0.96 & 0.2 & 92.84 & 89.13 & 92.83 & 9.14 & 92.80 & 9.14 \\
0.96 & 0.3 & 92.30 & 88.61 & 92.30 & 9.14 & 92.30 & 9.14 \\
0.96 & 0.4 & 91.32 & 87.67 & 91.32 & 9.14 & 91.30 & 9.14 \\
0.96 & 0.5 & 90.99 & 87.35 & 90.99 & 9.14 & 91.00 & 9.14 \\
0.96 & 0.6 & 90.63 & 87.00 & 90.63 & 9.14 & 90.60 & 9.14 \\
0.96 & 0.7 & 89.84 & 86.25 & 89.84 & 9.14 & 89.80 & 9.14 \\
0.96 & 0.8 & 89.16 & 85.59 & 89.16 & 9.14 & 89.20 & 9.14 \\
0.95 & 0.0 & 93.85 & 89.16 & 93.85 & 9.14 & 93.90 & 9.14 \\
0.95 & 0.1 & 93.36 & 88.69 & 93.35 & 9.14 & 93.40 & 9.14 \\
0.95 & 0.2 & 92.84 & 88.20 & 92.83 & 9.14 & 92.80 & 9.14 \\
0.95 & 0.3 & 92.30 & 87.69 & 92.30 & 9.14 & 92.30 & 9.14 \\
0.95 & 0.4 & 91.32 & 86.76 & 91.32 & 9.14 & 91.30 & 9.14 \\
0.95 & 0.5 & 90.99 & 86.44 & 90.99 & 9.14 & 91.00 & 9.14 \\
0.95 & 0.6 & 90.63 & 86.09 & 90.63 & 9.14 & 90.60 & 9.14 \\
0.95 & 0.7 & 89.84 & 85.35 & 89.84 & 9.14 & 89.80 & 9.14 \\
0.95 & 0.8 & 89.16 & 84.70 & 89.16 & 9.14 & 89.20 & 9.14 \\
0.93 & 0.0 & 93.85 & 87.28 & 93.87 & 7.59 & 93.90 & 7.59 \\
0.93 & 0.1 & 93.36 & 86.83 & 92.74 & 7.20 & 92.70 & 7.20 \\
0.93 & 0.2 & 92.84 & 86.34 & 90.76 & 5.68 & 90.80 & 5.68 \\
0.93 & 0.3 & 92.30 & 85.84 & 91.52 & 6.54 & 91.50 & 6.54 \\
0.93 & 0.4 & 91.32 & 84.93 & 90.31 & 6.03 & 90.30 & 6.03 \\
0.93 & 0.5 & 90.99 & 84.62 & 89.56 & 5.43 & 89.60 & 5.43 \\
0.93 & 0.6 & 90.63 & 84.28 & 89.35 & 5.11 & 89.40 & 5.11 \\
0.93 & 0.7 & 89.84 & 83.55 & 88.68 & 4.57 & 88.70 & 4.57 \\
0.93 & 0.8 & 89.16 & 82.92 & 88.23 & 4.35 & 88.20 & 4.35 \\
0.90 & 0.0 & 93.85 & 84.47 & 86.98 & 2.23 & 86.70 & 2.66 \\
0.90 & 0.1 & 93.36 & 84.03 & 86.79 & 2.23 & 86.20 & 2.57 \\
0.90 & 0.2 & 92.84 & 83.56 & 86.84 & 2.03 & 87.50 & 2.63 \\
0.90 & 0.3 & 92.30 & 83.07 & 85.61 & 3.53 & 87.80 & 3.78 \\
0.90 & 0.4 & 91.32 & 82.19 & 85.68 & 1.85 & 84.50 & 2.29 \\
0.90 & 0.5 & 90.99 & 81.89 & 85.50 & 1.69 & 84.10 & 2.15 \\
0.90 & 0.6 & 90.63 & 81.56 & 85.23 & 1.66 & 83.70 & 2.16 \\
0.90 & 0.7 & 89.84 & 80.86 & 84.69 & 1.49 & 83.10 & 2.02 \\
0.90 & 0.8 & 89.16 & 80.24 & 84.64 & 1.47 & 82.80 & 2.01 \\
0.80 & 0.0 & 93.85 & 75.08 & 84.56 & 1.13 & 80.10 & 1.55 \\
0.80 & 0.1 & 93.36 & 74.69 & 84.35 & 1.16 & 80.80 & 1.56 \\
0.80 & 0.2 & 92.84 & 74.27 & 84.58 & 1.12 & 80.00 & 1.52 \\
0.80 & 0.3 & 92.30 & 73.84 & 84.42 & 1.14 & 79.10 & 1.51 \\
0.80 & 0.4 & 91.32 & 73.06 & 84.54 & 1.14 & 79.10 & 1.52 \\
0.80 & 0.5 & 90.99 & 72.79 & 84.68 & 1.13 & 78.70 & 1.49 \\
0.80 & 0.6 & 90.63 & 72.50 & 84.74 & 1.15 & 78.60 & 1.50 \\
0.80 & 0.7 & 89.84 & 71.87 & 83.05 & 0.92 & 78.20 & 1.47 \\
0.80 & 0.8 & 89.16 & 71.33 & 80.90 & 0.64 & 78.30 & 1.48 \\
\bottomrule
\end{tabular}
\caption{Accuracy and cost of CaMVo with $k_{\min}=1$ and $k_{\min}=3$, alongside the accuracy of majority voting and the target accuracy, under varying confidence thresholds $\delta$ and correlation parameters $C_i$ in the synthetic correlated data. The synthetic data are generated with parameters $d = 5$, $\sigma = 0.5$, and $\alpha = 0.1$. Note that the cost of majority voting is fixed at \$9.14 per million tokens for all settings.
}
\label{tab:append_MMLU_sens1}
\end{table}

\begin{table}[h]
\centering
\begin{tabular}{c c c c c c c c}
\toprule
 $\bf \delta$ &  $\bf \gamma$ & \makecell{ \textbf{Maj. vote} \\ \textbf{Acc. (\%)}} & \makecell{ \textbf{Target} \\ \textbf{Acc. (\%)}} & \makecell{ \textbf{CaMVo} \\ $\bf (k_{\min}=1)$ \\ \textbf{Acc. (\%)}} & \makecell{\textbf{CaMVo} \\  $\bf (k_{\min}=1)$ \\ \textbf{Cost}}  & \makecell{ \textbf{CaMVo} \\  $\bf (k_{\min}=3)$ \\ \textbf{Acc. (\%)}} & \makecell{\textbf{CaMVo} \\  $\bf (k_{\min}=3)$ \\ \textbf{Cost}}  
\\
\midrule
0.96 & 0.0 & 93.95 & 90.19 & 93.95 & 9.14 & 93.95 & 9.14 \\
0.96 & 0.1 & 93.43 & 89.69 & 93.43 & 9.14 & 93.43 & 9.14 \\
0.96 & 0.2 & 92.86 & 89.15 & 92.86 & 9.14 & 92.86 & 9.14 \\
0.96 & 0.3 & 92.28 & 88.59 & 92.28 & 9.14 & 92.28 & 9.14 \\
0.96 & 0.4 & 91.57 & 87.91 & 91.57 & 9.14 & 91.57 & 9.14 \\
0.96 & 0.5 & 91.07 & 87.43 & 91.07 & 9.14 & 91.07 & 9.14 \\
0.96 & 0.6 & 90.65 & 87.02 & 90.65 & 9.14 & 90.65 & 9.14 \\
0.96 & 0.7 & 90.06 & 86.46 & 90.06 & 9.14 & 90.06 & 9.14 \\
0.96 & 0.8 & 89.41 & 85.83 & 89.41 & 9.14 & 89.41 & 9.14 \\
0.95 & 0.0 & 93.95 & 89.25 & 96.00 & 8.54 & 96.00 & 8.54 \\
0.95 & 0.1 & 93.43 & 88.75 & 95.84 & 8.51 & 95.84 & 8.51 \\
0.95 & 0.2 & 92.86 & 88.22 & 95.48 & 8.50 & 95.48 & 8.50 \\
0.95 & 0.3 & 92.28 & 87.66 & 94.95 & 8.54 & 94.95 & 8.54 \\
0.95 & 0.4 & 91.57 & 86.99 & 94.54 & 8.52 & 94.54 & 8.52 \\
0.95 & 0.5 & 91.07 & 86.51 & 94.25 & 8.50 & 94.25 & 8.50 \\
0.95 & 0.6 & 90.65 & 86.12 & 91.65 & 7.82 & 91.65 & 7.82 \\
0.95 & 0.7 & 90.06 & 85.56 & 89.90 & 6.97 & 89.90 & 6.97 \\
0.95 & 0.8 & 89.41 & 84.94 & 89.76 & 6.60 & 89.76 & 6.60 \\
0.90 & 0.0 & 93.95 & 84.55 & 86.55 & 1.71 & 85.40 & 2.24 \\
0.90 & 0.1 & 93.43 & 84.08 & 86.44 & 3.09 & 90.51 & 3.27 \\
0.90 & 0.2 & 92.86 & 83.58 & 86.32 & 1.68 & 84.94 & 2.14 \\
0.90 & 0.3 & 92.28 & 83.05 & 87.63 & 2.20 & 88.51 & 2.91 \\
0.90 & 0.4 & 91.57 & 82.42 & 86.06 & 1.64 & 84.53 & 2.12 \\
0.90 & 0.5 & 91.07 & 81.96 & 85.72 & 1.52 & 84.98 & 2.02 \\
0.90 & 0.6 & 90.65 & 81.58 & 84.09 & 1.98 & 84.09 & 1.98 \\
0.90 & 0.7 & 90.06 & 81.06 & 85.31 & 1.44 & 83.69 & 1.99 \\
0.90 & 0.8 & 89.41 & 80.47 & 85.55 & 1.36 & 82.90 & 1.95 \\
0.80 & 0.0 & 93.95 & 75.16 & 85.32 & 1.13 & 80.11 & 1.51 \\
0.80 & 0.1 & 93.43 & 74.74 & 85.33 & 1.15 & 80.10 & 1.53 \\
0.80 & 0.2 & 92.86 & 74.29 & 82.66 & 0.73 & 79.91 & 1.50 \\
0.80 & 0.3 & 92.28 & 73.82 & 85.24 & 1.14 & 78.78 & 1.49 \\
0.80 & 0.4 & 91.57 & 73.26 & 85.27 & 1.13 & 78.67 & 1.48 \\
0.80 & 0.5 & 91.07 & 72.85 & 85.30 & 1.13 & 78.76 & 1.47 \\
0.80 & 0.6 & 90.65 & 72.52 & 84.12 & 0.98 & 78.58 & 1.48 \\
0.80 & 0.7 & 90.06 & 72.05 & 81.77 & 0.63 & 78.40 & 1.48 \\
0.80 & 0.8 & 89.41 & 71.53 & 85.45 & 1.13 & 77.93 & 1.47 \\
\bottomrule
\end{tabular}
\caption{Accuracy and cost of CaMVo with $k_{\min}=1$ and $k_{\min}=3$, alongside the accuracy of majority voting and the target accuracy, under varying confidence thresholds $\delta$ and correlation parameters $C_i$ in the synthetic correlated data. The synthetic data are generated with parameters $d = 5$, $\sigma = 0.5$, and $\alpha = 0.2$. Note that the cost of majority voting is fixed at \$9.14 per million tokens for all settings.
}
\label{tab:append_MMLU_sens2}
\end{table}

\begin{table}[h]
\centering
\begin{tabular}{c c c c c c c c}
\toprule
 $\bf \delta$ &  $\bf \gamma$ & \makecell{ \textbf{Maj. vote} \\ \textbf{Acc. (\%)}} & \makecell{ \textbf{Target} \\ \textbf{Acc. (\%)}} & \makecell{ \textbf{CaMVo} \\ $\bf (k_{\min}=1)$ \\ \textbf{Acc. (\%)}} & \makecell{\textbf{CaMVo} \\  $\bf (k_{\min}=1)$ \\ \textbf{Cost}}  & \makecell{ \textbf{CaMVo} \\  $\bf (k_{\min}=3)$ \\ \textbf{Acc. (\%)}} & \makecell{\textbf{CaMVo} \\  $\bf (k_{\min}=3)$ \\ \textbf{Cost}}  
\\
\midrule
0.96 & 0.1 & 91.83 & 88.16 & 96.69 & 6.62 & 96.69 & 6.62 \\
0.96 & 0.2 & 91.54 & 87.88 & 96.77 & 6.54 & 96.77 & 6.54 \\
0.96 & 0.3 & 90.75 & 87.12 & 96.18 & 6.64 & 96.18 & 6.64 \\
0.96 & 0.4 & 90.28 & 86.67 & 95.58 & 6.46 & 95.58 & 6.46 \\
0.96 & 0.5 & 89.61 & 86.02 & 95.68 & 6.31 & 95.68 & 6.31 \\
0.96 & 0.6 & 89.27 & 85.70 & 95.28 & 6.24 & 95.28 & 6.24 \\
0.96 & 0.7 & 88.79 & 85.24 & 89.15 & 4.99 & 89.15 & 4.99 \\
0.96 & 0.8 & 88.32 & 84.79 & 88.07 & 4.45 & 88.07 & 4.45 \\
0.95 & 0 & 92.45 & 87.83 & 97.50 & 5.69 & 97.50 & 5.69 \\
0.95 & 0.1 & 91.83 & 87.24 & 96.55 & 5.52 & 96.55 & 5.52 \\
0.95 & 0.2 & 91.54 & 86.96 & 97.07 & 5.51 & 97.07 & 5.51 \\
0.95 & 0.3 & 90.75 & 86.21 & 96.47 & 5.56 & 96.47 & 5.56 \\
0.95 & 0.4 & 90.28 & 85.77 & 95.85 & 5.41 & 95.85 & 5.41 \\
0.95 & 0.5 & 89.61 & 85.13 & 95.33 & 5.14 & 95.33 & 5.14 \\
0.95 & 0.6 & 89.27 & 84.81 & 91.03 & 4.74 & 91.03 & 4.74 \\
0.95 & 0.7 & 88.79 & 84.35 & 88.68 & 3.46 & 88.68 & 3.46 \\
0.95 & 0.8 & 88.32 & 83.90 & 94.52 & 5.00 & 94.52 & 5.00 \\
0.93 & 0 & 92.45 & 85.98 & 92.72 & 3.56 & 92.72 & 3.56 \\
0.93 & 0.1 & 91.83 & 85.40 & 91.07 & 3.48 & 91.07 & 3.48 \\
0.93 & 0.2 & 91.54 & 85.13 & 90.44 & 3.23 & 90.44 & 3.23 \\
0.93 & 0.3 & 90.75 & 84.39 & 90.23 & 3.34 & 90.23 & 3.34 \\
0.93 & 0.4 & 90.28 & 83.96 & 90.12 & 3.21 & 90.12 & 3.21 \\
0.93 & 0.5 & 89.61 & 83.33 & 89.63 & 3.26 & 89.63 & 3.26 \\
0.93 & 0.6 & 89.27 & 83.02 & 86.25 & 1.95 & 86.25 & 1.95 \\
0.93 & 0.7 & 88.79 & 82.57 & 86.35 & 1.88 & 86.35 & 1.88 \\
0.93 & 0.8 & 88.32 & 82.14 & 86.45 & 1.66 & 86.45 & 1.66 \\
0.9 & 0 & 92.45 & 83.20 & 86.80 & 1.42 & 86.80 & 1.42 \\
0.9 & 0.1 & 91.83 & 82.65 & 85.45 & 2.03 & 85.45 & 2.03 \\
0.9 & 0.2 & 91.54 & 82.38 & 86.63 & 1.52 & 86.63 & 1.52 \\
0.9 & 0.3 & 90.75 & 81.67 & 86.78 & 1.57 & 86.78 & 1.57 \\
0.9 & 0.4 & 90.28 & 81.26 & 83.97 & 1.90 & 83.97 & 1.90 \\
0.9 & 0.5 & 89.61 & 80.65 & 83.75 & 1.44 & 83.75 & 1.44 \\
0.9 & 0.6 & 89.27 & 80.35 & 82.89 & 1.08 & 82.89 & 1.08 \\
0.9 & 0.7 & 88.79 & 79.91 & 86.36 & 1.28 & 86.36 & 1.28 \\
0.9 & 0.8 & 88.32 & 79.49 & 82.69 & 0.92 & 82.69 & 0.92 \\
% 0.85 & 0 & 92.45 & 78.58 & 82.89 & 0.71 & 82.89 & 0.71 \\
% 0.85 & 0.1 & 91.83 & 78.06 & 86.17 & 1.16 & 86.17 & 1.16 \\
% 0.85 & 0.2 & 91.54 & 77.81 & 85.97 & 1.17 & 85.97 & 1.17 \\
% 0.85 & 0.3 & 90.75 & 77.14 & 86.13 & 1.24 & 86.13 & 1.24 \\
% 0.85 & 0.4 & 90.28 & 76.74 & 86.15 & 1.16 & 86.15 & 1.16 \\
% 0.85 & 0.5 & 89.61 & 76.17 & 83.06 & 0.68 & 83.06 & 0.68 \\
% 0.85 & 0.6 & 89.27 & 75.88 & 82.83 & 0.70 & 82.83 & 0.70 \\
% 0.85 & 0.7 & 88.79 & 75.47 & 86.12 & 1.13 & 86.12 & 1.13 \\
% 0.85 & 0.8 & 88.32 & 75.07 & 82.75 & 0.67 & 82.75 & 0.67 \\
0.8 & 0 & 92.45 & 73.96 & 82.53 & 0.61 & 82.53 & 0.61 \\
0.8 & 0.1 & 91.83 & 73.46 & 86.15 & 1.13 & 86.15 & 1.13 \\
0.8 & 0.2 & 91.54 & 73.23 & 85.71 & 1.10 & 85.71 & 1.10 \\
0.8 & 0.3 & 90.75 & 72.60 & 85.88 & 1.14 & 85.88 & 1.14 \\
0.8 & 0.4 & 90.28 & 72.23 & 82.57 & 0.64 & 82.57 & 0.64 \\
0.8 & 0.5 & 89.61 & 71.69 & 86.01 & 1.13 & 86.01 & 1.13 \\
0.8 & 0.6 & 89.27 & 71.42 & 82.70 & 0.63 & 82.70 & 0.63 \\
0.8 & 0.7 & 88.79 & 71.03 & 82.85 & 0.64 & 82.85 & 0.64 \\
0.8 & 0.8 & 88.32 & 70.65 & 82.71 & 0.62 & 82.71 & 0.62 \\
\bottomrule
\end{tabular}
\caption{Accuracy and cost of CaMVo with $k_{\min}=1$ and $k_{\min}=3$, alongside the accuracy of majority voting and the target accuracy, under varying confidence thresholds $\delta$ and correlation parameters $C_i$ in the synthetic correlated data. The synthetic data are generated with parameters $d = 5$, $\sigma = 0.5$, and $\alpha = 0.4$. Note that the cost of majority voting is fixed at \$9.14 per million tokens for all settings.
}
\label{tab:append_MMLU_sens3}
\end{table}

\begin{table}[h]
\centering
\begin{tabular}{c c c c c c c c}
\toprule
 $\bf \alpha$ &  $\bf \delta$ & \makecell{ \textbf{Maj. vote} \\ \textbf{Acc. (\%)}} & \makecell{ \textbf{Target} \\ \textbf{Acc. (\%)}} & \makecell{ \textbf{CaMVo} \\ $\bf (k_{\min}=1)$ \\ \textbf{Acc. (\%)}} & \makecell{\textbf{CaMVo} \\  $\bf (k_{\min}=1)$ \\ \textbf{Cost}}  & \makecell{ \textbf{CaMVo} \\  $\bf (k_{\min}=3)$ \\ \textbf{Acc. (\%)}} & \makecell{\textbf{CaMVo} \\  $\bf (k_{\min}=3)$ \\ \textbf{Cost}}  
\\
\midrule
0.1 & 0.96 & 92.21 & 88.53 & 92.21 & 9.14 & 92.21 & 9.14 \\
0.1 & 0.95 & 92.21 & 87.60 & 92.21 & 9.14 & 92.21 & 9.14 \\
0.1 & 0.93 & 92.21 & 85.76 & 92.41 & 6.97 & 92.41 & 6.97 \\
0.1 & 0.9 & 92.21 & 82.99 & 85.82 & 1.81 & 85.82 & 1.81 \\
0.1 & 0.85 & 92.21 & 78.38 & 84.61 & 1.21 & 84.61 & 1.21 \\
0.1 & 0.8 & 92.21 & 73.77 & 80.73 & 0.66 & 80.73 & 0.66 \\
0.2 & 0.96 & 92.36 & 88.67 & 92.36 & 9.14 & 92.36 & 9.14 \\
0.2 & 0.95 & 92.36 & 87.75 & 94.94 & 8.53 & 94.94 & 8.53 \\
0.2 & 0.93 & 92.36 & 85.90 & 91.12 & 5.10 & 91.12 & 5.10 \\
0.2 & 0.9 & 92.36 & 83.13 & 85.69 & 1.45 & 85.69 & 1.45 \\
0.2 & 0.85 & 92.36 & 78.51 & 85.35 & 1.18 & 85.35 & 1.18 \\
0.2 & 0.8 & 92.36 & 73.89 & 84.93 & 1.10 & 84.93 & 1.10 \\
0.4 & 0.96 & 91.23 & 87.58 & 93.59 & 6.70 & 93.59 & 6.70 \\
0.4 & 0.95 & 91.23 & 86.67 & 96.61 & 5.49 & 96.61 & 5.49 \\
0.4 & 0.93 & 91.23 & 84.85 & 90.10 & 3.03 & 90.10 & 3.03 \\
0.4 & 0.9 & 91.23 & 82.11 & 84.71 & 1.91 & 84.71 & 1.91 \\
0.4 & 0.85 & 91.23 & 77.55 & 82.96 & 0.69 & 82.96 & 0.69 \\
0.4 & 0.8 & 91.23 & 72.99 & 82.90 & 0.63 & 82.90 & 0.63 \\
\bottomrule
\end{tabular}
\caption{Accuracy and cost of CaMVo with $k_{\min}=1$ and $k_{\min}=3$, alongside the accuracy of majority voting and the target accuracy, $\delta$ and $\alpha$ values for synthetic data generated using the correlation matrix in Eq.~\eqref{eq:corr_matrix_example}. The synthetic data are generated with parameters $d = 5$, and  $\sigma = 0.5$. Note that the cost of majority voting is fixed at \$9.14 per million tokens for all settings.
}
\label{tab:append_MMLU_sens4}
\end{table}

\subsection{Experiment Results}

 In the first experimental setup the diagonal entries of $C$ representing self-correlation are set to 1, i.e. $C_{ii}=1, \forall i \in [K]$ and all off-diagonal entries are set to a parameter $C_{ij} = \gamma, \forall i\neq j$ that represents the pairwise correlation. We sweep $\gamma$ across a range of values to evaluate how varying correlation strengths impact performance. The results are summarized in Table~\ref{tab:append_MMLU_sens1}, which reports the accuracy and cost of CaMVo under $k_{\min}=1$ and $k_{\min}=3$, alongside the accuracy of majority voting and the target accuracy, across different confidence thresholds $\delta$ and correlation parameters $C_i$. The synthetic data are generated with parameters $d=5$, $\sigma=0.5$, and $\alpha=0.1$, while the costs and mean accuracies of the LLMs are taken from the MMLU experiments, given by $\boldsymbol{c} = [0.05, 1.1, 0.59, 2.5, 3, 1.1, 0.8]$ and $\boldsymbol{\mu} = [0.6801, 0.8482, 0.8170, 0.8358, 0.8565, 0.8592, 0.6409]$. Note that the cost of majority voting is fixed at \$9.14 per million tokens across all settings. Tables~\ref{tab:append_MMLU_sens2} and~\ref{tab:append_MMLU_sens3} present the results for $\alpha=0.2$ and $\alpha=0.4$, respectively.
The results indicate that the accuracy of CaMVo generally decreases as $C_i$ increases; a similar trend is observed for majority voting. Importantly, CaMVo is able to maintain the target accuracy across all tested correlation levels, demonstrating its effectiveness even in settings with correlated LLM outputs.

In the second experimental setup, we use the correlation matrix
\begin{equation}
C = 
\begin{bmatrix}
1.000 & 0.445 & 0.147 & 0.423 & 0.211 & 0.306 & 0.320 \\
0.445 & 1.000 & 0.373 & 0.448 & 0.242 & 0.420 & 0.317 \\
0.147 & 0.373 & 1.000 & 0.265 & 0.172 & 0.118 & 0.142 \\
0.423 & 0.448 & 0.265 & 1.000 & 0.301 & 0.436 & 0.325 \\
0.211 & 0.242 & 0.172 & 0.301 & 1.000 & 0.158 & 0.180 \\
0.306 & 0.420 & 0.118 & 0.436 & 0.158 & 1.000 & 0.388 \\
0.320 & 0.317 & 0.142 & 0.325 & 0.180 & 0.388 & 1.000
\end{bmatrix}
\label{eq:corr_matrix_example}
\end{equation}
so that, unlike the first experiment, the correlations between different LLMs are heterogeneous. We sweep the confidence threshold $\delta$ and the parameter $\alpha$ to evaluate how the strength of correlation between LLM outputs and the context affects accuracy. The results are summarized in Table~\ref{tab:append_MMLU_sens4}, which reports the accuracy and cost of CaMVo for $k_{\min}=1$ and $k_{\min}=3$, alongside the accuracy of majority voting and the target accuracy, across varying $\delta$ and $\alpha$. Across all experiments, $d=5$ and $\sigma=0.5$, while the costs and mean accuracies of the LLMs are taken from the MMLU dataset as in the previous experiment. The cost of majority voting remains fixed at \$9.14 per million tokens.
As expected, the accuracy of CaMVo increases with larger $\alpha$, as higher correlation with the context allows LLM outputs to better reflect the arm rewards. In contrast, the accuracy of majority voting decreases with increasing $\alpha$, since it increases correlations among LLMs through the context. Crucially, CaMVo consistently maintains the target accuracy across all instances, demonstrating its robustness even under correlated LLM outputs.

\end{document}